\documentclass{article}
\usepackage{lmodern}
\usepackage{amsmath}
\usepackage{amsthm}
\usepackage{graphicx}
\usepackage{algorithm}
\usepackage{algorithmic}
\usepackage{wrapfig}
\newtheorem{remark}{Remark}
\newtheorem{theorem}{Theorem}
\newtheorem{corollary}{Corollary}

\newtheorem{lemma}{Lemma}
\usepackage[preprint]{neurips_2026}

\usepackage[utf8]{inputenc} % allow utf-8 input
\usepackage[T1]{fontenc}    % use 8-bit T1 fonts
\usepackage{url}            % simple URL typesetting
\usepackage{booktabs}       % professional-quality tables
\usepackage{amsfonts}       % blackboard math symbols
\usepackage{nicefrac}       % compact symbols for 1/2, etc.
\usepackage{microtype}      % microtypography
\usepackage{xcolor}         % colors

\usepackage{xcolor}
\definecolor{mydarkred}{rgb}{0.6,0,0}
\definecolor{mydarkgreen}{rgb}{0,0.6,0}
\usepackage[colorlinks,
linkcolor=mydarkred,
citecolor=mydarkgreen]{hyperref}

\title{A Regret Perspective on Online Multiple Testing}

\author{%
  Qingyang Hao \quad Kongchang Zhou \quad Fang Kong \quad Hongxin Wei \\ 
  Southern University of Science and Technology 
}

\begin{document}

\maketitle

\begin{abstract}
% Online Multiple Testing (OMT) is a fundamental pillar of sequential statistical inference, traditionally evaluates the False Discovery Rate (FDR) and statistical power in isolation, obscuring the highly asymmetric costs of false positives and false negatives in modern automated pipelines. 
Online Multiple Testing (OMT), a fundamental pillar of sequential statistical inference, traditionally evaluates the False Discovery Rate (FDR) and statistical power in isolation, obscuring the highly asymmetric costs of false positives and false negatives in modern automated pipelines. 
To unify this evaluation, we introduce \textit{Weighted Regret}. Under this metric, we prove the \textit{Duality of Regret Conservation}: purely deterministic procedures ensuring strict FDR control inevitably incur an $\Omega(T)$ linear regret penalty, as threshold depletion during signal-sparse cold starts forces massive false negatives. 
Tailored for exogenous testing streams, we propose Decoupled-OMT (DOMT) as a baseline-agnostic meta-wrapper. 
% By incorporating a history-decoupled, strictly non-negative random perturbation, DOMT rescues 
% % both classical $p$-value and modern $e$-value 
% baselines from zero-threshold deadlocks. 
By incorporating a history-decoupled, strictly non-negative random perturbation, DOMT rescues purely deterministic baselines from severe threshold depletion. 
% Crucially, it preserves their exact asymptotic safety in stationary environments and rigorously bounds finite-sample error inflation during extreme cold-starts. It guarantees zero additional false negatives and yields an order-optimal $\Omega(\sqrt{T})$ regret reduction in adversarial bursty environments. We further derive a closed-form "Cold-Start Tax" characterizing the phase transition of algorithmic superiority. 
Crucially, it preserves exact asymptotic safety in stationary environments and rigorously bounds finite-sample error inflation during cold-starts. Guaranteeing zero additional false negatives, it yields an order-optimal $\Omega(\sqrt{T})$ regret reduction in bursty environments, with a derived ``Cold-Start Tax'' characterizing the exact phase transition of algorithmic superiority.
Experiments validate that DOMT consistently curtails empirical weighted regret, achieving an order-optimal sublinear mitigation of threshold depletion to navigate the non-stationary Pareto frontier.

\end{abstract}

\vspace{-1mm}

\section{Introduction}

% Multiple hypothesis testing is a core problem in statistical inference and arises in almost every scientific field.
% Instead of testing $N$ hypotheses at once, Online Multiple Testing (OMT) focuses on the online setting, where a stream of hypotheses arrives online.
% At each step, the analyst must decide whether to reject the current null hypothesis solely based on the previous decisions and evidence against the current hypothesis.
% However, the prevailing OHT paradigm defaults to controlling the FDR, i.e., the expected proportion of false discoveries.
% Consequently, these algorithms tend to be overly conservative.
% False Nondiscovery Rate

\vspace{-1mm}

Multiple hypothesis testing \cite{shaffer1995multiple, efron2012large} is a core problem in statistical inference and arises in almost every scientific field. Instead of testing a fixed batch of hypotheses at once, Online Multiple Testing (OMT) \cite{foster2008alpha, javanmard2018online} focuses on the sequential setting, where a continuous stream of hypotheses arrives exogenously over time. 
At each step, the algorithm must make an immediate and irrevocable decision to either reject or retain the current null hypothesis, relying solely on previous decisions and current evidence. 
Fundamentally, this process involves an inherent trade-off between Type I errors (false discoveries) and Type II errors (missed discoveries).
Existing OMT works \cite{foster2008alpha, Yang2017AFF, ramdas2018saffron, tian2019addis, lin2024online, Chen2019ContextualOF, Xu2023OnlineMT, Kuang2026SCOREAU} formally cast this as the challenge of maintaining the False Discovery Rate (FDR) below a predefined target level $\alpha$. 
To guarantee this error control over an infinite horizon, algorithms typically manage a finite testing capacity (often termed $\alpha$-wealth) while striving to achieve high statistical power \cite{benjamini1995controlling, javanmard2018online}. 
Given the high stakes of modern automated pipelines, designing robust strategies to effectively navigate this trade-off remains a critical open challenge.

% \vspace{-1mm}

However, previous OMT works primarily focus on strict FDR control without explicit constraints on statistical power. 
Such a paradigm often forces algorithms to be overly conservative, diminishing their practical value in real-world applications. 
For example, periods with sparse true signals may deplete the testing capacity ($\alpha$-wealth) \cite{javanmard2018online}, resulting in monotonic decay of history-dependent thresholds toward zero \cite{NIPS2017_7f018eb7}.
% This indicates the flaw of the objective, as well as the evaluation metrics.
% consequence
% this motivates us to design a new OMT objective that captures the asymmetric trade-off between ....
This vulnerability indicates a flaw in the traditional optimization objective, as well as its underlying evaluation metrics. 
Consequently, by evaluating FDR and power in isolation, conventional paradigms struggle to recover from severe threshold depletion and obscure the highly asymmetric consequences of false positives and false negatives in high-stakes AI domains \cite{johari2022experimental, Angelopoulos2022ConformalRC, DBLP:conf/iclr/PodkopaevR22}. 
This critical gap motivates us to design a novel OMT objective that explicitly captures and dynamically balances the asymmetric trade-off between Type I and Type II errors over time.

To fulfill this objective, we evaluate decisions against an omniscient Oracle. 
Building upon classical regret \cite{Savage1951TheTO, Lai1985AsymptoticallyEA}, we introduce \textit{Weighted Regret} to directly minimize this asymmetric trade-off, effectively resolving the fundamental incompatibility of traditional metrics. 
Instead of struggling to directly compare a conditional rate (FDR) and a marginal probability (power), weighted regret maps both Type I and Type II errors into a single, cohesive currency of cumulative loss. 
Under this unified framework, we establish the \textit{Duality of Regret Conservation}, mathematically proving that purely deterministic algorithms, whether maintaining constant power or enforcing strict FDR control, inevitably incur an $\Omega(T)$ linear regret penalty. 
This fundamental limit dictates that purely history-dependent frameworks are mathematically trapped in a regime of linear loss, necessitating a departure from passively bounding errors to actively countering deterministic threshold depletion.

Driven by this philosophy, we propose the \textit{Decoupled-OMT} (DOMT) algorithm, which mitigates classical limitations by incorporating a history-decoupled, strictly non-negative random perturbation into sequential decisions. 
This unidirectional stochastic exploration ensures the rejection set of DOMT always subsumes that of the deterministic baseline. 
By guaranteeing zero additional false negatives on any sample path, DOMT translates this exploration into a provable enhancement in detection power. 
Crucially, it retains exact asymptotic FDR control in stationary environments, while bounding finite-sample error inflation during extreme adversarial cold-starts. 
Under bursty signal scenarios, we prove that DOMT achieves an order-optimal $\Omega(\sqrt{T})$ reduction in \textit{Weighted Regret}. 
Finally, we derive a closed-form phase transition threshold, conceptualized as the ``Cold-Start Tax,'' dictating exactly when this exploration becomes theoretically advantageous, achieving an order-optimal sublinear mitigation of threshold depletion to navigate the non-stationary Pareto frontier.

In summary, our main \textbf{contributions} are:
\textbf{(i) Metric}: Introducing \textit{Weighted Regret} and proving the \textit{Duality of Regret Conservation}, establishing the inevitable $\Omega(T)$ linear penalty of purely deterministic algorithms (Sections \ref{sec:impossibility}). 
\textbf{(ii) Algorithm}: Proposing DOMT via strictly non-negative perturbations to mitigate threshold depletion. It provably enhances detection power while retaining stationary FDR control and bounding cold-start error inflation (Section \ref{sec:algorithm}).
\textbf{(iii) Analysis}: Proving an order-optimal $\Omega(\sqrt{T})$ regret reduction and deriving a closed-form ``Cold-Start Tax'' that characterizes the phase transition of algorithmic advantage in bursty environments (Section \ref{sec:regret_reduction}). 
\textbf{(iv) Validation}: Empirically corroborating the theoretical threshold dynamics and Pareto frontier traversal, demonstrating robust performance beyond strict analytical assumptions (Section \ref{sec:experiments}).
A roadmap of our supplementary proofs, extended discussions, and additional experiments is provided in \hyperref[apporg]{Appendix}.

\section{Related work}
\label{rlw}

\vspace{-1mm}
To navigate the sequential decision-making process in OMT, extensive literature has focused on designing rigorous error-control mechanisms. Specifically,
to guarantee $\text{FDR}_T \le \alpha$, classical procedures rely on summable threshold sequences ($\sum \gamma_t \le 1$) \cite{foster2008alpha, javanmard2018online}, comprehensively reviewed in \cite{Robertson2022OnlineMH}. These were later enhanced by adaptive discarding \cite{ramdas2018saffron, tian2019addis} and 
highly flexible $e$-value martingales \cite{Kuang2026SCOREAU, Zhang2025eGAIEG}. Despite elegant overshoot-refund mechanisms and recent stochastic extensions sharing a similar exploration spirit (e.g., U$e$-LOND \cite{xu2024more, fischer2024online, Xu2023OnlineMT}), directly coupling exploration noise into their wealth states inevitably leaves them vulnerable to irreversible threshold depletion 
($\lambda_t \to 0$) during macroscopic pure-null droughts or asynchronous bursty shifts \cite{zrnic2021asynchronous}. Concurrently, to reconcile asymmetric costs, recent literature integrates FDR with regret minimization. However, these frameworks typically necessitate \textit{active} sequential interventions (e.g., MAB/selective generation \cite{Yang2017AFF, Chiong2023TheMD, leebandit}) or rely on Local FDR posterior oracles for resource allocation \cite{lin2024online}. 
Distinct from these active paradigms, our work rigorously focuses on assumption-free passive Global FDR testing streams.
By unifying evaluation under \textit{Weighted Regret}, DOMT acts as a cross-paradigm meta-wrapper. It uniquely employs causal-decoupled stochastic exploration to rescue both classical $p$-value and modern $e$-value baselines from zero-threshold deadlocks without compromising their native super-martingale safety. 
(More discussion on related work is deferred to Appendix \ref{F-rlw}).

\section{Problem setup}
\label{sec:setup}

\vspace{-1mm}
In this section, we formalize the protocol of OMT, outline the specific mathematical assumptions required for our subsequent theoretical analysis, and review the traditional evaluation metrics.

% \vspace{-1mm}
\textbf{The sequential decision protocol.} Consider a sequential process over $T$ rounds. At round $t$, the environment generates a true state $Y_t \in \{0, 1\}$ with a global null proportion $\pi \in (0,1)$, and reveals a $p$-value $p_t \in [0,1]$. Specifically, $p_t \sim U[0,1]$ if $Y_t = 0$ (null), and $p_t \sim G$ if $Y_t = 1$ (alternative). Given the historical filtration $\mathcal{F}_{t-1} = \sigma((p_s, \lambda_s, \delta_s) : s \le t-1)$, the algorithm dynamically selects a threshold $\lambda_t \in [0,1]$ to yield a binary rejection decision $\delta_t = \mathbf{1}\{p_t \le \lambda_t\}$.

\begin{remark}
Note that the specific temporal structure of the state sequence $\{Y_t\}_{t=1}^T$ (e.g., stationary processes for general lower bounds, or piece-wise deterministic sequences for bursty adversarial scenarios) is not globally fixed here. It will be strictly formalized in the respective theoretical analyses in subsequent sections.
\end{remark}

\vspace{-2mm}
\textbf{Assumptions on the alternative distribution.} Let $G: [0,1] \rightarrow [0,1]$ denote the cumulative distribution function (CDF) of the alternative $p$-values, which inherently dictates the detection power since $\mathbb{P}(p_t \le \lambda_t \mid Y_t=1) = G(\lambda_t)$. We assume $G$ is continuous, strictly increasing, and satisfies $G(0)=0$ and $G(1)=1$. To ensure rigorous regret quantification, we impose two key conditions. First, $G$ satisfies a global Lipschitz condition: there exists $L \ge 1$ such that $|G(x) - G(y)| \le L|x-y|$ for all $x, y \in [0,1]$, which uniformly bounds the alternative $p$-value density. Second, to guarantee discovery recovery, the alternative signal must maintain a minimum statistical detectability: there 
exists $\mu > 0$ and a neighborhood $x_0 > 0$ such that $G(x) \ge \mu x$ for all $x \in [0, x_0]$.

\begin{remark}[Weak signals and conservative bounds]
The Lipschitz condition, implying $G(x) \le Lx$, characterizes the most challenging ``weak signal'' environments in statistical testing. Consequently, the $\Omega(\sqrt{T})$ regret reduction derived subsequently serves as a conservative, worst-case lower bound. In stronger signal regimes (e.g., where the $p$-value density diverges near zero), DOMT's discovery recovery accelerates, substantially exceeding the $\Omega(\sqrt{T})$ magnitude. As demonstrated in Section \ref{sec:experiments}, DOMT maintains robust performance even when these strict analytical constraints are relaxed.
\end{remark}

\vspace{-1mm}

\textbf{Evaluation metrics.} Traditionally, OMT procedure is evaluated by balancing two metrics: $\text{FDR}_T = \mathbb{E}[V_T / \max(1, R_T)]$ for error control, and $\text{Power}_T = \mathbb{E}[S_T / \max(1, \sum Y_t)]$ for signal detection.

\section{The inevitability of linear regret}
\label{sec:impossibility}

% \vspace{-2mm}
In this section, we evaluate the performance of deterministic OMT paradigms within the framework of the proposed \textit{Weighted Regret}. 
We prove that the inherent constraint of either maintaining a minimum detection power or strictly bounding the FDR inextricably leads to linear \textit{Weighted Regret}.

\subsection{The Weighted Regret metric}

% \textbf{The Weighted Regret metric.} 
To unify the evaluation of FDR and detection Power, we quantify the cumulative asymmetric costs of decisions. Let $V_T = \sum_{t=1}^T \mathbf{1}\{Y_t=0, \delta_t=1\}$ and $M_T = \sum_{t=1}^T \mathbf{1}\{Y_t=1, \delta_t=0\}$ denote the total false positives (type I errors) and false negatives (type II errors), respectively. Given domain-specific penalty weights $a, b > 0$, the \textit{Weighted Regret} is formulated as:
\begin{equation}
    \text{Regret}_T(a,b) = a \cdot V_T + b \cdot M_T
\end{equation}
Since an omniscient Oracle incurs zero errors, the algorithm's expected excess risk mathematically equals this total cumulative cost. Consequently, the objective shifts to minimizing the expected regret $\mathbb{E}[\text{Regret}_T(a,b)]$ in non-stationary environments, while simultaneously satisfying a foundational statistical constraint (e.g., asymptotic FDR control or a minimum power).

\begin{remark}[Degenerate regimes and convex parameterization]
We strictly assume $a, b > 0$, as setting $a=0$ or $b=0$ trivially reduces the optimal strategy to rejecting ($\lambda_t \equiv 1$) or accepting ($\lambda_t \equiv 0$) all hypotheses, respectively. While we retain the unnormalized penalties $(a,b)$ for algebraic clarity, the objective equivalently admits a convex parameterization $w \cdot V_T + (1-w) \cdot M_T$, where $w = a/(a+b) \in (0,1)$. This maps the asymmetric ratio to $(1-w)/w$, elegantly decoupling subjective risk aversion from the overall scale of environmental costs.
\end{remark}

% \vspace{-2mm}
\subsection{The power barrier: the cost of constant vigilance}

To avoid total blindness, suppose an algorithm maintains a minimum detection power: $\mathbb{P}(\delta_t = 1 \mid Y_t = 1, \mathcal{F}_{t-1}) = G(\lambda_t) \ge \beta > 0$ a.s. for all $t$. This necessitates a strictly positive threshold lower bound: $\lambda_t \ge \lambda_\beta := G^{-1}(\beta) > 0$. Theorem \ref{thm:power_lower_bound} proves that such a constraint inevitably triggers an $\Omega(T)$ linear weighted regret, holding both in expectation and almost surely (a.s.). Crucially, this persistent false-positive penalty manifests even in a stationary environment where $Y_t \sim \text{i.i.d. Bernoulli}(1-\pi)$.
The proofs are deferred to Appendix \ref{A-1}.

\begin{theorem}[Weighted linear regret lower bound]
\label{thm:power_lower_bound}
Assume a stationary environment where true states $Y_t \sim \text{i.i.d. Bernoulli}(1-\pi)$ for a constant $\pi \in (0,1)$. If an algorithm maintains a minimum detection power $\beta > 0$ (i.e., $\lambda_t \ge \lambda_\beta > 0$ a.s. $\forall t$), the Weighted Regret is inherently linear. Specifically, for any $T \ge 1$, the expected regret is $\Omega(T)$, and as $T \rightarrow \infty$, this linear penalty holds almost surely (a.s.):
\begin{equation}
    \mathbb{E}[\text{Regret}_T(a,b)] \ge a \pi \lambda_\beta T = \Omega(T), \quad \text{and} \quad \liminf_{T \rightarrow \infty} \frac{1}{T} \text{Regret}_T(a,b) \ge a \pi \lambda_\beta \quad \text{a.s.}
\end{equation}
\end{theorem}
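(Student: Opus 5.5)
We drop the false-negative term and lower-bound the false positives alone. Since $b M_T \ge 0$, we have $\text{Regret}_T(a,b) \ge a V_T$, so it suffices to show $\mathbb{E}[V_T] \ge \pi \lambda_\beta T$ and $\liminf_T V_T/T \ge \pi\lambda_\beta$ a.s. Write $V_T = \sum_{t=1}^T X_t$ with $X_t = \mathbf{1}\{Y_t=0,\, p_t \le \lambda_t\}$.

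\textbf{Expectation bound.} The key modelling point is that $\lambda_t$ is $\mathcal{F}_{t-1}$-measurable, while $(Y_t, p_t)$ is fresh. In the stationary model, $Y_t$ is independent of $\mathcal{F}_{t-1}$, and conditional on $Y_t=0$ we have $p_t \sim U[0,1]$ independent of $\mathcal{F}_{t-1}$. Hence
\[
\mathbb{E}[X_t \mid \mathcal{F}_{t-1}] = \pi\, \mathbb{P}(p_t \le \lambda_t \mid Y_t=0, \mathcal{F}_{t-1}) = \pi \lambda_t \ge \pi \lambda_\beta \quad \text{a.s.}
\]
Taking expectations and summing over $t$ gives $\mathbb{E}[V_T] \ge \pi\lambda_\beta T$ for every $T \ge 1$. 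Multiplying by $a$ yields the first claim.

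\textbf{Almost-sure bound.} Define the martingale differences $D_t = X_t - \mathbb{E}[X_t\mid\mathcal{F}_{t-1}]$, and let $\mathcal{G}_t = \sigma(\mathcal{F}_{t-1}, Y_t, p_t)$ be a filtration to which $X_t$ is adapted. We must check that $\mathcal{F}_{t}$ (which includes $\delta_t$ and $\lambda_t$) is contained in this filtration, so that the conditional expectation computed above is also the compensator with respect to it. The increments satisfy $|D_t| \le 1$. The strong law for bounded martingale differences then gives $\frac{1}{T}\sum_{t\le T} D_t \to 0$ a.s.; one can use Azuma--Hoeffding plus Borel--Cantelli, or the convergence of $\sum_t D_t/t$ via an $L^2$-bounded martingale together with Kronecker's lemma. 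Therefore
\[
\liminf_T \frac{V_T}{T} \;\ge\; \liminf_T \frac{1}{T}\sum_{t\le T}\pi\lambda_t \;+\; \lim_T \frac{1}{T}\sum_{t\le T} D_t \;\ge\; \pi\lambda_\beta \quad \text{a.s.},
\]
and multiplying by $a$ completes the proof.

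\textbf{Main obstacle.} The only delicate point is justifying the conditional computation $\mathbb{P}(Y_t=0,\, p_t\le\lambda_t \mid \mathcal{F}_{t-1}) = \pi\lambda_t$. This requires that $(Y_t,p_t)$ be independent of the past, including any internal randomization the algorithm used to pick $\lambda_t$. I would state this explicitly as part of the exogenous-stream protocol: nulls are i.i.d. uniform and independent of history. Everything else is routine, via the tower property and a standard martingale SLLN.
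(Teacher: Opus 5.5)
Your proof is correct and is essentially the paper's argument: drop $bM_T$, use that, given the past, a false positive at round $t$ has probability $\pi\lambda_t\ge\pi\lambda_\beta$ and sum for the expectation bound, then apply a bounded-martingale strong law (Azuma--Hoeffding plus Borel--Cantelli) for the almost-sure claim. One tidy-up: $\sigma(\mathcal{F}_{t-1},Y_t,p_t)$ is not nested because it omits $Y_1,\dots,Y_{t-1}$. Instead take the paper's $\mathcal{G}_t=\mathcal{F}_t\vee\sigma(Y_1,\dots,Y_t)$ and obtain $\mathbb{E}[X_t\mid\mathcal{G}_{t-1}]=\pi\lambda_t$ directly from the independence of $(Y_t,p_t)$ from $\mathcal{G}_{t-1}$, which you already flag; containment of $\mathcal{F}_t$ alone does not carry the $\mathcal{F}_{t-1}$-compensator over to the larger filtration.
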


\begin{remark}[The tax of constant vigilance]
The lower bound $a \cdot \pi \lambda_\beta$ reveals a fundamental ``linear tax'' inherent to constant power. To maintain sensitivity $\beta$, the algorithm must keep its testing ``net'' permanently open at a size $\lambda_\beta$. Because null $p$-values are uniform, a fraction $\lambda_\beta$ of the nulls ($\pi$) will inevitably trigger false alarms by random chance. This irreducible accumulation of type I errors forces the weighted regret to grow linearly, strictly dominated by the false-positive penalty $a$.
\end{remark}

\subsection{The FDR barrier: the trap of threshold depletion}

To secure strict FDR control, classical deterministic algorithms enforce a monotonic decay on their worst-case threshold sequences, satisfying $\sum_{t=1}^{\infty} \tilde{\lambda}_t \le C < 1$. While this conservative mechanism tightly bounds the expected false-positive regret to $O(1)$, Theorem \ref{thm:fdr_barrier} proves that it inevitably traps the algorithm in a state of global blindness upon encountering non-stationary, bursty signals.

\begin{theorem}[The inevitable false negative penalty of threshold decay]
\label{thm:fdr_barrier}
Consider an algorithm whose worst-case threshold sequence satisfies $\sum_{t=1}^{\infty} \tilde{\lambda}_t \le C < 1$. Assume $G(x) \le Lx$ for some constant $L \ge 1$. For any even integer $T \ge 2$, let the environment generate a two-phase deterministic sequence: $Y_t = 0$ for $1 \le t \le T/2$, and $Y_t = 1$ for $T/2 < t \le T$. 
For sufficiently large $T$, the expected Weighted Regret satisfies:
\begin{equation}
\small
    \mathbb{E}[\text{Regret}_T(a,b)] \ge b \cdot \frac{T}{2} \left[ (1-C) \prod_{t=T/2+1}^{\infty} (1 - L\tilde{\lambda}_t)_+ \right] = \Omega(T).
\end{equation}
\end{theorem}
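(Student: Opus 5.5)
We bound the regret from below by its false-negative part. On the alternative phase $t\in(T/2,T]$ every $Y_t=1$, so $\mathbb{E}[\text{Regret}_T(a,b)] \ge b\,\mathbb{E}[M_T] \ge b\sum_{t=T/2+1}^{T}\mathbb{P}(\delta_t=0)$. Since the product is monotone in the horizon, it then suffices to show that for each $t$ in the second phase
\begin{equation*}
\mathbb{P}(\delta_t = 0) \;\ge\; (1-C)\prod_{s=T/2+1}^{\infty}(1-L\tilde{\lambda}_s)_+ .
\end{equation*}
We read the worst-case sequence $\tilde\lambda_t$ as a deterministic pointwise upper bound, $\lambda_t\le\tilde\lambda_t$ a.s.; this is the standard LORD/LOND-type bound once wealth earned from rejections is accounted for. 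If the algorithm can only increase its thresholds after a rejection, we instead define $\tilde\lambda$ along the no-rejection path.

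The argument intersects two events. First, let $E_0$ be the event of no rejection during the null phase. For $t\le T/2$ the nulls give $p_t\sim U[0,1]$ independent of $\mathcal{F}_{t-1}$, so $\mathbb{P}(\delta_t=1\mid\mathcal{F}_{t-1})=\lambda_t\le\tilde\lambda_t$. A union bound then gives $\mathbb{P}(E_0)\ge 1-\sum_{t\le T/2}\tilde\lambda_t\ge 1-C$. Second, on the alternative phase, conditional on no rejection so far, $\mathbb{P}(\delta_s=0\mid\mathcal{F}_{s-1})=1-G(\lambda_s)\ge 1-L\tilde\lambda_s$, using $G(x)\le Lx$ and monotonicity of $G$; we take the positive part since probabilities are nonnegative. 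Chaining these conditional probabilities through the tower property gives
\begin{equation*}
\mathbb{P}(\text{no rejection in } 1,\dots,t) \;\ge\; (1-C)\prod_{s=T/2+1}^{t}(1-L\tilde\lambda_s)_+ ,
\end{equation*}
which is at least the infinite product. The event ``no rejection through $t$'' is contained in $\{\delta_t=0\}$, so summing over the $T/2$ rounds yields the displayed bound.

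Finally, we show the bracket is bounded away from zero so that the bound is $\Omega(T)$. Since $\sum\tilde\lambda_t<\infty$, we have $\tilde\lambda_t\to0$, so for large $T$ every factor satisfies $L\tilde\lambda_s\le 1/2$. For such factors $\log(1-x)\ge -2x$, hence the tail product is at least $\exp(-2L\sum_{s>T/2}\tilde\lambda_s)\to1$. This is where ``sufficiently large $T$'' enters: with a fixed sequence, small $T$ could include a factor with $L\tilde\lambda_s\ge1$ and make the product zero.

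The main obstacle is justifying the pointwise domination $\lambda_t\le\tilde\lambda_t$, since adaptive procedures raise thresholds after discoveries. We restrict to the no-rejection path, where the threshold is deterministic and equals the worst-case sequence. The chaining argument only ever evaluates thresholds on that path, so this restriction costs nothing.
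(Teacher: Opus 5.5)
Your proof is correct and follows the paper's argument. You lower-bound the regret by $b\,\mathbb{E}[M_T]$, use a union bound to get probability at least $1-C$ of no rejection in the null phase, and then chain the per-round no-rejection probabilities $1-G(\tilde\lambda_s)\ge(1-L\tilde\lambda_s)_+$ along the zero-rejection path through the alternative phase. The differences are only cosmetic: you bound each $\mathbb{P}(\delta_t=0)$ separately by the probability of no rejection through round $t$, whereas the paper uses the single event of missing all $T/2$ signals; and you make the ``sufficiently large $T$'' step quantitative via $\log(1-x)\ge -2x$, which shows the tail product tends to $1$.
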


The complete proof of Theorem \ref{thm:fdr_barrier} is deferred to Appendix \ref{A-2}. 
To concretely instantiate Theorem \ref{thm:fdr_barrier}, we evaluate three canonical algorithms: LOND ($\tilde{\lambda}_t \le \alpha \gamma_t$), LORD ($\tilde{\lambda}_t \le W_0 \gamma_t$), and SAFFRON ($\tilde{\lambda}_t \le (1-\lambda) W_0 \gamma_t$) \cite{javanmard2018online, ramdas2018saffron}. Under the worst-case null phase, their deterministic threshold decay ($\sum \gamma_t \le 1$) inherently bounds the expected false positives to $\mathcal{O}(1)$. However, the Lipschitz condition forces their detection power to simultaneously vanish, 
yielding a linear false-negative penalty of $\Omega(T)$ for all three methods (specifically scaling as $T/2 - \mathcal{O}(1)$ under standard weak signals). 
Consequently, their overall \textit{Weighted Regret} is dominated by these missed discoveries, scaling as $b \cdot \mathbb{E}[M_T] = \Omega(T)$. 
The complete derivations are deferred to Appendix \ref{A-3}.

\subsection{The Duality of Regret Conservation}

Theorems \ref{thm:power_lower_bound} and \ref{thm:fdr_barrier} establish a fundamental \textit{Duality of Regret Conservation}. Within classical deterministic OMT frameworks, avoiding one error barrier inevitably triggers the other, dictating a universal linear lower bound: $\mathbb{E}[\text{Regret}_T(a,b)] = \Omega(T)$. Specifically, the algorithm incurs either an $\Omega(aT)$ false-positive penalty by maintaining minimum power, or an $\Omega(bT)$ false-negative penalty via threshold depletion. As formalized in Corollary \ref{cor:impossibility}, this dichotomy precludes any purely history-dependent mechanism from simultaneously achieving sublinear regrets for both errors.

\begin{corollary}[Impossibility of simultaneous sublinear regrets]
\label{cor:impossibility}
For any deterministic, purely history-dependent online testing algorithm, there exists an environment sequence where it cannot simultaneously achieve $\mathbb{E}[V_T] = o(T)$ and $\mathbb{E}[M_T] = o(T)$. Consequently, its worst-case expected weighted regret is unconditionally linear, $\Omega(T)$ (proof in Appendix \ref{A-4}).
\end{corollary}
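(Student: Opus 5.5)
The plan is a case split on the asymptotic behavior of the threshold sequence that the deterministic, history-dependent algorithm would produce, with each case reduced to Theorem \ref{thm:power_lower_bound} or Theorem \ref{thm:fdr_barrier}. Because the algorithm is purely history-dependent, $\lambda_t$ is a deterministic function of past $(p_s,\lambda_s,\delta_s)$. I would therefore define the worst-case threshold envelope $\tilde{\lambda}_t := \sup \lambda_t$, taking the supremum over all histories. The dichotomy is then \textbf{(A)} $\sum_t \tilde{\lambda}_t \le C < 1$ (the FDR-style summable regime) versus \textbf{(B)} its negation.

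In case (A), Theorem \ref{thm:fdr_barrier} applies directly. The two-phase sequence ($Y_t=0$ for $t\le T/2$, $Y_t=1$ afterwards) gives $\mathbb{E}[M_T] \ge \tfrac{T}{2}(1-C)\prod_{t>T/2}(1-L\tilde{\lambda}_t)_+$. Summability of $\tilde{\lambda}_t$ makes the tail sum $\sum_{t>T/2}\tilde{\lambda}_t \to 0$. Hence for large $T$ every factor satisfies $L\tilde\lambda_t<1$, and the infinite product tends to $1$, so it is bounded below by a positive constant. This yields $\mathbb{E}[M_T] = \Omega(T)$, so $M_T$ is not $o(T)$, and $\mathbb{E}[\text{Regret}_T]\ge b\,\mathbb{E}[M_T]=\Omega(T)$.

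In case (B), I would use the stationary environment of Theorem \ref{thm:power_lower_bound}. The aim is to show that a non-summable envelope forces $\mathbb{E}[V_T]$ to be linear. The cleanest route is to strengthen the negation of (A) into the power-maintenance hypothesis $\lambda_t \ge \lambda_\beta>0$. Under that hypothesis, the theorem gives $\mathbb{E}[V_T] \ge \pi\lambda_\beta T$, so $V_T$ is not $o(T)$. More generally, conditioning on $\mathcal{F}_{t-1}$ gives $\mathbb{E}[V_T]=\pi\sum_t \mathbb{E}[\lambda_t]$. Linearity then follows whenever the Cesàro average of the realized thresholds stays bounded away from zero. Combining the two cases, in either one some environment makes $\mathbb{E}[V_T]$ or $\mathbb{E}[M_T]$ linear. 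The regret bound follows since $a,b>0$.

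The main obstacle is the gap between the two cases. A thresholds sequence can be non-summable yet still vanishing, for example $\lambda_t\asymp 1/t$. Such a sequence falls under neither Theorem \ref{thm:power_lower_bound} nor Theorem \ref{thm:fdr_barrier}, and yields $\mathbb{E}[V_T]=O(\log T)$. I would handle it by redefining the dichotomy around $\liminf_T \frac1T\sum_{t\le T}\lambda_t$ rather than around summability:
\begin{itemize}
\item If this average is positive along a subsequence, case (B)'s conditioning identity gives linear $V_T$ along that subsequence.
\item If it is zero, I would rerun the proof of Theorem \ref{thm:fdr_barrier} with the product bound replaced by a union bound. Since $G(x)\le Lx$, we have $\mathbb{E}[T/2-M_T]\le L\sum_{t>T/2}\mathbb{E}[\lambda_t] = o(T)$ along the same subsequence, so $M_T \ge T/2 - o(T)$ and is linear.
\end{itemize}
The one delicate point is ensuring the threshold trajectory in the burst phase is controlled by the envelope independently of the realized null-phase history. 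Taking the supremum over histories in $\tilde{\lambda}_t$ is exactly what handles this.
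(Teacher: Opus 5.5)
\textbf{There is a genuine gap in your final dichotomy.} Because $\lambda_t$ depends on the history, the average $\frac1T\sum_{t\le T}\mathbb{E}[\lambda_t]$ is a property of the algorithm \emph{in a given environment}, and your two bullets use it in different environments. The first bullet needs it under the stationary environment, where $\mathbb{E}[V_T]=\pi\sum_t\mathbb{E}[\lambda_t]$. The second needs it under the two-phase environment of Theorem~\ref{thm:fdr_barrier}, which moreover changes with $T$. Nothing transfers smallness from one environment to the other.

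Here is a concrete counterexample. Take $G(x)=2x-x^2$. Let $\lambda_t=1$ if the fraction of $p_s\le 1/2$ among the last $\lceil\sqrt{t}\,\rceil$ rounds exceeds the fraction among all past rounds by at least $1/16$, and $\lambda_t=0$ otherwise.
Under the stationary mixture, Hoeffding shows this rule fires only $O(1)$ times in expectation. So the average threshold vanishes and you are in your second bullet.
In the two-phase environment, it fires $O(1)$ times in expectation during the null phase. It then fires at essentially every burst round after the first $O(\sqrt{T})$, giving $\mathbb{E}[V_T]=O(1)$ and $\mathbb{E}[M_T]=O(\sqrt{T})$ there.
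Hence the step $L\sum_{t>T/2}\mathbb{E}[\lambda_t]=o(T)$ in your second bullet is false for this algorithm, and the two-phase sequence is simply not a witness for it.

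Passing to the sup envelope $\tilde\lambda_t$ does not help. It makes the second bullet valid in every environment, but it breaks the first, since $\tilde\lambda_t$ only upper-bounds $\lambda_t$. In the example, $\tilde\lambda_t\equiv 1$ while $\mathbb{E}[V_T]=O(1)$ in the stationary environment.

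\textbf{The missing idea is to keep both bounds in the same stationary environment.} There, conditioning on $\mathcal{F}_{t-1}$ and using $G(x)\le Lx$ gives $\mathbb{E}[S_T]\le(1-\pi)L\sum_{t\le T}\mathbb{E}[\lambda_t]$, alongside $\mathbb{E}[V_T]=\pi\sum_{t\le T}\mathbb{E}[\lambda_t]$. A single quantity now controls both errors, so $\mathbb{E}[V_T]\ge\frac{\pi}{(1-\pi)L}\,\mathbb{E}[S_T]$. If $\mathbb{E}[M_T]=o(T)$, then $\mathbb{E}[S_T]=(1-\pi)T-o(T)$, which forces $\mathbb{E}[V_T]\ge\frac{\pi}{L}T-o(T)$.

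This is the paper's proof in Appendix~\ref{A-4}. With it, your Ces\`aro dichotomy becomes exhaustive: just run your second bullet in the stationary environment. The envelope, the case split, and Theorems~\ref{thm:power_lower_bound}--\ref{thm:fdr_barrier} then become unnecessary. Even determinism is not needed, since the argument only uses that $\lambda_t$ is fixed before $p_t$ is revealed.
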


\begin{remark}[The necessity of stochastic exploration]
This strict impossibility confirms that linear regret stems not from suboptimal parameter tuning, but from the fundamental structural limits of deterministic mechanisms. Mitigating the severity of this conservation law necessitates a paradigm shift toward history-decoupled stochastic exploration, setting the exact stage for our DOMT framework.
\end{remark}

\section{The Decoupled-OMT algorithm and safety guarantees}
\label{sec:algorithm}

% \vspace{-2mm}
To circumvent the structural limitations of deterministic mechanisms identified in Section \ref{sec:impossibility}, we propose Decoupled-OMT (DOMT) as a versatile framework equipping classical online procedures with stochastic exploration. We first formalize the DOMT architecture and establish its rigorous safety, proving exact asymptotic FDR inheritance and a finite-sample bound for graceful degradation. Finally, our general regret decomposition proves the worst-case exploration overhead is strictly bounded by $\mathcal{O}(\sqrt{T})$, fundamentally preserving the underlying optimal trajectory.

\subsection{The Decoupled-OMT mechanism}

The fundamental innovation of DOMT is a targeted threshold randomization strategy designed as a versatile, plug-and-play module. Rather than being limited to a specific procedure, DOMT is widely applicable to a broad family of classical online algorithms. To prevent procedures from being trapped in states of severe threshold depletion, DOMT introduces a decaying, history-decoupled random perturbation to actively explore the $p$-value space. Specifically, at each round $t$, the decision threshold $\lambda_t$ is constructed as:
\begin{equation}
    \lambda_t = \min(1, \lambda_t^{\text{base}} + \xi_t).
\end{equation}
Here, $\lambda_t^{\text{base}}$ serves as the \textit{virtual baseline threshold} dictated by the chosen underlying procedure (for instance, if LOND \cite{javanmard2018online} is employed, $\lambda_t^{\text{base}} = \alpha \cdot \gamma_t \cdot \max(1, R_{t-1}^{\text{base}})$, where $R_{t-1}^{\text{base}}$ is the virtual rejection count). The perturbation term is defined as $\xi_t = \epsilon_t Z_t$, where $Z_t \sim \text{Uniform}[0, 1]$ is i.i.d., 
and $\epsilon_t = \kappa \cdot \frac{\alpha}{\sqrt{t}}$
represents the exploration amplitude with user-defined coefficient $\kappa \ge 0$.

Crucially, DOMT guarantees an unconditional improvement in detection power without compromising asymptotic FDR control via a targeted decoupling mechanism. Because the perturbations are strictly non-negative, the actual rejection set deterministically subsumes the baseline's, ensuring zero additional false negatives ($M_T^{\text{DOMT}} \le M_T^{\text{base}}$) on any sample path. To prevent this active exploration from polluting future baseline calculations, the internal state of the chosen procedure is updated solely using the virtual thresholds $\lambda_t^{\text{base}}$, while actual decisions rely on the perturbed $\lambda_t$. This effectively isolates the noise, preserving the underlying trajectory required for theoretical safety. 
The complete procedure is formalized in Algorithm \ref{alg:dbh}.

\begin{algorithm}[H]
\caption{The Decoupled-OMT (DOMT) Algorithm}
\label{alg:dbh}
\begin{algorithmic}[1]
% \small
\REQUIRE Target FDR level $\alpha \in (0,1)$, a monotone base procedure, exploration coefficient $\kappa \ge 0$.
\ENSURE Rejection set $\mathcal{R}_T$.
\STATE Initialize: $\mathcal{R}_0 \leftarrow \emptyset$, and initialize the internal state of the base procedure.
\FOR{$t = 1, 2, \dots, T$}
    \STATE \textbf{1. Baseline Calculation:} Compute $\lambda_t^{\text{base}}$ according to the rules of the chosen base procedure.
    \STATE \textbf{2. Stochastic Exploration:} Sample $Z_t$ and compute $\epsilon_t \leftarrow \kappa \cdot \frac{\alpha}{\sqrt{t}}$. Set $\xi_t \leftarrow \epsilon_t \cdot Z_t$.
    \STATE \textbf{3. Threshold Clipping:} Set $\lambda_t \leftarrow \min(1, \lambda_t^{\text{base}} + \xi_t)$.
    \STATE \textbf{4. Virtual Update:} Update base state via virtual decision $\delta_t^{\text{base}} = \mathbf{1}\{p_t \le \lambda_t^{\text{base}}\}$.
    \STATE \textbf{5. Actual Decision:} \textbf{if} $p_t \le \lambda_t$ \textbf{then} $\mathcal{R}_t \leftarrow \mathcal{R}_{t-1} \cup \{t\}$ \textbf{else} $\mathcal{R}_t \leftarrow \mathcal{R}_{t-1}$.
\ENDFOR
\RETURN $\mathcal{R}_T$.
\end{algorithmic}
\end{algorithm}

\begin{remark}[Exploration coefficient and weight-adaptivity]
\label{rem:weight_adaptive}
DOMT accommodates any $\kappa > 0$, preserving exact FDR safety and $\mathcal{O}(\sqrt{T})$ regret bounds up to a constant factor (Theorems \ref{thm:fdr_control}, \ref{thm:finite_fdp}, \ref{thm:regret_gap}), while enabling flexible traversal of the $(V_T, M_T)$ Pareto frontier. If the penalty ratio $M = b/a$ is known, adopting an increasing mapping $\kappa(M)$ heuristically mitigates false negatives when $b \gg a$. 
% However, deriving the optimal $\kappa^*(M)$ remains an open problem due to its dependence on the unknown signal distribution.
However, deriving the strictly optimal mapping $\kappa^*(M)$ remains a pivotal open problem, precisely because it intrinsically depends on the unknown underlying signal distribution.
\end{remark}

\begin{remark}[Immunity to spurious cascades]
A natural theoretical concern is whether the stochastic perturbation $\xi_t$ could trigger a spurious ``rich-get-richer'' cycle, where noise-induced false alarms uncontrollably inflate future thresholds. DOMT intrinsically eradicates this vulnerability through its \textit{causal decoupling}. As defined in Algorithm \ref{alg:dbh}, the virtual rejection count $R_t^{\text{base}}$ is updated exclusively via the unperturbed baseline $\lambda_t^{\text{base}}$. Consequently, the exploration noise functions as a strictly ``forward-only'' mechanism: any additional discoveries induced by $\xi_t$ are mathematically quarantined within the actual rejection set $\mathcal{R}_t$ and never leak back to corrupt future baselines, inherently preserving the super-martingale safety of modern $e$-value procedures (Appendix \ref{D-2}).
\end{remark}

\subsection{Asymptotic FDR control of DOMT}

We first establish that the strictly non-negative stochastic perturbations, scaled by any user-defined exploration coefficient $\kappa > 0$, do not compromise asymptotic safety. Theorem \ref{thm:fdr_control} formally guarantees that the DOMT framework strictly inherits the FDR control of its chosen base procedure. For readability, standard regularity conditions (including the Lipschitz continuity of null $p$-values and non-vanishing discovery rates) and the complete derivation are deferred to Appendix \ref{B}.

\begin{theorem}[Asymptotic FDR control of DOMT]
\label{thm:fdr_control}
Subject to the regularity conditions specified in Appendix \ref{B-1}, if the chosen base procedure satisfies $\limsup_{t\rightarrow\infty} \text{FDR}_t^{\text{base}} \le \alpha$, then the DOMT framework (Algorithm \ref{alg:dbh}) maintains its safety profile: $\limsup_{t\rightarrow\infty} \text{FDR}_t^{\text{DOMT}} \le \alpha$.
\end{theorem}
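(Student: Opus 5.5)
The plan is to compare the DOMT false discovery proportion directly with that of the virtual baseline trajectory. Causal decoupling makes this trajectory the same process the base procedure would run alone. Write $V_t^{\text{D}} = V_t^{\text{base}} + \Delta V_t$ and $R_t^{\text{D}} = R_t^{\text{base}} + \Delta R_t$. Here $\Delta V_t$ and $\Delta R_t$ count the extra false and total rejections produced when $\lambda_s^{\text{base}} < p_s \le \lambda_s$. Because the internal state is updated only through $\delta_s^{\text{base}}$, the pair $(V_t^{\text{base}}, R_t^{\text{base}})$ has exactly the law of the stand-alone base procedure. The hypothesis $\limsup_t \text{FDR}_t^{\text{base}} \le \alpha$ therefore applies to it verbatim. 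Everything then reduces to showing that the perturbation contributes a vanishing amount to the ratio $V_t/\max(1,R_t)$.

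First I will bound the extra rejections. Conditional on $\mathcal{F}_{s-1}$ and on $\lambda_s^{\text{base}}$, the extra rejection window at round $s$ has width at most $\xi_s \le \epsilon_s = \kappa\alpha/\sqrt{s}$. Under the Lipschitz condition on the null (and alternative) $p$-value distributions from Appendix \ref{B-1}, the conditional probability of an extra rejection is at most $L'\epsilon_s$, with $L' = \max(1,L)$. Summing gives
\begin{equation*}
\mathbb{E}[\Delta R_t] \le \sum_{s=1}^t L'\kappa\alpha s^{-1/2} \le 2L'\kappa\alpha\sqrt{t}.
\end{equation*}
Since $\Delta R_t$ is a sum of conditionally bounded increments, a martingale Freedman/Azuma argument (or Borel--Cantelli along a geometric subsequence) also gives $\Delta R_t = O(\sqrt{t}\log t)$ almost surely. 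The same bound holds for $\Delta V_t \le \Delta R_t$.

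Next I will use the regularity condition of non-vanishing discovery rates: $\liminf_t R_t^{\text{base}}/t \ge \rho > 0$ almost surely, or at least with probability tending to one. Elementary algebra then gives
\begin{equation*}
\frac{V_t^{\text{D}}}{\max(1,R_t^{\text{D}})} \le \frac{V_t^{\text{base}}}{\max(1,R_t^{\text{base}})} + \frac{\Delta V_t}{\max(1,R_t^{\text{base}})},
\end{equation*}
using that the denominator only grows. The second term is $O(\sqrt{t}\log t / (\rho t)) \to 0$ almost surely. Because the ratio is bounded by 1, dominated convergence turns this into convergence of expectations. On the event where $R_t^{\text{base}}$ is small, the bound by 1 times that event's vanishing probability handles it. Taking $\limsup$ yields $\limsup_t \text{FDR}_t^{\text{DOMT}} \le \limsup_t \text{FDR}_t^{\text{base}} + 0 \le \alpha$.

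The main obstacle is the denominator step. The non-vanishing discovery rate must be available in a form strong enough, either almost sure or in probability with uniform integrability, to kill the $\Delta V_t / R_t^{\text{base}}$ term. The exact form of the assumption in Appendix \ref{B-1} matters here. If it is only given in expectation, I would fall back on a truncation: split on the event $\{R_t^{\text{base}} \ge \rho t/2\}$ and control its complement by a concentration bound on $R_t^{\text{base}}$. A secondary subtlety is that $\lambda_s^{\text{base}}$ depends on past $p$-values. The Lipschitz bound must therefore be applied conditionally on $\mathcal{F}_{s-1}$, using independence of $p_s$ and $Z_s$ from the past.
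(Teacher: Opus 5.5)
Your proposal is correct and follows essentially the paper's proof: bound $\mathbb{E}[\Delta R_t]=O(\sqrt{t})$ using the density bound $\max(1,L)$ on the extra-rejection window of width $\xi_s$, invoke Condition 3 to make the denominator grow linearly, and pass to expectations by bounded convergence since the FDP lies in $[0,1]$. The paper states Condition 3 almost surely ($\liminf_T R_T^{\text{base}}/T \ge \eta$), so your fallback is unnecessary; and your one-sided bound $\text{FDP}_t^{\text{DOMT}} \le \text{FDP}_t^{\text{base}} + \Delta V_t/\max(1,R_t^{\text{base}})$ with almost-sure martingale control of $\Delta R_t$ is a slightly cleaner variant of the paper's two-sided bound $2R_T^{\text{extra}}/R_T^{\text{base}}$ via Markov and Slutsky, which also avoids the paper's informal handling of division by zero.
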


\begin{remark}[Algorithmic trade-off in adversarial regimes]
\label{rem:phase_transition}
While the perturbation inevitably incurs a sublinear false positive tax, macroscopic discoveries rapidly dilute it under regular conditions, preserving asymptotic FDR. Conversely, in extreme adversarial cold-starts (Section \ref{sec:experiments}), the structural absence of true signals dictates that any exploratory rejection will transiently inflate the empirical FDP to $100\%$. Rather than a structural flaw, this represents a deliberate phase transition in risk management. When absolute threshold depletion drives classical algorithms into mathematical blindness, DOMT accepts this localized error penalty to secure an order-optimal $\Omega(\sqrt{T})$ signal mass along the non-stationary Pareto frontier. To rigorously bound the absolute magnitude of this error inflation during such worst-case extremes, we establish Theorem \ref{thm:finite_fdp}.
\end{remark}

\begin{theorem}[Finite-sample FDP inflation bound]
\label{thm:finite_fdp}
To quantify the phase transition, we establish a finite-sample, high-probability bound on the empirical False Discovery Proportion (FDP). Let $V_t^{\text{extra}} = V_t^{\text{DOMT}} - V_t^{\text{base}}$ be the additional false positives induced by exploration. Under the amplitude $\epsilon_s = \kappa \alpha / \sqrt{s}$, an application of the Azuma-Hoeffding inequality ensures that for any $\delta \in (0,1)$, with probability at least $1-\delta$:
\begin{equation}
    \text{FDP}_t^{\text{DOMT}} \le \text{FDP}_t^{\text{base}} + \frac{\kappa \alpha \sqrt{t} + \sqrt{\frac{t}{2} \ln(1/\delta)}}{\max(1, R_t^{\text{DOMT}})}.
\end{equation}
\end{theorem}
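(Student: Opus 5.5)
The plan is to split the actual false discoveries into baseline and exploration parts, then control the exploration part with a martingale concentration bound. Since $\xi_t\ge 0$ and the baseline state is driven only by $\lambda_t^{\text{base}}$, every baseline rejection is also a DOMT rejection. So on every sample path $R_t^{\text{DOMT}} \ge R_t^{\text{base}}$ and $V_t^{\text{DOMT}} = V_t^{\text{base}} + V_t^{\text{extra}}$. This gives
\begin{equation*}
\text{FDP}_t^{\text{DOMT}} = \frac{V_t^{\text{base}}}{\max(1,R_t^{\text{DOMT}})} + \frac{V_t^{\text{extra}}}{\max(1,R_t^{\text{DOMT}})} \le \text{FDP}_t^{\text{base}} + \frac{V_t^{\text{extra}}}{\max(1,R_t^{\text{DOMT}})}.
\end{equation*}
The inequality holds because enlarging the denominator from $\max(1,R_t^{\text{base}})$ to $\max(1,R_t^{\text{DOMT}})$ can only shrink the first ratio. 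What remains is a high-probability bound on the numerator $V_t^{\text{extra}}$ of order $\kappa\alpha\sqrt{t}+\sqrt{(t/2)\ln(1/\delta)}$.

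Next I write $V_t^{\text{extra}}=\sum_{s\le t} X_s$, where
\begin{equation*}
X_s=\mathbf{1}\{Y_s=0,\ \lambda_s^{\text{base}}<p_s\le \lambda_s\}\in\{0,1\}.
\end{equation*}
Let $\mathcal{G}_{s-1}$ be the filtration generated by the past $(p_r,Z_r,Y_r)_{r<s}$ together with $Y_s$. Then $\lambda_s^{\text{base}}$ is $\mathcal{G}_{s-1}$-measurable, and $Z_s$ is independent of it. On $\{Y_s=0\}$ the null $p_s$ is uniform and independent of the past, so
\begin{equation*}
\mathbb{E}[X_s\mid\mathcal{G}_{s-1}] \le \mathbb{E}\big[\min(1,\lambda_s^{\text{base}}+\epsilon_sZ_s)-\lambda_s^{\text{base}}\big] \le \epsilon_s/2 \le \epsilon_s.
\end{equation*}
The compensator is then bounded by
\begin{equation*}
\sum_{s\le t}\epsilon_s=\kappa\alpha\sum_{s\le t}s^{-1/2}\le 2\kappa\alpha\sqrt{t}.
\end{equation*}
Using the sharper per-step bound $\epsilon_s/2$ brings this down to exactly $\kappa\alpha\sqrt{t}$, which matches the constant in the statement.

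Then I apply Azuma--Hoeffding to the martingale $D_t=\sum_{s\le t}(X_s-\mathbb{E}[X_s\mid\mathcal{G}_{s-1}])$. Its increments lie in an interval of length $1$, so the one-sided bound gives $\mathbb{P}(D_t\ge u)\le \exp(-2u^2/t)$. Setting $u=\sqrt{(t/2)\ln(1/\delta)}$, with probability at least $1-\delta$,
\begin{equation*}
V_t^{\text{extra}} \le \kappa\alpha\sqrt{t}+\sqrt{(t/2)\ln(1/\delta)}.
\end{equation*}
Dividing by $\max(1,R_t^{\text{DOMT}})$ and substituting into the decomposition above gives the stated inequality.

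The step I expect to need the most care is the conditional-mean computation. It needs a filtration in which $\lambda_s^{\text{base}}$ is predictable, while $p_s$ for a null is uniform and independent of both $Z_s$ and the past. This is where causal decoupling is essential. Because the base state depends only on past virtual decisions and not on $Z_{<s}$, the perturbation cannot feed back into $\lambda_s^{\text{base}}$; that is what keeps the increments' conditional means bounded by $\epsilon_s/2$. I will state this independence (exogenous $Y$, independent null $p$-values) explicitly as the assumption being used. I will also handle the clipping at $1$, which only decreases $X_s$.
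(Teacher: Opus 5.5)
Your proposal is correct and follows the paper's proof step for step: the same decomposition via $R_t^{\text{DOMT}}\ge R_t^{\text{base}}$, the same extra-null indicator with conditional mean at most $\mathbb{E}[\xi_s]=\kappa\alpha/(2\sqrt{s})$, one-sided Azuma--Hoeffding with unit-range increments, and $\sum_{s\le t}s^{-1/2}\le 2\sqrt{t}$ to bound the compensator by $\kappa\alpha\sqrt{t}$. Putting $Y_s$ into $\mathcal{G}_{s-1}$ and treating the clipping at $1$ explicitly are slightly more careful than the paper's write-up, but the argument is the same.
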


\begin{remark}[Statistical interpretation of the bound]
\label{rem:statistical_interpretation}
The bound rigorously characterizes the graceful degradation of DOMT. In stationary regimes where discoveries accumulate linearly ($R_t = \Theta(t)$), the FDP inflation penalty vanishes at an $\mathcal{O}(t^{-1/2})$ rate, recovering exact asymptotic safety. Conversely, in extreme adversarial pure-null settings, the structural absence of true signals dictates that the transient empirical FDP will inevitably hit $100\%$. However, Theorem \ref{thm:finite_fdp} reveals the true safety mechanism of the exploration: the \textit{absolute number} of additional false positives ($V_t^{\text{extra}}$) is strictly bounded by $\mathcal{O}(\sqrt{t})$. By restricting the localized cold-start tax to a sublinear growth rather than a catastrophic linear ($\Omega(t)$) accumulation, DOMT structurally prevents systemic failure, ensuring that the exploration remains unconditionally safe. 
Proofs are deferred to Appendix \ref{B-3}.
\end{remark}

\subsection{Regret decomposition of DOMT}

Beyond asymptotic safety, we next quantify the precise exploration overhead. Theorem \ref{thm:regret_gap} and Corollary \ref{cor:regret_asymptotic} prove that the weighted regret gap between DOMT and any chosen base procedure is strictly bounded by $\mathcal{O}(\sqrt{T})$. Consequently, our perturbation fundamentally preserves the optimal regret trajectory of the underlying algorithm.

\begin{theorem}[Weighted Regret gap]
\label{thm:regret_gap}
Let $\text{Regret}_T^{\text{base}}(a,b)$ and $\text{Regret}_T^{\text{DOMT}}(a,b)$ denote the expected weighted regrets of any chosen base procedure and its DOMT counterpart, respectively. Subject to the regularity conditions in Appendix \ref{B-1}, there exists a constant $K = a \kappa \alpha > 0$ (elegantly independent of the underlying signal distribution and the false negative penalty $b$) such that:
\begin{equation}
    \mathbb{E}[\text{Regret}_T^{\text{DOMT}}(a,b)] - \mathbb{E}[\text{Regret}_T^{\text{base}}(a,b)] \le K\sqrt{T}.
\end{equation}
\end{theorem}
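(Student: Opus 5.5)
Because DOMT updates the base procedure's internal state only through the virtual decisions $\delta_t^{\text{base}} = \mathbf{1}\{p_t \le \lambda_t^{\text{base}}\}$, the plan is to compare the two procedures on a common sample path. We couple DOMT and its base procedure on the same probability space, sharing the $p$-values $(p_t)$ and states $(Y_t)$. The key observation is that the sequence $\lambda_t^{\text{base}}$ produced inside DOMT is then exactly the sequence the stand-alone base procedure would produce. Indeed, $\lambda_t^{\text{base}}$ is a measurable function of $(p_s)_{s<t}$ alone and never depends on $(Z_s)$. Therefore $\lambda_t = \min(1,\lambda_t^{\text{base}}+\xi_t) \ge \lambda_t^{\text{base}}$ pathwise, and so $\delta_t \ge \delta_t^{\text{base}}$ for every $t$.

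We then decompose the regret difference pathwise. For false negatives, $\mathbf{1}\{Y_t=1,\delta_t=0\} \le \mathbf{1}\{Y_t=1,\delta_t^{\text{base}}=0\}$, so $M_T^{\text{DOMT}} \le M_T^{\text{base}}$. Consequently the $b$-term contributes a non-positive amount, which explains why $K$ does not depend on $b$. For false positives,
\begin{equation}
V_T^{\text{DOMT}} - V_T^{\text{base}} = \sum_{t=1}^T \mathbf{1}\{Y_t=0\}\,\mathbf{1}\{\lambda_t^{\text{base}} < p_t \le \lambda_t\}.
\end{equation}
Combining the two, $\mathbb{E}[\text{Regret}^{\text{DOMT}}_T]-\mathbb{E}[\text{Regret}^{\text{base}}_T] \le a\,\mathbb{E}[V_T^{\text{extra}}]$.

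The main step is to bound each summand in conditional expectation. Condition on $\mathcal{F}_{t-1}$ (so $\lambda_t^{\text{base}}$ is fixed) and on $Z_t$, and use that a null $p_t$ is uniform and independent of the past and of $Z_t$. This gives $\mathbb{P}(\lambda_t^{\text{base}}<p_t\le\lambda_t \mid \mathcal{F}_{t-1},Z_t, Y_t=0) = \lambda_t-\lambda_t^{\text{base}} \le \xi_t = \epsilon_t Z_t$. Taking expectations gives at most $\epsilon_t\,\mathbb{E}[Z_t] \le \epsilon_t$. If we are only permitted the weaker regularity assumption of Appendix \ref{B-1}, namely that null $p$-values have a Lipschitz (super-uniform) conditional CDF, the same bound holds up to that Lipschitz constant. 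The stated $K=a\kappa\alpha$ corresponds to the exactly uniform case, and we will also use the crude bound $\mathbb{E}[Z_t]\le 1$ rather than $1/2$, so that the constant matches the statement.

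Summing, $\mathbb{E}[V_T^{\text{extra}}] \le \sum_{t=1}^T \kappa\alpha t^{-1/2} \le \kappa\alpha\bigl(1+\int_1^T s^{-1/2}\,ds\bigr)$. We then use $\sum_{t\le T} t^{-1/2} \le 2\sqrt{T}-1$; with $\mathbb{E}[Z_t]=1/2$ this yields $\tfrac12\kappa\alpha(2\sqrt T-1)\le\kappa\alpha\sqrt T$. Multiplying by $a$ gives the claimed bound with $K=a\kappa\alpha$.

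The main obstacle is justifying the conditional-uniformity step. One must check that $Y_t$ and the null $p_t$ are independent of $\mathcal{F}_{t-1}$ and $Z_t$. This is exactly where the exogenous-stream assumption and the decoupling enter: without the virtual update, $\lambda_t^{\text{base}}$ would depend on past $Z_s$, and the pathwise domination $\delta_t\ge\delta_t^{\text{base}}$ against the stand-alone baseline would fail.
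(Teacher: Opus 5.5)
Your proposal is correct and follows the paper's proof essentially step for step. The shared steps are: pathwise $\delta_t \ge \delta_t^{\text{base}}$ from the non-negative perturbation; dropping the non-positive $b$-term; bounding the extra null rejections by $\mathbb{E}[\xi_t]=\kappa\alpha/(2\sqrt{t})$ via uniformity of nulls; and summing to $\kappa\alpha\sqrt{T}$. Your explicit coupling argument, that the $\lambda_t^{\text{base}}$ computed inside DOMT coincides with the stand-alone baseline's threshold, is a useful detail the paper leaves implicit.

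One small slip: the ``crude'' bound $\mathbb{E}[Z_t]\le 1$ would only give $a\kappa\alpha(2\sqrt{T}-1)$, which exceeds $K\sqrt{T}$ for $T\ge 2$. So you genuinely need $\mathbb{E}[Z_t]=1/2$, as in your final line, and the sentence claiming the crude bound matches the statement should be deleted.
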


\begin{corollary}[Asymptotic regret order]
\label{cor:regret_asymptotic}
Under the conditions of Theorem \ref{thm:regret_gap}, if the base procedure satisfies a minimum power condition such that $\mathbb{E}[\text{Regret}_T^{\text{base}}(a,b)] = \Omega(T)$, then the expected weighted regret of DOMT remains $\Theta(T)$. Specifically, the regret precisely satisfies $\mathbb{E}[\text{Regret}_T^{\text{DOMT}}(a,b)] = \mathbb{E}[\text{Regret}_T^{\text{base}}(a,b)] + \mathcal{O}(\sqrt{T})$.
\end{corollary}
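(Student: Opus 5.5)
The corollary follows by sandwiching the DOMT regret between two bounds, both anchored to the base procedure's regret on the same sample path. Throughout, I couple DOMT and its base procedure on one probability space: the same $p$-values $p_t$, the same states $Y_t$, and DOMT's extra randomness $Z_t$. Because of the causal decoupling in Algorithm \ref{alg:dbh}, the virtual thresholds $\lambda_t^{\text{base}}$ coincide exactly with the thresholds the stand-alone base procedure would use. The statement then reduces to the two-sided estimate
\[
\mathbb{E}[\text{Regret}_T^{\text{base}}(a,b)] - \mathcal{O}(\sqrt{T}) \le \mathbb{E}[\text{Regret}_T^{\text{DOMT}}(a,b)] \le \mathbb{E}[\text{Regret}_T^{\text{base}}(a,b)] + K\sqrt{T}.
\]
Combined with the hypothesis $\mathbb{E}[\text{Regret}_T^{\text{base}}(a,b)] = \Omega(T)$ and the trivial bound $\text{Regret}_T \le \max(a,b)\,T$, this yields both the additive $\mathcal{O}(\sqrt{T})$ relation and the $\Theta(T)$ order.

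The upper bound is exactly Theorem \ref{thm:regret_gap}, so I invoke it directly. For the lower bound, I use the pathwise decomposition $\text{Regret}^{\text{DOMT}} - \text{Regret}^{\text{base}} = a(V_T^{\text{DOMT}} - V_T^{\text{base}}) - b(M_T^{\text{base}} - M_T^{\text{DOMT}})$. The first term is nonnegative because $\lambda_t \ge \lambda_t^{\text{base}}$. The quantity that needs control is the reduction in false negatives, $M_T^{\text{base}} - M_T^{\text{DOMT}} = \sum_t \mathbf{1}\{Y_t=1,\ \lambda_t^{\text{base}} < p_t \le \lambda_t\}$. Conditioning on $\mathcal{F}_{t-1}$ and $Z_t$, each summand has expectation at most $G(\lambda_t^{\text{base}}+\xi_t) - G(\lambda_t^{\text{base}}) \le L\xi_t \le L\kappa\alpha/\sqrt{t}$ by the Lipschitz assumption on $G$. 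Summing over $t$ gives an expected reduction of at most $2L\kappa\alpha\sqrt{T}$. Hence
\[
\mathbb{E}[\text{Regret}_T^{\text{DOMT}}(a,b)] \ge \mathbb{E}[\text{Regret}_T^{\text{base}}(a,b)] - 2bL\kappa\alpha\sqrt{T}.
\]

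The main point needing care is this conditioning step. I must check that $p_t$ is independent of $(\mathcal{F}_{t-1}, Z_t)$ given $Y_t$. I also must check that $\lambda_t^{\text{base}}$ is $\mathcal{F}_{t-1}$-measurable, and in particular unaffected by past perturbations. Decoupling guarantees this, since the base state is updated only through $\delta_t^{\text{base}}$. Clipping at $1$ only shrinks the increment, so it causes no difficulty.

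Finally, I combine the bounds. From the lower bound and the hypothesis $\mathbb{E}[\text{Regret}^{\text{base}}] \ge cT$, we get $\mathbb{E}[\text{Regret}^{\text{DOMT}}] \ge cT - \mathcal{O}(\sqrt{T}) = \Omega(T)$. The trivial bound gives $\mathcal{O}(T)$, so the order is $\Theta(T)$, and the two-sided estimate gives the stated additive relation with an explicit constant $\max(a, bL)\kappa\alpha$.
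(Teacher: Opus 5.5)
Your proposal is correct. It shares the paper's skeleton (invoke Theorem~\ref{thm:regret_gap}, then combine with the $\Omega(T)$ hypothesis), but it differs at the one step that actually carries the argument.

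\textbf{The lower bound on DOMT's regret.} The paper's proof writes $\mathbb{E}[\text{Regret}_T^{\text{DOMT}}] = \mathbb{E}[\text{Regret}_T^{\text{base}}] + \mathcal{O}(\sqrt{T})$ ``from Theorem~\ref{thm:regret_gap}'' and then sends the ratio to $1$. But Theorem~\ref{thm:regret_gap} is one-sided: it only bounds DOMT's \emph{excess} regret from above. On its own it cannot transfer the lower bound $\Omega(T)$ from the base procedure to DOMT, nor justify the ratio limit. Your pathwise lower bound is exactly the missing reverse inequality. It caps the expected false-negative recovery per round by $L\,\xi_t$ via the Lipschitz condition, which is Condition 2 of Appendix~\ref{B-1} and so a legitimate hypothesis. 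It is the same ``chokehold'' estimate the paper uses in Appendix~\ref{C-4} (with $\gamma = 1/2$), but the paper never deploys it here.

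\textbf{The upper half of $\Theta(T)$.} The paper appeals to Corollary~\ref{cor:impossibility}, which is itself a lower-bound statement and so cannot supply an $\mathcal{O}(T)$ bound. Your trivial bound $\text{Regret}_T \le \max(a,b)\,T$ is what actually closes it.

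\textbf{What each route buys.} Your route gives a genuinely two-sided $\mathcal{O}(\sqrt{T})$ relation and a complete $\Theta(T)$ argument. The price is that the two-sided constant must depend on $bL$. The paper's advertised constant $K = a\kappa\alpha$, independent of $L$ and $b$, is valid only for the upper side. That dependence is unavoidable, since Theorem~\ref{thm:weight_amplification} shows the achievable reduction itself scales with $b$.

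\textbf{One cosmetic slip.} With your per-round bound $L\xi_t \le L\kappa\alpha/\sqrt{t}$, the sum is at most $2L\kappa\alpha\sqrt{T}$, so the combined constant would be $\max(a, 2bL)\kappa\alpha$. To get the $\max(a, bL)\kappa\alpha$ you state, average over $Z_t$ first and use $\mathbb{E}[\xi_t] = \kappa\alpha/(2\sqrt{t})$.
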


\begin{remark}[Duality of Regret Conservation]
The $\mathcal{O}(\sqrt{T})$ bound in Theorem \ref{thm:regret_gap} strictly quantifies the worst-case exploration overhead. As detailed in the subsequent section, this exact mechanism enables a matching $\Omega(\sqrt{T})$ reduction in regret during extreme non-stationary cold starts. This profound symmetry constitutes an order-optimal mitigation of the threshold depletion trap inherent to deterministic baselines. The proofs for Theorem \ref{thm:regret_gap} and Corollary \ref{cor:regret_asymptotic} are deferred to Appendix \ref{B-4}.
\end{remark}

\section{The advantage of the Decoupled-OMT algorithm}
\label{sec:regret_reduction}

While Section \ref{sec:algorithm} established that the exploration overhead is strictly bounded by $\mathcal{O}(\sqrt{T})$ for any chosen base procedure, we now prove that this stochastic mechanism is not merely a safety constraint, but a powerful catalyst for performance enhancement. In non-stationary environments where signals appear in delayed clusters, the DOMT framework effectively circumvents the threshold depletion trap. It extracts a macroscopic $\Omega(\sqrt{T})$ reduction in weighted regret, thereby executing an order-optimal geometric traversal of the theoretical Pareto frontier.

Deterministic baselines are structurally vulnerable to such bursty environments, where a prolonged sequence of pure noise precedes a dense cluster of discoveries. We formalize this via a \textit{bursty signal model}: $Y_t = 0$ for an initial phase $t \in \{1, \dots, T_0\}$, followed by a stable mixture where $Y_t = 1$ occurs with non-vanishing probability for $t > T_0$. In these adversarial cold-start scenarios, unperturbed baselines suffer from absolute threshold depletion, driving $\lambda_t^{\text{base}}$ down to $\mathcal{O}(t^{-1})$. This asymptotically traps the deterministic procedure in a state of extreme power deficiency, rendering it structurally blind to the subsequent burst of genuine signals.

\begin{theorem}[Order-optimal regret reduction]
\label{thm:sqrt_reduction}
Under the bursty signal model with $T_0 = \Theta(T)$, let $\text{Regret}_T^{\text{base}}$ and $\text{Regret}_T^{\text{DOMT}}$ denote the expected weighted regrets of any chosen base procedure and its DOMT counterpart, respectively. For any bursty signal configuration in the weak-signal regime ($L\alpha < 1$) satisfying the conditions in Appendix \ref{B-1}, there exists a critical penalty threshold $M^* > 0$ such that, provided the asymmetric weight ratio satisfies $b/a > M^*$, the expected regret reduction satisfies:
\begin{equation}
    \mathbb{E}[\text{Regret}_T^{\text{base}}] - \mathbb{E}[\text{Regret}_T^{\text{DOMT}}] = \Omega(\sqrt{T}).
\end{equation}
\end{theorem}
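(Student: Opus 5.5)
The plan is to decompose the regret difference path-wise into a false-negative gain and a false-positive cost, and to bound each separately. Since $\lambda_t \ge \lambda_t^{\text{base}}$ on every sample path, and the base state is driven only by the virtual decisions $\delta_t^{\text{base}}$, the two trajectories share the same $\lambda_t^{\text{base}}$. Hence
\[
\mathbb{E}[\text{Regret}_T^{\text{base}}] - \mathbb{E}[\text{Regret}_T^{\text{DOMT}}] = b\,\mathbb{E}[M_T^{\text{base}} - M_T^{\text{DOMT}}] - a\,\mathbb{E}[V_T^{\text{DOMT}} - V_T^{\text{base}}].
\]
The cost term is handled exactly as in Theorem \ref{thm:regret_gap}. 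Conditionally on $\mathcal{F}_{t-1}$, a null is additionally rejected with probability at most $\mathbb{E}[\xi_t] \le \epsilon_t/2$. Therefore $a\,\mathbb{E}[V_T^{\text{extra}}] \le a\sum_{t\le T}\kappa\alpha/(2\sqrt{t}) \le a\kappa\alpha\sqrt{T}$, which is at most $K\sqrt{T}$.

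The main work is a matching lower bound $\mathbb{E}[M_T^{\text{base}} - M_T^{\text{DOMT}}] \ge c\sqrt{T}$ with $c$ independent of $a,b$. Let $\rho>0$ be the non-vanishing alternative probability on $t>T_0$. On such a round, the extra detection probability is
\[
\mathbb{E}\bigl[G(\min(1,\lambda_t^{\text{base}}+\xi_t)) - G(\lambda_t^{\text{base}})\bigr].
\]
To bound this from below I would use the detectability condition $G(x)\ge \mu x$ on $[0,x_0]$ together with the depletion regime $\lambda_t^{\text{base}} = \mathcal{O}(t^{-1})$ established in the discussion after Theorem \ref{thm:fdr_barrier} and in Appendix \ref{A-3}. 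For $t > T_0 = \Theta(T)$ and $T$ large, both $\lambda_t^{\text{base}}$ and $\epsilon_t$ are small, so everything lives inside $[0,x_0]$. The obstacle is that $G(x+\xi)-G(x)$ is not bounded below by $\mu\xi$ from the given conditions alone, since $G$ could be flat near a positive $x$.

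My first attempt at this step is to use the crude bound
\[
G(\lambda^{\text{base}}+\xi) - G(\lambda^{\text{base}}) \ge \mu(\lambda^{\text{base}}+\xi) - L\lambda^{\text{base}} = \mu\xi - (L-\mu)\lambda^{\text{base}}.
\]
This works because $\lambda_t^{\text{base}} = \mathcal{O}(1/t)$ while $\mathbb{E}\xi_t = \kappa\alpha/(2\sqrt{t})$, so the negative part is lower order. Each round then contributes at least $\rho\mu\kappa\alpha/(2\sqrt t) - \mathcal{O}(1/t)$. Summing over $T_0<t\le T$ with $T_0=\theta T$, $\theta<1$, gives
\[
\rho\mu\kappa\alpha(1-\sqrt{\theta})\sqrt{T} - \mathcal{O}(\log T) = \Omega(\sqrt{T}).
\]
The assumption $L\alpha<1$ enters only to guarantee the depletion regime for the base procedure, so that the $\mathcal{O}(1/t)$ rate applies uniformly. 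If that rate is only available in expectation, I would use $\mathbb{E}[\lambda_t^{\text{base}}]$ inside the linear bound, which suffices because the bound is linear in $\lambda^{\text{base}}$.

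Finally, I would combine the two bounds. The regret reduction is at least $\bigl(b\,\rho\mu\kappa\alpha(1-\sqrt\theta) - a\kappa\alpha\bigr)\sqrt{T} - o(\sqrt T)$. This is $\Omega(\sqrt T)$ whenever
\[
b/a > M^* := \frac{1}{\rho\mu(1-\sqrt{\theta})},
\]
the closed-form ``Cold-Start Tax''. The delicate point I expect is making the depletion rate $\lambda_t^{\text{base}}=\mathcal{O}(1/t)$ rigorous for a generic base procedure after $T_0$. The base procedure may begin making virtual discoveries once signals arrive, which would raise its thresholds above $\mathcal{O}(1/t)$. I would control this by showing that the expected number of virtual base rejections after $T_0$ stays $\mathcal{O}(1)$: with $L\alpha<1$, the expected discoveries satisfy a contracting recursion of the form $\mathbb{E}R_t^{\text{base}} \le \sum_s L\alpha\gamma_s\,\mathbb{E}\max(1,R_{s-1}^{\text{base}})$. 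If that fails for some baseline, I would fall back to stating the result for the worst-case thresholds $\tilde\lambda_t$ of Theorem \ref{thm:fdr_barrier}.
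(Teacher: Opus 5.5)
Your proof is correct and follows essentially the paper's route. It uses the same split $b\,\Delta M_T - a\,\Delta V_T$, the same $\mathbb{E}[\xi_t]$-based cost bound, and the same key input: for $L\alpha<1$, the cumulative wealth constraint $\sum_t \lambda_t^{\text{base}} \le \alpha(R_T^{\text{base}}+1)$ caps the baseline's expected discoveries at $\mathcal{O}(1)$. The paper avoids your $\lambda_t^{\text{base}}+\xi_t \le x_0$ bookkeeping and the pointwise $\mathcal{O}(1/t)$ rate by using monotonicity, $G(\lambda_t)\ge G(\xi_t)\ge \mu\xi_t$, and then subtracting $\mathbb{E}[S_T^{\text{base}}]=\mathcal{O}(1)$.

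One discrepancy: you did not weight the burst-phase exploration cost by the null probability $1-\rho$. Your threshold $1/(\rho\mu(1-\sqrt{\theta}))$ is therefore valid but exceeds the paper's $M^* = \bigl(\sqrt{\theta}+(1-\rho)(1-\sqrt{\theta})\bigr)/\bigl(\rho\mu(1-\sqrt{\theta})\bigr)$ (in your notation) by exactly $1/\mu$. So it is not the stationary-difficulty-plus-cold-start-tax closed form, though it suffices for the existence claim.
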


\textit{Proof Sketch.} Let $\Delta M_T = \mathbb{E}[M_T^{\text{base}} - M_T^{\text{DOMT}}]$ and $\Delta V_T = \mathbb{E}[V_T^{\text{DOMT}} - V_T^{\text{base}}]$. For the null-dominant phase $t \le T_0 = \Theta(T)$, threshold depletion drives $\lambda_{t>T_0}^{\text{base}} = \mathcal{O}(t^{-1})$, pushing baseline discoveries asymptotically to zero and establishing $\mathbb{E}[M_T^{\text{base}}] = \Omega(T)$ under the weak-signal assumption. By introducing the non-negative perturbation $\xi_t \sim \text{Uniform}[0, \kappa \alpha / \sqrt{t}]$, DOMT establishes a unidirectional stochastic exploration lower bound. Applying the signal detectability lower bound ensures the existence of an environment-dependent constant $C_1 > 0$ such that the recovery of false negatives satisfies:
\begin{equation}
    \Delta M_T \ge \sum_{t=T_0+1}^T \mathbb{P}(Y_t=1) \cdot \mathbb{P}\Big(p_t \le \xi_t \mid Y_t=1\Big) \ge C_1 \sum_{t=T_0+1}^T \frac{\kappa \alpha}{\sqrt{t}} = \Omega(\kappa \alpha \sqrt{T}).
\end{equation}
Concurrently, the derivation of Theorem \ref{thm:regret_gap} bounds the expected exploration penalty as $\Delta V_T \le C_2 \cdot \kappa \alpha \sqrt{T}$ for an environment-dependent constant $C_2 > 0$. Synthesizing these dual bounds, the net weighted regret reduction is factored:
\begin{equation}
    \Delta \text{Regret}_T = b\Delta M_T - a\Delta V_T \ge \kappa \alpha (b C_1 - a C_2) \sqrt{T}.
\end{equation}
By explicitly defining the critical threshold as $M^* := C_2/C_1$, the precondition $b/a > M^*$ ensures that the core coefficient $(b C_1 - a C_2)$ is positive. The required threshold $M^*$ is independent of the exploration coefficient $\kappa$, which solely scales the magnitude of the $\Omega(\sqrt{T})$ reduction. The full derivation is deferred to Appendix \ref{C-1}.

\begin{remark}[Physical interpretation and the Cold-Start Tax]
\label{remark11}
The critical threshold $M^*$ encodes a profound physical law governing adversarial environments. Let $\rho = T_0/T \in (0,1)$ denote the burst delay ratio. Crucially, because both the false positive exploration tax and the false negative recovery scale proportionally with the amplitude $\kappa \alpha$, this algorithmic parameter perfectly cancels out in their ratio. Integrating the exact marginal rates over the pure-noise and bursty phases reveals an elegant closed-form solution for the environment-dependent threshold:
\begin{equation}
    M^*(\rho) = \underbrace{\frac{\pi}{\mu(1-\pi)}}_{\text{Stationary Difficulty}} + \underbrace{\frac{1}{\mu(1-\pi)} \frac{\sqrt{\rho}}{1-\sqrt{\rho}}}_{\text{Cold-Start Tax}}
\end{equation}
This equation strictly governs OMT investment. The first term captures the inherent difficulty of the underlying signal distribution. The second term, the ``Cold-Start Tax'', quantifies the pure-noise exploration penalty. As the delay $\rho \to 1$, this tax mathematically explodes to infinity. This proves that extreme adversarial delays justify exploration \textit{only under} an extreme aversion to false negatives ($b/a \to \infty$). The full derivation is deferred to Appendix \ref{C-2}.
\end{remark}

While the cold-start tax defines the strict boundary for profitability, the true fundamental advantage of DOMT crystallizes in highly asymmetric regimes ($b \gg a$). In ubiquitous high-stakes domains such as medical screening and fraud detection, missing a critical signal carries a catastrophic penalty. In these settings, the $\mathcal{O}(a\sqrt{T})$ exploration cost acts merely as a fixed, constrained investment in false positive risk. In stark contrast, the algorithm's discovery dividend scales dynamically with signal importance. We formally quantify this linear amplification of utility in the following Theorem \ref{thm:weight_amplification}.

\begin{theorem}[Weight-sensitive advantage]
\label{thm:weight_amplification}
Let $M = b/a > 1$ denote the asymmetric penalty ratio. Under the identical bursty signal model specified in Theorem \ref{thm:sqrt_reduction}, the expected regret reduction achieved by DOMT scales proportionally with both $M$ and the exploration amplitude $\kappa \alpha$:
\begin{equation}
    \mathbb{E}[\text{Regret}_T^{\text{base}}] - \mathbb{E}[\text{Regret}_T^{\text{DOMT}}] = \Omega(a M \kappa \alpha \sqrt{T}).
\end{equation}
\end{theorem}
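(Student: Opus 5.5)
The plan is to reuse the decomposition from the proof sketch of Theorem \ref{thm:sqrt_reduction}, keeping the dependence on $a$, $b=Ma$ and $\kappa\alpha$ explicit instead of absorbing it into $\Omega(\cdot)$. Write
\begin{equation*}
\Delta\text{Regret}_T = b\,\Delta M_T - a\,\Delta V_T ,
\end{equation*}
with $\Delta M_T=\mathbb{E}[M_T^{\text{base}}-M_T^{\text{DOMT}}]$ and $\Delta V_T=\mathbb{E}[V_T^{\text{DOMT}}-V_T^{\text{base}}]$. Both differences are nonnegative pathwise. This holds because the perturbation is nonnegative and the base state is driven only by virtual decisions, so the DOMT rejection set contains the baseline's.

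\textbf{Lower bound on $\Delta M_T$.} For $t>T_0$ with $Y_t=1$, an extra discovery occurs on the event $\lambda_t^{\text{base}}<p_t\le \min(1,\lambda_t^{\text{base}}+\xi_t)$. Since $Z_t$ is independent of $\mathcal{F}_{t-1}$ and of $p_t$, and $\lambda_t^{\text{base}}$ is $\mathcal{F}_{t-1}$-measurable, the conditional probability of this event is $\mathbb{E}[G(\lambda_t^{\text{base}}+\xi_t)-G(\lambda_t^{\text{base}})]$. The key step is to lower bound this increment by $\mu\,\mathbb{E}[\xi_t]=\mu\kappa\alpha/(2\sqrt t)$, up to a constant, using the detectability condition $G(x)\ge\mu x$ near zero. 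Threshold depletion makes $\lambda_t^{\text{base}}=\mathcal{O}(t^{-1})$, which is negligible against $\xi_t$. On the other side, $G$ is not assumed convex, so I will not use an increment bound directly. Instead I use
\begin{equation*}
G(\lambda^{\text{base}}+\xi)-G(\lambda^{\text{base}})\ge \mu(\lambda^{\text{base}}+\xi)-L\lambda^{\text{base}},
\end{equation*}
which is valid once $\lambda^{\text{base}}+\xi\le x_0$; this holds for large $t$. This gives a per-round gain of at least $\mu\xi_t-L\lambda_t^{\text{base}}$. Summing over $t\in(T_0,T]$ with $\mathbb{P}(Y_t=1)\ge 1-\pi$ gives
\begin{equation*}
\Delta M_T\ge \tfrac{\mu(1-\pi)}{2}\kappa\alpha\textstyle\sum_{t>T_0} t^{-1/2}-\mathcal{O}(\log T)\ge C_1\kappa\alpha(1-\sqrt\rho)\sqrt T-\mathcal{O}(\log T).
\end{equation*}

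\textbf{Upper bound on $\Delta V_T$.} Nulls are uniform, and the base decision is $\mathcal{F}_{t-1}$-measurable given $p_t$, so the extra false-positive probability at round $t$ is at most $\mathbb{E}[\xi_t]\le\kappa\alpha/\sqrt t$. Summing gives $\Delta V_T\le 2\kappa\alpha\sqrt T$, the same computation as in Theorem \ref{thm:regret_gap}. Combining the two bounds with $b=Ma$,
\begin{equation*}
\Delta\text{Regret}_T\ge a\kappa\alpha\sqrt T\,\big(MC_1-C_2\big)-\mathcal{O}(b\log T).
\end{equation*}

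\textbf{Extracting the linear dependence on $M$.} This is the main obstacle. The statement holds only for $M$ large enough that the subtracted term is absorbed, so I will work under the hypothesis $M>M^*$ of Theorem \ref{thm:sqrt_reduction}, more precisely $M\ge 2M^*$. Then $MC_1-C_2\ge MC_1/2$, giving $\Omega(aM\kappa\alpha\sqrt T)$ with a constant depending only on $\mu,\pi,\rho$. For $1<M\le 2M^*$, $M$ is bounded, so the claim follows from Theorem \ref{thm:sqrt_reduction} after adjusting constants, provided $M>M^*$. I will state explicitly that the $\Omega$ holds in the regime $M>M^*$, uniformly in $M$. The $\mathcal{O}(\log T)$ remainder must also be dominated by $\sqrt T$ for large $T$; this is where the depletion rate $\lambda^{\text{base}}_t=\mathcal{O}(t^{-1})$ for $t>T_0$ is needed.
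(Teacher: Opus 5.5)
Your proposal is correct and follows the paper's own argument: substitute $b=aM$ into $b\,\Delta M_T-a\,\Delta V_T$, import the $\Theta(\kappa\alpha\sqrt{T})$ bounds on $\Delta M_T$ and $\Delta V_T$ from Theorem~\ref{thm:sqrt_reduction}, and use $MC_1-C_2=\Theta(M)$ for $M$ bounded away above $M^*=C_2/C_1$. Your explicit restriction $M\ge 2M^*$ is more careful than the statement's $M>1$, but your lower bound degenerates as $M\downarrow M^*$, so the closing claim of uniformity over all $M>M^*$ should be limited to $M\ge cM^*$ with $c>1$. One small repair is needed in your $\Delta M_T$ step: $\lambda_t^{\text{base}}+\xi_t\le x_0$ need not hold pathwise (e.g.\ for LORD right after a base discovery), so use $G(\min(1,\lambda_t^{\text{base}}+\xi_t))\ge G(\xi_t)\ge\mu\xi_t$ instead, and control the subtracted term by $L\sum_t\mathbb{E}[\lambda_t^{\text{base}}]\le L\alpha(\mathbb{E}[R_T^{\text{base}}]+1)=\mathcal{O}(1)$ via the wealth constraint with $L\alpha<1$, which is how the paper obtains $\mathbb{E}[S_T^{\text{base}}]=\mathcal{O}(1)$.
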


\begin{remark}[The fallacy of $\mathcal{O}(\log T)$ exploration]
\label{remark12}
One might hypothesize that steeper decays (e.g., $\epsilon_t \propto t^{-1}$) could compress the exploration penalty to $\mathcal{O}(\log T)$. This is a structural trap. Because null $p$-values are standard uniform, the expected false positive tax depends exclusively on the first moment $\mathbb{E}[\xi_t]$, rendering tail-shape optimizations irrelevant. If one forces $\mathbb{E}[\xi_t] = \mathcal{O}(t^{-1})$ to achieve a logarithmic cost, the Lipschitz constraint on alternative signals simultaneously chokes discovery recovery to $\mathcal{O}(\log T)$. Against the baseline's massive $\Omega(T)$ deficit, an $\mathcal{O}(\log T)$ rescue is asymptotically negligible. 
Thus, the $\Theta(t^{-1/2})$ uniform envelope is not an arbitrary heuristic, but the critical structural rate required to unlock an order-optimal $\Omega(\sqrt{T})$ regret reduction while strictly averting finite-sample FDP spikes. 
The proof of Theorem \ref{thm:weight_amplification} and extended discussion of Remark \ref{remark12} are deferred to Appendix \ref{C-3} and Appendix \ref{C-4} respectively.
\end{remark}

\vspace{-3mm}
\paragraph{Geometric synthesis: navigating the Pareto Frontier.}
The theoretical contributions of DOMT can be unified through a geometric lens in the error space $(V_T, M_T)$. Recall from Section \ref{sec:impossibility} that deterministic mechanisms are confined by a fundamental trade-off, creating an ``L-shaped'' region of infeasibility where $V_T$ and $M_T$ cannot be simultaneously sublinear. This manifests as a vertical asymptote of discovery failure during non-stationary cold starts. As will be empirically validated in Section \ref{sec:experiments}, DOMT successfully navigates this restrictive boundary. By deliberately trading a bounded horizontal penalty for a macroscopic vertical reduction, DOMT reaches optimal trade-off states along the non-stationary Pareto frontier that remain otherwise inaccessible to unperturbed baselines. We formalize this theoretical trajectory as follows:

\begin{remark}[Geometric Pareto traversal]
\label{rem:pareto_traversal}
The DOMT framework executes an order-optimal traversal in the $(V_T, M_T)$ plane. By deliberately incurring a strictly bounded horizontal penalty $\Delta V_T = \mathcal{O}(\sqrt{T})$, it achieves a macroscopic vertical descent $\Delta M_T = \Omega(\sqrt{T})$. This precise geometric translation effectively bypasses the deterministic L-shaped trap, allowing the procedure to safely access the non-stationary Pareto frontier.
\end{remark}

\section{Experiments}
\label{sec:experiments}

We evaluate the DOMT framework across three classical deterministic baselines (LOND, LORD, SAFFRON) \cite{javanmard2018online, ramdas2018saffron} under generalized empirical distributions. \textit{Crucially, while our theory assumes strict adversarial constraints to establish worst-case bounds, these deliberately relaxed simulations demonstrate DOMT's robust real-world applicability.} Performance is tracked via empirical FDR, Statistical Power, and our proposed Weighted Regret.
See Appendix \ref{E-1} for more detailed settings.

As visualized in Figure \ref{fig:main_dynamics}, DOMT maintains strict FDR control in stationary environments (Top Row). However, in the initial pure-null phase of the bursty environment (Bottom Row), any exploratory rejection leads to a transient FDP of 100\% due to the structural absence of true signals. Consequently, instead of displaying an uninformative FDR curve for the bursty phase, we highlight the Threshold Dynamics (bottom-left) to expose the underlying physical mechanism: while paying a localized 'Cold-Start Tax' in errors, DOMT successfully prevents the absolute threshold depletion that cripples deterministic baselines, thereby securing a macroscopic discovery dividend once the signal burst arrives.
Remarkably, this macroscopic advantage persists even in stationary settings, a phenomenon we attribute to the probabilistic nature of testing sequences:

\begin{remark}[Robustness to local mini-droughts]
\label{remark14}
DOMT's stationary power advantage is probabilistically inevitable. By the Borel-Cantelli lemma, local ``mini-droughts'' (consecutive pure nulls) occur almost surely as $T \to \infty$, causing deterministic baselines to momentarily falter. DOMT's stochastic envelope effortlessly bridges these micro-depletions, capturing signals that unperturbed procedures structurally miss. Formal average-case analysis is deferred to Appendix \ref{D}.
\end{remark}

\vspace{-2mm}
\begin{figure}[htbp]
    \centering
    \includegraphics[width=0.8\textwidth]{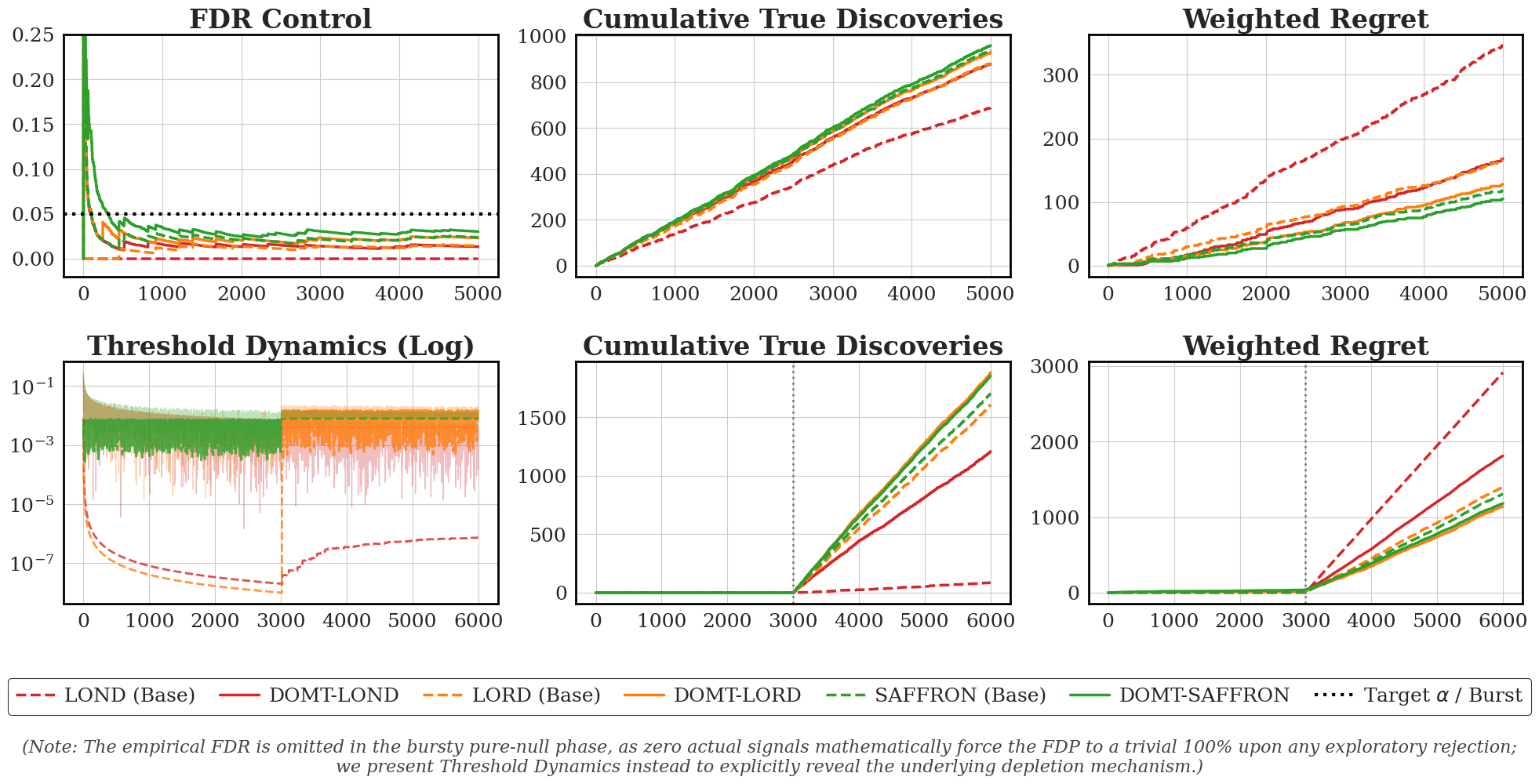}
    \caption{Performance Dynamics across Environments. \textbf{Top Row (Stationary):} DOMT maintains strict FDR control while delivering consistently enhanced power. \textbf{Bottom Row (Bursty):} Stochastic exploration mitigates absolute threshold depletion, effectively capturing delayed signal bursts. 
    % \protect\footnotemark[1]
    }
    \label{fig:main_dynamics}
\end{figure}
% \footnotetext[1]{ Code available at 
% % \url{https://anonymous.4open.science/r/domt_neurips26-589F}.}
% \url{https://anonymous.4open.science/r/domt-neurips26_26172}.}
% \vspace{-4mm}

\vspace{-1mm}

We conclude by evaluating regret phase transitions and geometric Pareto optimality (Figure \ref{fig:pareto_phase}). The left panel maps weighted regret across varying penalty ratios $b/a$ and burst delays $\rho$. Corroborating Remark \ref{rem:pareto_traversal}, DOMT establishes robust superiority once the asymmetry penalty exceeds the theoretical cold-start boundary. Projecting these outcomes onto the $(V_T, M_T)$ space (Right panel) confirms that unperturbed baselines are structurally trapped in the extreme vertical asymptote. By paying a strictly bounded horizontal tax in false discoveries, DOMT successfully secures a macroscopic vertical reduction in false negatives. 
Extensive supplementary validations are deferred to Appendix \ref{E}.

\vspace{-1mm}
\begin{figure}[htbp]
    \centering
    \begin{minipage}[b]{0.35\textwidth}
        \centering
        \includegraphics[width=\linewidth]{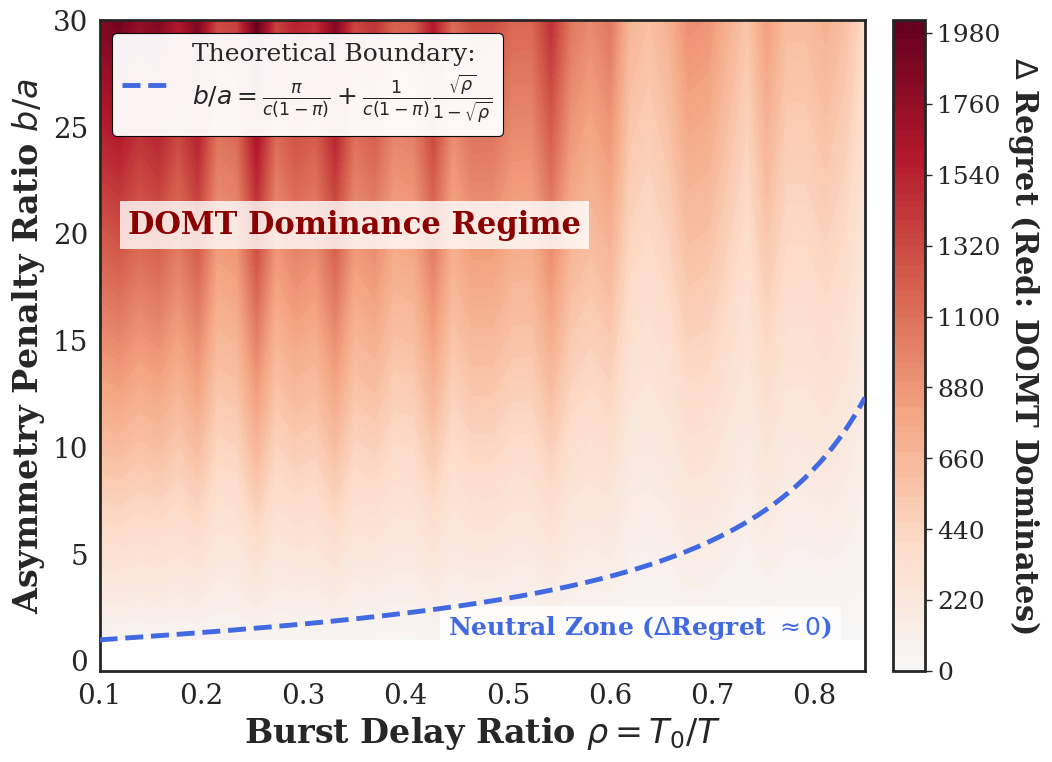}
    \end{minipage}%
    \hspace{0.07\textwidth}
    \begin{minipage}[b]{0.35\textwidth}
        \centering
        \includegraphics[width=\linewidth]{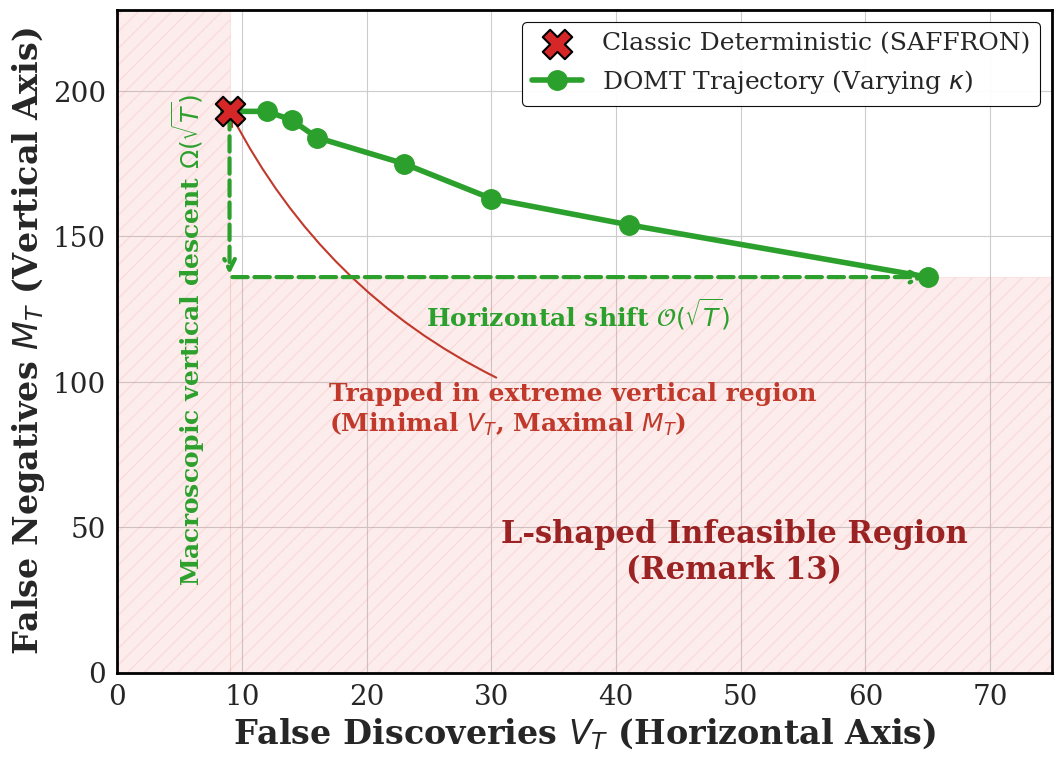}
    \end{minipage}
    \vspace{-1mm}
    \caption{Regret Phase Transition and Geometric Synthesis. \textbf{Left:} Asymmetric regret scaling, affirming the theoretical cold-start boundary. \textbf{Right:} Empirical Pareto traversal escaping the lock-in.}
    \label{fig:pareto_phase}
\end{figure}
\vspace{-1mm}

\section{Conclusion}

\vspace{-1mm}
This work introduces the \textit{Weighted Regret} perspective to evaluate online multiple testing, examines existing procedures under this lens, and proposes Decoupled-OMT (DOMT) as an approach that offers targeted improvements, substantiated by rigorous theoretical derivations and empirical validations. 
Anticipating that this perspective may inspire more research within the field, we formally defer the discussion to Appendix \ref{F}, examining how current limitations naturally motivate future trajectories.

\bibliographystyle{plain}
\bibliography{refs}

@article{johari2022experimental,
author = {Johari, Ramesh and Li, Hannah and Liskovich, Inessa and Weintraub, Gabriel Y.},
title = {Experimental Design in Two-Sided Platforms: An Analysis of Bias},
journal = {Management Science},
volume = {68},
number = {10},
pages = {7069-7089},
year = {2022},
doi = {10.1287/mnsc.2021.4247}
}

@article{benjamini1995controlling,
  title={Controlling the false discovery rate: a practical and powerful approach to multiple testing},
  author={Benjamini, Yoav and Hochberg, Yosef},
  journal={Journal of the Royal Statistical Society: Series B (Methodological)},
  volume={57},
  number={1},
  pages={289--300},
  year={1995}
}

@article{foster2008alpha,
  title={$\alpha$-investing: a procedure for sequential control of expected false discoveries},
  author={Foster, Dean P and Stine, Robert A},
  journal={Journal of the Royal Statistical Society: Series B (Statistical Methodology)},
  volume={70},
  year={2008}
}

@article{javanmard2018online,
author = {Adel Javanmard and Andrea Montanari},
title = {{Online rules for control of false discovery rate and false discovery exceedance}},
volume = {46},
journal = {The Annals of Statistics},
number = {2},
pages = {526 -- 554},
year = {2018}
}

@inproceedings{ramdas2018saffron,
  title={SAFFRON: an adaptive algorithm for online control of the false discovery rate},
  author={Aaditya Ramdas and Tijana Zrnic and Martin J. Wainwright and Michael I. Jordan},
  booktitle={ICML},
  year={2018}
}

@article{lin2024online,
  author       = {Ruicheng Ao and
                  Hongyu Chen and
                  David Simchi{-}Levi and
                  Feng Zhu},
  title        = {Online Local False Discovery Rate Control: {A} Resource Allocation
                  Approach},
  journal      = {CoRR},
  volume       = {abs/2402.11425},
  year         = {2024}
}

@article{Kuang2026SCOREAU,
  title={SCORE: A Unified Framework for Overshoot Refund in Online FDR Control},
  author={Kuang, Qi and Gang, Bowen and Xia, Yin},
  journal={arXiv preprint arXiv:2601.20386},
  year={2026}
}

@inproceedings{NIPS2017_7f018eb7,
 author = {Ramdas, Aaditya and Yang, Fanny and Wainwright, Martin J and Jordan, Michael I},
 booktitle = {NeurIPS},
 editor = {I. Guyon and U. Von Luxburg and S. Bengio and H. Wallach and R. Fergus and S. Vishwanathan and R. Garnett},
 title = {Online control of the false discovery rate with decaying memory},
 year = {2017}
}

@article{Delecroix2023MonitoringRI,
  title={Monitoring resilience in bursts},
  author={Clara Delecroix and Egbert H. van Nes and Martin W. Scheffer and Ingrid A van de Leemput},
  journal={Proceedings of the National Academy of Sciences of the United States of America},
  year={2023},
  volume={121}
}

@article{Savage1951TheTO,
  title={The Theory of Statistical Decision},
  author={Leonard J. Savage},
  journal={Journal of the American Statistical Association},
  year={1951},
  volume={46},
  pages={55-67}
}

@article{Lai1985AsymptoticallyEA,
  title={Asymptotically efficient adaptive allocation rules},
  author={Tze Leung Lai and Herbert E. Robbins},
  journal={Advances in Applied Mathematics},
  year={1985},
  volume={6},
  pages={4-22}
}

@inproceedings{Yang2017AFF,
  title={A framework for Multi-A(rmed)/B(andit) Testing with Online FDR Control},
  author={Fanny Yang and Aaditya Ramdas and Kevin G. Jamieson and Martin J. Wainwright},
  booktitle={NeurIPS},
  year={2017}
}

@inproceedings{Chen2019ContextualOF,
  title={Contextual Online False Discovery Rate Control},
  author={Shiyun Chen and Shiva Prasad Kasiviswanathan},
  booktitle={AISTATS},
  year={2019}
}

@inproceedings{Xu2023OnlineMT,
  title={Online multiple testing with e-values},
  author={Ziyu Xu and Aaditya Ramdas},
  booktitle={AISTATS},
  year={2023}
}

@inproceedings{DBLP:conf/iclr/PodkopaevR22,
  author       = {Aleksandr Podkopaev and
                  Aaditya Ramdas},
  title        = {Tracking the risk of a deployed model and detecting harmful distribution
                  shifts},
  booktitle    = {ICLR},
  year         = {2022}
}

@inproceedings{Angelopoulos2022ConformalRC,
  title={Conformal Risk Control},
  author={Anastasios Nikolas Angelopoulos and Stephen Bates and Adam Fisch and Lihua Lei and Tal Schuster},
  booktitle={ICLR},
  year={2024}
}

@inproceedings{tian2019addis,
  title={{ADDIS}: an adaptive discarding algorithm for online {FDR} control with conservative nulls},
  author={Tian, Jinjin and Ramdas, Aaditya},
  booktitle={NeurIPS},
  year={2019}
}

@article{Groza2022TheIM,
  title={The International Mouse Phenotyping Consortium: comprehensive knockout phenotyping underpinning the study of human disease},
  author={Tudor Groza and Federico L{\'o}pez-G{\'o}mez and Hamed Haseli Mashhadi and Violeta Mu{\~n}oz-Fuentes and Osman Nuri G{\"u}nes and Robert J. Wilson and Pilar Cacheiro and Anthony Frost and Piia Keskivali-Bond and Bora Vardal and Aaron McCoy and Tsz Kwan Cheng and Luis A. Santos and Sara E. Wells and Damian Smedley and Ann-Marie Mallon and Helen E. Parkinson},
  journal={Nucleic Acids Research},
  year={2022},
  volume={51},
  pages={D1038 - D1045}
}

@article{Singh2002GeneEC,
  title={Gene expression correlates of clinical prostate cancer behavior.},
  author={Dinesh Singh and Phillip G. Febbo and Kenneth N. Ross and Donald G. Jackson and Judith B. Manola and Christine Ladd and Pablo Tamayo and Andrew A. Renshaw and Anthony V. D'Amico and Jerome P. Richie and Eric S. Lander and Massimo Loda and Philip W. Kantoff and Todd R. Golub and William R. Sellers},
  journal={Cancer cell},
  year={2002},
  volume={1 2},
  pages={
          203-9
        }
}

@article{Bates2021DistributionFreeRP,
  title={Distribution-Free, Risk-Controlling Prediction Sets},
  author={Stephen Bates and Anastasios Nikolas Angelopoulos and Lihua Lei and Jitendra Malik and Michael I. Jordan},
  journal={J. ACM},
  year={2021},
  volume={68},
  pages={43:1-43:34}
}

@article{Lee2026SyntheticPoweredMT,
  title={Synthetic-Powered Multiple Testing with FDR Control},
  author={Yonghoon Lee and Meshi Bashari and Edgar Dobriban and Yaniv Romano},
  journal={ArXiv},
  year={2026},
  volume={abs/2602.16690}
}

@article{liu2026sequential,
  title={Sequential Multiple Testing: A Second-Order Asymptotic Analysis},
  author={Liu, Jingyu and Song, Yanglei},
  journal={arXiv preprint arXiv:2603.04685},
  year={2026}
}

@article{Kale2025LieTM,
  title={Lie to Me: Knowledge Graphs for Robust Hallucination Self-Detection in LLMs},
  author={Sahil Kale and Antonio Luca Alfeo},
  journal={ArXiv},
  year={2025},
  volume={abs/2512.23547}
}

@article{Zhao2023FalseDR,
  title={False Discovery Rate Control For Structured Multiple Testing: Asymmetric Rules And Conformal Q-values},
  author={Zinan Zhao and Wenguang Sun},
  journal={Journal of the American Statistical Association},
  year={2023},
  volume={120},
  pages={805 - 817}
}

@inproceedings{leebandit,
  title={From Bandit Regret to FDR Control: Online Selective Generation with Feedback Unlocking},
  author={Lee, Minjae and Jung, Yoonjae and Park, Sangdon},
  booktitle={Agentic AI in the Wild: From Hallucinations to Reliable Autonomy},
  year={2026}
}

@article{Chiong2023TheMD,
  title={The Minimax-Regret Decision Framework for Online A/B Tests},
  author={Khai Xiang Chiong and Joonhwi Joo},
  journal={SSRN Electronic Journal},
  year={2023}
}

@inproceedings{xu2024more,
  title={More powerful multiple testing under dependence via randomization},
  author={Xu, Ziyu and Ramdas, Aaditya},
  booktitle={AISTATS},
  year={2024}
}

@article{fischer2024online,
  title={An online generalization of the (e-) Benjamini-Hochberg procedure},
  author={Fischer, Lasse and Xu, Ziyu and Ramdas, Aaditya},
  journal={arXiv preprint arXiv:2407.20683},
  year={2024}
}

@article{Robertson2022OnlineMH,
  title={Online multiple hypothesis testing},
  author={David S. Robertson and James M. S. Wason and Aaditya Ramdas},
  journal={Statistical science : a review journal of the Institute of Mathematical Statistics},
  year={2022},
  volume={38},
  pages={557 - 575}
}

@article{zrnic2021asynchronous,
  title={Asynchronous online testing of multiple hypotheses},
  author={Zrnic, Tijana and Ramdas, Aaditya and Jordan, Michael I},
  journal={Journal of Machine Learning Research},
  volume={22},
  number={33},
  pages={1--39},
  year={2021}
}

@article{Gang2020StructureAdaptiveST,
  title={Structure–Adaptive Sequential Testing for Online False Discovery Rate Control},
  author={Bowen Gang and Wenguang Sun and Weinan Wang},
  journal={Journal of the American Statistical Association},
  year={2020},
  volume={118},
  pages={732 - 745}
}

@inproceedings{gibbs2021adaptive,
  title={Adaptive conformal inference under distribution shift},
  author={Gibbs, Isaac and Candes, Emmanuel},
  booktitle={NeurIPS},
  year={2021}
}

@inproceedings{bastani2022practical,
  title={Practical adversarial multivalid conformal prediction},
  author={Bastani, Osbert and Gupta, Varun and Jung, Christopher and Noarov, Georgy and Ramalingam, Ramya and Roth, Aaron},
  booktitle={NeurIPS},
  year={2022}
}

@article{fischer2024closure,
  title={The online closure principle},
  author={Fischer, Lasse and Bofill Roig, Marta and Brannath, Werner},
  journal={The Annals of Statistics},
  volume={52},
  number={2},
  pages={817--841},
  year={2024}
}

@inproceedings{Zhang2025eGAIEG,
  title={e-GAI: e-value-based Generalized $\alpha$-Investing for Online False Discovery Rate Control},
  author={Yifan Zhang and Zijian Wei and Haojie Ren and Changliang Zou},
  booktitle={ICML},
  year={2025}
}

@inproceedings{xu2021unified,
  title={A unified framework for bandit multiple testing},
  author={Xu, Ziyu and Ramdas, Aaditya},
  booktitle={NeurIPS},
  year={2021}
}

@article{dohler2024online,
  title={Online multiple testing with super-uniformity reward},
  author={D{\"o}hler, Sebastian and Meah, Iqraa and Roquain, Etienne},
  journal={Electronic Journal of Statistics},
  volume={18},
  number={1},
  pages={1293--1354},
  year={2024}
}

@inproceedings{xu2022dynamic,
  title={Dynamic algorithms for online multiple testing},
  author={Xu, Ziyu and Ramdas, Aaditya},
  booktitle={Mathematical and Scientific Machine Learning},
  pages={955--986},
  year={2022}
}

@article{tian2021online,
  title={Online control of the familywise error rate},
  author={Tian, Jinjin and Ramdas, Aaditya},
  journal={Statistical Methods in Medical Research},
  volume={30},
  number={4},
  pages={976--993},
  year={2021}
}

@article{robertson2022online,
  title={Online error rate control for platform trials},
  author={Robertson, David S and Wason, James MS and K{\"o}nig, Franz and Posch, Martin and Jaki, Thomas},
  journal={Statistics in Medicine},
  volume={42},
  number={14},
  pages={2475--2495},
  year={2023}
}

@article{bao2024cap,
  title={CAP: A General Algorithm for Online Selective Conformal Prediction with FCR Control},
  author={Bao, Yajie and Huo, Yuyang and Ren, Haojie and Zou, Changliang},
  journal={Journal of Machine Learning Research},
  volume={26},
  number={287},
  pages={1--74},
  year={2025}
}

@article{shaffer1995multiple,
  title={Multiple hypothesis testing},
  author={Shaffer, Juliet Popper and others},
  journal={Annual review of psychology},
  volume={46},
  number={1},
  pages={561--584},
  year={1995}
}

@book{efron2012large,
  title={Large-scale inference: empirical Bayes methods for estimation, testing, and prediction},
  author={Efron, Bradley},
  volume={1},
  year={2012},
  publisher={Cambridge University Press}
}
%%%%%%%%%%%%%%%%%%%%%%%%%%%%%%%%%%%%%%%%%%%%%%%%%%%%%%%%%%%%

\newpage

\appendix

\phantomsection
\section*{Organization of the Appendix}
\label{apporg}

To facilitate navigation and comprehensive review, the appendix are organized as follows:

\begin{itemize} %[leftmargin=*]
    
    \item \textbf{Appendix \ref{F-rlw} (Related Work):} It positions our Weighted Regret perspective and decoupling mechanism within the broader methodological lineage of OMT.
    
    \item \textbf{Appendix \ref{A} (Deferred Proofs for Section \ref{sec:impossibility}):} Establishes the fundamental limits of deterministic OMT. It details the proofs for the linear regret lower bound of constant power (\ref{A-1}) and the inevitable false negative penalty of threshold decay (\ref{A-2}). We further instantiate these limits on canonical baselines (\ref{A-3}), culminating in the proof of the impossibility of simultaneous sublinear regrets (\ref{A-4}).

    \item \textbf{Appendix \ref{B} (Deferred Proofs for Section \ref{sec:algorithm}):} Provides the rigorous mathematical foundation for DOMT. Following the specification of necessary regularity conditions (\ref{B-1}), it presents the complete proofs for DOMT's exact asymptotic FDR control (\ref{B-2}), its finite-sample FDP inflation bounds (\ref{B-3}), and the strict $\mathcal{O}(\sqrt{T})$ worst-case regret gap (\ref{B-4}).

    \item \textbf{Appendix \ref{C} (Deferred Proofs for Section \ref{sec:regret_reduction}):} Demonstrates the macroscopic advantages of DOMT in adversarial environments. It includes the rigorous derivation of the order-optimal $\Omega(\sqrt{T})$ regret reduction (\ref{C-1}) and the closed-form formulation of the ``Cold-Start Tax'' (\ref{C-2}). It also formally proves the framework's weight-sensitive advantage (\ref{C-3}) and provides an extended discussion refuting the structural fallacy of $\mathcal{O}(\log T)$ exploration (\ref{C-4}).

    \item \textbf{Appendix \ref{D} (Theoretical Extensions):} Formalizes the theoretical mechanisms behind our empirical observations. It mathematically quantifies DOMT's robust threshold advantage during local pure-null mini-droughts (\ref{D-1}) and rigorously proves its structural capability to preserve the super-martingale safety of modern $e$-value procedures (\ref{D-2}).

    \item \textbf{Appendix \ref{E} (Empirical Evaluations):} Details our experimental methodology (\ref{E-1}) and provides extensive supplementary validations. It features ablation studies validating the necessity of causal decoupling (\ref{E-2}) and confirming parameter universality (\ref{E-3}). We further demonstrate DOMT's compatibility with advanced methods (\ref{E-4}), and evaluate its practical efficacy across diverse real-world datasets including RNA-Seq microarrays, S\&P 500 anomalies, and IMPC phenotype data (\ref{E-5}).

    \item \textbf{Appendix \ref{F} (Extended Discussions):} Expands on the contextual and theoretical boundaries of our work. It provides candid analyses of DOMT's limitations, such as wealth deprivation (\ref{F-2}), and outlines actionable future directions towards autonomous, context-aware online testing (\ref{F-3}).
\end{itemize}

\section{More Discussion on Related Work}
\label{F-rlw}

To rigorously situate the Decoupled-OMT (DOMT) framework within the broader landscape of statistical inference, we systematically decompose the evolution of existing methodologies and clarify the boundaries of our proposed paradigm.

\subsection{From Constrained Optimization to Unconstrained Regret Minimization}
The idea of incorporating asymmetric importance into OMT has roots in the penalty-weighted FDR framework \cite{NIPS2017_7f018eb7}, which introduced per-hypothesis penalty weights to reflect differential importance across tests. However, this framework remains within the constrained optimization paradigm, aiming to maximize power subject to a weighted-FDR constraint.

More recently, online Local FDR control has been formulated as an online knapsack problem, achieving $\mathcal{O}(\log^2 T)$ regret under Local FDR oracles \cite{lin2024online}. It is important to note that their $\mathcal{O}(\log^2 T)$ bound and our $\Omega(\sqrt{T})$ bound operate under strictly different informational regimes. Specifically, \cite{lin2024online} assumes access to posterior probabilities, whereas our framework operates in the canonical Global FDR setting using only raw $p$-values. Therefore, these bounds reflect the distinct computational capacities of their respective input spaces rather than a direct methodological comparison.

Concurrently, active sequential paradigms have unified regret minimization with error control. Frameworks such as Multi-Armed Bandit (MAB) testing \cite{Yang2017AFF, xu2021unified} and online selective generation \cite{leebandit} establish rigorous connections between bandit regret and FDR. However, a critical distinction must be made regarding the source of data. These active frameworks operate under settings where the algorithm strategically dictates interventions (e.g., arm pulls or generation policies). In contrast, our work addresses passive testing streams where the arrival of hypotheses is strictly exogenous. 

Against this backdrop, our \textit{Weighted Regret} metric represents a deliberate shift from constrained to unconstrained optimization in passive OMT. Rather than treating FDR as a hard constraint and power as a secondary objective, we directly minimize the cumulative asymmetric loss relative to an omniscient Oracle. This shift enables the $\Omega(T)$ lower-bound proofs and order-optimal regret-reduction guarantees that are unavailable in constrained frameworks, aligning naturally with the modern risk-aware philosophy of Conformal Risk Control (CRC) \cite{Angelopoulos2022ConformalRC, Bates2021DistributionFreeRP}.

\subsection{The Evolution of Threshold Depletion Countermeasures}
The genesis of OMT is deeply rooted in $\alpha$-wealth allocation \cite{foster2008alpha}, formally unified into deterministic frameworks like LOND and LORD \cite{javanmard2018online, NIPS2017_7f018eb7}, and enhanced by adaptive mechanisms like SAFFRON \cite{ramdas2018saffron} and ADDIS \cite{tian2019addis}. To counter the ``alpha-death'' inherent in macroscopic cold-starts \cite{Delecroix2023MonitoringRI}, various mechanisms have emerged. The SURE framework exploits conservative null structures to effectively reward the algorithm \cite{dohler2024online}, while SupLORD employs dynamic scheduling to adjust testing levels based on wealth \cite{xu2022dynamic}. 

Concurrently, structure-adaptive methods like SAST \cite{Gang2020StructureAdaptiveST} utilize sliding windows to estimate local prior probabilities. However, in the extreme adversarial cold-start scenario modeled in our work, estimation-based methods face inherent limitations. Due to the absence of historical signal bursts during the pure-null phase, they lack the necessary data to calibrate thresholds, naturally limiting their reaction speed compared to model-free methods.

Recently, the field has witnessed a profound shift toward betting martingales utilizing $e$-values, exemplified by e-LOND \cite{Xu2023OnlineMT}, SCORE \cite{Kuang2026SCOREAU}, and $e$-GAI \cite{Zhang2025eGAIEG}. Simultaneously, theoretical boundaries have been expanded via the online closure principle \cite{fischer2024closure, fischer2024online}. To enhance power, recent stochastic extensions like U$e$-LOND \cite{xu2024more} share a similar randomized exploration spirit with our work. 

However, the critical difference lies in state management. Directly coupling exploration noise into the wealth state makes algorithms vulnerable to spurious cascades and irreversible threshold depletion during extreme droughts. DOMT, conversely, acts as a cross-paradigm meta-wrapper. By executing a non-negative stochastic expansion and maintaining strict \textit{causal decoupling}, it quarantines the noise from the virtual baseline. It does not replace $p$-value or $e$-value procedures but rather encapsulates them, safely shattering the zero-threshold deadlock while rigorously preserving their underlying super-martingale properties (Appendix \ref{D-2}).

\subsection{Emerging Horizons and Complex Topologies}
Looking forward, the online multiple testing landscape is rapidly expanding into highly complex, data-driven domains. A notable recent advancement is the integration of generative models and auxiliary historical information into the testing pipeline. For instance, recent synthetic-powered multiple testing frameworks \cite{Lee2026SyntheticPoweredMT} explore how to safely leverage synthetic data to enhance sample efficiency while preserving distribution-free FDR control. Concurrently, the proliferation of large language models (LLMs) has catalyzed the demand for online selective inference to strictly control generation hallucinations under non-stationary user interactions \cite{Kale2025LieTM, bao2024cap}. Furthermore, from a pure theoretical statistics perspective, there is a burgeoning push towards establishing second-order asymptotic optimality \cite{liu2026sequential}, handling arbitrary structured dependencies such as spatial or graphical models \cite{Zhao2023FalseDR}, and extending online error control to platform trials \cite{robertson2022online} and familywise error rates \cite{tian2021online}. 

DOMT's core philosophy---causally decoupling active exploration from the primary testing martingale---holds profound potential across these frontiers. By mathematically quarantining exploratory noise, DOMT could enable the aggressive utilization of potentially biased synthetic priors or selective LLM decoding strategies, without compromising the rigid martingale safety boundaries required in these sophisticated topologies.

\section{Deferred Proofs and Derivations for Section \ref{sec:impossibility}}
\label{A}

\subsection{Proof of Theorem \ref{thm:power_lower_bound} (Weighted Linear Regret Lower Bound)}
\label{A-1}

\begin{proof}
For the purpose of rigorous martingale analysis, we define an omniscient filtration $\mathcal{G}_t = \mathcal{F}_t \vee \sigma(Y_1, \dots, Y_t)$ which explicitly incorporates the unobserved true states. Since the true state generation and $p$-values are conditionally independent of past decisions, all derived conditional expectations hold validly, and the subsequently constructed $X_t$ forms a strict martingale difference sequence with respect to $\mathcal{G}_t$.

We restate the Weighted Regret for an online testing algorithm over $T$ rounds:
\begin{equation}
\text{Regret}_T(a,b) = a \cdot V_T + b \cdot M_T = a \sum_{t=1}^T \mathbf{1}\{Y_t=0, \delta_t=1\} + b \sum_{t=1}^T \mathbf{1}\{Y_t=1, \delta_t=0\}.
\end{equation}
Given that the weights $a, b > 0$ and the number of false negatives $M_T$ is non-negative, the following lower bound holds unconditionally:
\begin{equation}
\text{Regret}_T(a,b) \ge a \cdot V_T = a \sum_{t=1}^T \mathbf{1}\{Y_t=0, p_t \le \lambda_t\}.
\end{equation}
To evaluate the expected regret, we take the expectation of the total false discoveries $V_T$:
\begin{equation}
\mathbb{E}[V_T] = \sum_{t=1}^T \mathbb{E}[\mathbf{1}\{Y_t=0, p_t \le \lambda_t\}] = \sum_{t=1}^T \mathbb{P}(Y_t=0, p_t \le \lambda_t).
\end{equation}
In a stationary environment, the true states are independent and follow $Y_t \sim \text{Bernoulli}(1-\pi)$, implying $\mathbb{P}(Y_t=0) = \pi$. By the sequential decision protocol, the threshold $\lambda_t$ is $\mathcal{F}_{t-1}$-measurable (and thus $\mathcal{G}_{t-1}$-measurable). Since the null $p$-value $p_t$ is independent of the history $\mathcal{G}_{t-1}$ and follows a standard uniform distribution $U[0,1]$, we have:
\begin{equation}
\begin{aligned}
\mathbb{P}(Y_t=0, p_t \le \lambda_t) &= \mathbb{E}\left[ \mathbb{P}(Y_t=0, p_t \le \lambda_t \mid \mathcal{G}_{t-1}) \right] \\
&= \mathbb{E}\left[ \mathbb{P}(Y_t=0 \mid \mathcal{G}_{t-1}) \cdot \mathbb{P}(p_t \le \lambda_t \mid Y_t=0, \mathcal{G}_{t-1}) \right] \\
&= \mathbb{E}[\pi \cdot \lambda_t].
\end{aligned}
\end{equation}
The algorithm's constraint of maintaining a minimum power $\beta > 0$ necessitates $\lambda_t \ge \lambda_\beta := G^{-1}(\beta) > 0$ almost surely for all $t$. Substituting this threshold lower bound into the expectation:
\begin{equation}
\mathbb{E}[V_T] = \sum_{t=1}^T \pi \mathbb{E}[\lambda_t] \ge \sum_{t=1}^T \pi \lambda_\beta = \pi \lambda_\beta T.
\end{equation}
Combining these results, the expected weighted regret satisfies:
\begin{equation}
\mathbb{E}[\text{Regret}_T(a,b)] \ge a \cdot \mathbb{E}[V_T] \ge a \pi \lambda_\beta T = \Omega(T).
\end{equation}

To establish the almost sure (a.s.) limit, we define the centered sequence $X_t = \mathbf{1}\{Y_t=0, p_t \le \lambda_t\} - \pi \lambda_t$. Note that $\mathbb{E}[X_t \mid \mathcal{G}_{t-1}] = 0$, forming a strict sequence of bounded martingale differences with respect to $\mathcal{G}_t$. By the Strong Law of Large Numbers for martingales (e.g., via the Azuma-Hoeffding inequality and the Borel-Cantelli lemma), we have $\frac{1}{T} \sum_{t=1}^T X_t \xrightarrow{a.s.} 0$ as $T \rightarrow \infty$. Consequently:
\begin{equation}
\liminf_{T \rightarrow \infty} \frac{1}{T} V_T = \liminf_{T \rightarrow \infty} \left( \frac{1}{T} \sum_{t=1}^T \pi \lambda_t + \frac{1}{T} \sum_{t=1}^T X_t \right) \ge \pi \lambda_\beta \quad \text{a.s.}
\end{equation}
Thus, $\liminf_{T \rightarrow \infty} \frac{1}{T} \text{Regret}_T(a,b) \ge a \pi \lambda_\beta$ holds almost surely along individual sample paths.
\end{proof}

\subsection{Proof of Theorem \ref{thm:fdr_barrier} (The Inevitable False Negative Penalty of Threshold Decay)}
\label{A-2}

\begin{proof}
For any deterministic online FDR algorithm, define $\tilde{\lambda}_t$ as its worst-case threshold sequence when exactly zero rejections have occurred in the history up to step $t-1$. To ensure asymptotic safety, this base sequence must satisfy $\sum_{t=1}^\infty \tilde{\lambda}_t \le C < 1$.

Consider the adversarial bursty environment: $Y_t = 0$ for $1 \le t \le T/2$, and $Y_t = 1$ for $T/2 < t \le T$. Let $\mathcal{E}_{\text{null}}$ denote the event of exactly zero rejections in the first phase. By the union bound:
\begin{equation}
\mathbb{P}(\mathcal{E}_{\text{null}}) \ge 1 - \sum_{t=1}^{T/2} \mathbb{P}(p_t \le \tilde{\lambda}_t \mid Y_t=0) = 1 - \sum_{t=1}^{T/2} \tilde{\lambda}_t \ge 1 - C > 0.
\end{equation}

Conditioned on $\mathcal{E}_{\text{null}}$, the algorithm enters the bursty phase with no historical discoveries. Its testing threshold will deterministically follow $\tilde{\lambda}_t$ until the first discovery is made. We evaluate the conditional probability of missing \textit{all} alternative signals in the second phase. Applying the Lipschitz bound $G(x) \le Lx$:
\begin{equation}
\mathbb{P}(\text{miss all signals in Phase 2} \mid \mathcal{E}_{\text{null}}) = \prod_{t=T/2+1}^T (1 - G(\tilde{\lambda}_t)) \ge \prod_{t=T/2+1}^T (1 - L\tilde{\lambda}_t)_+.
\end{equation}
If the algorithm misses all signals in Phase 2, the number of false negatives is exactly $M_T = T/2$. Therefore, the unconditional expected number of false negatives is strictly lower-bounded by:
\begin{equation}
\mathbb{E}[M_T] \ge \frac{T}{2} \cdot \mathbb{P}(\mathcal{E}_{\text{null}}) \cdot \prod_{t=T/2+1}^T (1 - L\tilde{\lambda}_t)_+ \ge \frac{T}{2} (1-C) \prod_{t=T/2+1}^T (1 - L\tilde{\lambda}_t)_+.
\end{equation}
Because the sequence $\tilde{\lambda}_t$ is summable ($\sum_{t=1}^\infty \tilde{\lambda}_t \le C$), we inherently have $\tilde{\lambda}_t \to 0$. For sufficiently large $T$, $L\tilde{\lambda}_t < 1$ holds for all $t > T/2$, ensuring the infinite product $\prod_{t=T/2+1}^\infty (1 - L\tilde{\lambda}_t)_+$ converges to a strictly positive constant $c > 0$. Consequently, $\mathbb{E}[M_T] \ge \frac{T}{2} (1-C) c = \Omega(T)$, establishing the inevitable linear false-negative penalty.
\end{proof}

\subsection{Regret Analysis for Canonical Deterministic Baselines}
\label{A-3}

To concretely instantiate the linear regret barrier, we rigorously analyze LOND, LORD, and SAFFRON. We adopt the identical bursty environment ($T_0 = T/2$) and condition on the event $\mathcal{E}_{\text{null}}$ (zero discoveries in the pure-null phase), where $\mathbb{P}(\mathcal{E}_{\text{null}}) \ge 1 - \alpha$.

A foundational structural property shared by these FDR baselines is the \textbf{cumulative wealth constraint}: their total allocated threshold sum over the horizon is strictly bounded by their total discoveries, formalized as $\sum_{t=1}^T \lambda_t \le \alpha(R_T + 1)$. This constraint dictates their performance. We demonstrate the detailed derivation using LOND; LORD and SAFFRON follow identical logic via their respective wealth boundaries.

\subsubsection{Derivation for LOND}
The LOND algorithm computes thresholds as $\lambda_t^{\text{LOND}} = \alpha \cdot \gamma_t \cdot \max(1, R_{t-1})$. Because $\sum_{t=1}^\infty \gamma_t \le 1$, the cumulative threshold sum structurally satisfies $\sum_{t=1}^T \lambda_t^{\text{LOND}} \le \alpha (R_T + 1)$.
Conditioned on $\mathcal{E}_{\text{null}}$, all genuine discoveries must originate from the bursty phase ($R_T = S_T$). The conditionally expected number of true discoveries is strictly upper-bounded by the Lipschitz condition:
\begin{equation}
 \mathbb{E}[S_T \mid \mathcal{E}_{\text{null}}] \le \mathbb{E} \left[ \sum_{t=T/2+1}^T L \lambda_t^{\text{LOND}} \Biggm| \mathcal{E}_{\text{null}} \right] \le L \alpha \left( \mathbb{E}[S_T \mid \mathcal{E}_{\text{null}}] + 1 \right).
\end{equation}

To rigorously evaluate the false negative penalty without algebraic vulnerability, we split the analysis into two mutually exhaustive cases based on the signal strength parameter $L\alpha$:

\textbf{Case 1: $L\alpha < 1$ (Standard / Weak Signal Regime).} Algebraically isolating the expected discoveries from the inequality above yields:
\begin{equation}
 \mathbb{E}[S_T \mid \mathcal{E}_{\text{null}}] \le \frac{L \alpha}{1 - L \alpha} = \mathcal{O}(1).
\end{equation}
Consequently, $\mathbb{E}[M_T^{\text{LOND}} \mid \mathcal{E}_{\text{null}}] = T/2 - \mathbb{E}[S_T \mid \mathcal{E}_{\text{null}}] \ge T/2 - \mathcal{O}(1)$. The unconditional expected false-negative penalty scales as $\mathbb{E}[M_T^{\text{LOND}}] \ge (1 - \alpha)(T/2 - \mathcal{O}(1)) = \Omega(T)$.

\textbf{Case 2: $L\alpha \ge 1$ (Stronger Signal / Relaxed Lipschitz Regime).}
If $L\alpha \ge 1$, the algebraic bound diverges. However, in this regime, we revert to the threshold decay trajectory analysis identical to Theorem \ref{thm:fdr_barrier}. Conditioned on $\mathcal{E}_{\text{null}}$, the threshold follows $\lambda_t^{\text{LOND}} = \alpha\gamma_t$ until the first discovery. The conditional probability of missing \textit{all} signals in Phase 2 is:
\begin{equation}
 \mathbb{P}(\text{miss all in Phase 2} \mid \mathcal{E}_{\text{null}}) \ge \prod_{t=T/2+1}^T (1 - L\alpha\gamma_t)_+.
\end{equation}
Because $\sum \gamma_t \le 1$, we have $\gamma_t \to 0$. Thus, $L\alpha\gamma_t < 1$ for sufficiently large $t$, guaranteeing the infinite product converges to a strict positive constant $c_{LOND} > 0$. The expected misses are lower-bounded by $\frac{T}{2} \cdot \mathbb{P}(\mathcal{E}_{\text{null}}) \cdot c_{LOND} = \Omega(T)$.

\subsubsection{Derivation for LORD and SAFFRON}
For LORD, the total threshold sum is bounded by initial wealth and step-wise replenishments: $\sum_{t=1}^T \lambda_t^{\text{LORD}} \le W_0 + \sum_{j=1}^{R_T} b_j \le \alpha(R_T + 1)$. Similarly, SAFFRON's adaptive candidate mechanism cannot violate its total wealth allocation: $\sum_{t=1}^T \lambda_t^{\text{SAFFRON}} \le \frac{W_0}{1-\lambda} + \sum_{j=1}^{R_T} \frac{b_j}{1-\lambda} \le \alpha (R_T + 1)$.

Because both advanced procedures obey the identical global structural bound $\sum \lambda_t \le \alpha(S_T + 1)$ conditioned on $\mathcal{E}_{\text{null}}$, the identical Case Split logic applies. In Case 1 ($L\alpha < 1$), their discoveries are capped at $\mathcal{O}(1)$. In Case 2 ($L\alpha \ge 1$), their thresholds initially decay sequentially, yielding a strictly positive infinite product of miss probabilities. Thus, both algorithms inevitably suffer the identical linear regret $\Omega(bT)$.

\subsection{Proof of Corollary \ref{cor:impossibility} (Impossibility of Simultaneous Sublinear Regrets)}
\label{A-4}

\begin{proof}
To align with the formulation in Corollary \ref{cor:impossibility}, we prove a stronger proposition: no algorithm can achieve simultaneous sublinear regrets in a standard stationary mixed environment. This naturally implies the existence of an adversarial environment for any given deterministic algorithm. We proceed by contradiction. Suppose an online algorithm simultaneously achieves $\mathbb{E}[V_T] = o(T)$ and $\mathbb{E}[M_T] = o(T)$.

Consider a stationary mixed environment where $Y_t \sim \text{Bernoulli}(1-\pi)$ for $\pi \in (0,1)$. The expected false positives and true discoveries are structurally bound to the algorithm's threshold sequence:
\begin{equation}
 \mathbb{E}[V_T] = \sum_{t=1}^T \mathbb{E} [ \mathbb{P}(Y_t=0) \cdot \mathbb{P}(p_t \le \lambda_t \mid Y_t=0, \mathcal{F}_{t-1}) ] = \pi \cdot \mathbb{E} \left[ \sum_{t=1}^T \lambda_t \right].
\end{equation}
\begin{equation}
 \mathbb{E}[S_T] \le \sum_{t=1}^T \mathbb{E} [ \mathbb{P}(Y_t=1) \cdot L\lambda_t ] = (1-\pi) L \cdot \mathbb{E} \left[ \sum_{t=1}^T \lambda_t \right].
\end{equation}
Combining these establishes a rigid proportional coupling: $\mathbb{E}[V_T] \ge \frac{\pi}{(1-\pi)L} \mathbb{E}[S_T]$.

If the algorithm achieves sublinear false negatives ($\mathbb{E}[M_T] = o(T)$), it must capture almost all alternative signals. Since $S_T + M_T = \sum_{t=1}^T \mathbf{1}\{Y_t=1\}$, taking expectations enforces $\mathbb{E}[S_T] = (1-\pi)T - o(T) = \Theta(T)$.
Substituting this into the coupling inequality yields $\mathbb{E}[V_T] = \Omega(T)$, which directly contradicts the assumption $\mathbb{E}[V_T] = o(T)$. Thus, the worst-case weighted regret is inherently $\Omega(T)$.
\end{proof}

\section{Deferred Proofs and Derivations for Section \ref{sec:algorithm}}
\label{B}

In this appendix, we provide the rigorous theoretical underpinnings for the Decoupled-OMT (DOMT) framework. We first establish the regularity conditions required for asymptotic safety and regret gap analysis, followed by the complete proofs for Theorems \ref{thm:fdr_control}, \ref{thm:finite_fdp}, \ref{thm:regret_gap}, and Corollary \ref{cor:regret_asymptotic}.

\subsection{Regularity Conditions}
\label{B-1}

To ensure the analytical tractability of the stochastic perturbations and to guarantee that the FDR inflation remains bounded, we impose the following regularity conditions on the testing environment and the algorithm's internal states.

\begin{itemize}
    \item 
    \textbf{Condition 1 (Uniformity of Nulls).} Strictly following the problem setup in Section \ref{sec:impossibility}, we assume the true null $p$-values follow a standard uniform distribution, i.e., $p_t \sim U[0,1]$ for $Y_t=0$.
    This is a standard assumption ensuring that null $p$-values do not cluster near zero.
    
    \item \textbf{Condition 2 (Lipschitz Continuity of the Environment):} The alternative $p$-value distribution $G$ is $L$-Lipschitz continuous, such that $G(x) \le Lx$ for some constant $L \ge 1$. This ensures that the detection power does not diverge infinitely at any single threshold point, allowing for smooth regret quantification.
    
    \item \textbf{Condition 3 (Non-vanishing Discovery Rate):} We assume the base procedure maintains a non-vanishing discovery rate, i.e., there exists a constant $\eta > 0$ such that $\liminf_{T \to \infty} \frac{R_T^{\text{base}}}{T} \ge \eta$ almost surely. This condition ensures that the denominator in the FDR (the total number of rejections) grows linearly with $T$, allowing the sublinear noise injected by DOMT to be asymptotically diluted.
    
    \item \textbf{Condition 4 (Perturbation Magnitude):} The exploration amplitude $\epsilon_t = \kappa \alpha / \sqrt{t}$ decays at an $\mathcal{O}(t^{-1/2})$ rate. This ensures that the cumulative expected number of additional false positives grows as $\mathcal{O}(\sqrt{T})$, which is the critical order required to achieve sublinear regret reduction while preserving safety.
\end{itemize}

\textbf{Remark:} Condition 3 is primarily required for the asymptotic FDR control proof (Theorem \ref{thm:fdr_control}). In extreme adversarial cold-starts where the base procedure's discovery rate vanishes ($\eta \to 0$), DOMT transitions into a deliberate exploratory trade-off regime. In this phase, the absolute magnitude of its error inflation—rather than the transient FDP—is rigorously bounded by the finite-sample guarantees in Theorem \ref{thm:finite_fdp}.

\subsection{Proof of Theorem \ref{thm:fdr_control} (Asymptotic FDR Control of DOMT)}
\label{B-2}

\begin{proof}
The objective is to prove that the Decoupled-OMT (DOMT) framework strictly inherits the asymptotic FDR control of its base procedure, i.e., $\limsup_{T \to \infty} \text{FDR}_T^{\text{DOMT}} \le \alpha$, provided the base procedure satisfies $\limsup_{T \to \infty} \text{FDR}_T^{\text{base}} \le \alpha$.

Let $\delta_t^{\text{base}} = \mathbf{1}\{p_t \le \lambda_t^{\text{base}}\}$ and $\delta_t = \mathbf{1}\{p_t \le \lambda_t\}$ be the binary rejection decisions of the base procedure and DOMT, respectively. We define the \textit{decision discrepancy} at round $t$ as $D_t = \delta_t - \delta_t^{\text{base}}$. Since the stochastic perturbation $\xi_t$ is strictly non-negative, $\lambda_t \ge \lambda_t^{\text{base}}$ almost surely, which implies $D_t \in \{0, 1\}$.

\textbf{Step 1: Bounding the Expected Discrepancy.} 
For any round $t$, the expected discrepancy conditioned on the historical filtration $\mathcal{F}_{t-1}$ is:
\begin{equation}
    \mathbb{E}[D_t \mid \mathcal{F}_{t-1}] = \mathbb{P}(\lambda_t^{\text{base}} < p_t \le \lambda_t^{\text{base}} + \xi_t \mid \mathcal{F}_{t-1}).
\end{equation}
Because true null $p$-values strictly follow a standard uniform distribution $U[0,1]$ (exact uniform density $= 1$) and alternative $p$-values satisfy the global Lipschitz condition (which strictly bounds the probability mass of any interval of length $\Delta$ by $L\Delta$), the probability of $p_t$ falling into this interval is unconditionally bounded by $\max(1, L) = L$, regardless of the underlying adversarial sequence of states $Y_t$. 
Evaluating the uniform expectation of $\xi_t$:
\begin{equation}
    \mathbb{E}[D_t \mid \mathcal{F}_{t-1}] \le L \cdot \mathbb{E}[\xi_t \mid \mathcal{F}_{t-1}] = \frac{L \kappa \alpha}{2\sqrt{t}}.
\end{equation}

\textbf{Step 2: Cumulative Difference in Rejection Sets.}
Let $\mathcal{R}_T^{\text{DOMT}}$ and $\mathcal{R}_T^{\text{base}}$ be the rejection sets at time $T$. The total difference in the number of rejections is $R_T^{\text{extra}}$. Its expectation satisfies:
\begin{equation}
    \mathbb{E}[R_T^{\text{extra}}] = \mathbb{E}[|\mathcal{R}_T^{\text{DOMT}} \setminus \mathcal{R}_T^{\text{base}}|] = \sum_{t=1}^T \mathbb{E}[D_t] \le \sum_{t=1}^T \frac{L \kappa \alpha}{2\sqrt{t}} \le L \kappa \alpha \sqrt{T}.
\end{equation}
This confirms that the additional rejections induced by DOMT grow at a strictly sublinear order of $\mathcal{O}(\sqrt{T})$.

\textbf{Step 3: Convergence of FDP.}
Recall that $\text{FDP}_T = V_T / \max(1, R_T)$. We analyze the absolute difference:
\begin{equation}
    | \text{FDP}_T^{\text{DOMT}} - \text{FDP}_T^{\text{base}} | = \left| \frac{V_T^{\text{base}} + V_T^{\text{extra}}}{\max(1, R_T^{\text{DOMT}})} - \frac{V_T^{\text{base}}}{\max(1, R_T^{\text{base}})} \right| \le \frac{V_T^{\text{extra}}}{R_T^{\text{DOMT}}} + \frac{V_T^{\text{base}} R_T^{\text{extra}}}{R_T^{\text{DOMT}} R_T^{\text{base}}},
\end{equation}
where $V_T^{\text{extra}}$ and $R_T^{\text{extra}}$ denote the additional false positives and total extra rejections, respectively. Since $V_T^{\text{base}} \le R_T^{\text{base}}$ and $V_T^{\text{extra}} \le R_T^{\text{extra}}$, the difference is absolutely bounded by $2 R_T^{\text{extra}} / R_T^{\text{base}}$. 

To establish convergence, we utilize probability limits to avoid division by zero in finite samples. By Step 2, $\mathbb{E}[R_T^{\text{extra}}] = \mathcal{O}(\sqrt{T})$. Markov's inequality ensures $R_T^{\text{extra}} / T \xrightarrow{P} 0$. Simultaneously, Condition 3 (Non-vanishing Discovery Rate) ensures $\liminf_{T \to \infty} R_T^{\text{base}}/T \ge \eta > 0$ almost surely, implying $(R_T^{\text{base}}/T)^{-1} = \mathcal{O}_P(1)$. Thus, by Slutsky's Theorem, the ratio converges:
\begin{equation}
    \frac{R_T^{\text{extra}}}{R_T^{\text{base}}} = \left( \frac{R_T^{\text{extra}}}{T} \right) \cdot \left( \frac{R_T^{\text{base}}}{T} \right)^{-1} \xrightarrow{P} 0.
\end{equation}
Consequently, the absolute difference $| \text{FDP}_T^{\text{DOMT}} - \text{FDP}_T^{\text{base}} | \xrightarrow{P} 0$.

\textbf{Step 4: Conclusion via Dominated Convergence.}
Since empirical FDP is deterministically bounded in $[0, 1]$, convergence in probability implies convergence in expected value (Dominated Convergence Theorem):
\begin{equation}
    \lim_{T \to \infty} \left( \mathbb{E}[\text{FDP}_T^{\text{DOMT}}] - \mathbb{E}[\text{FDP}_T^{\text{base}}] \right) = 0.
\end{equation}
Given that $\limsup_{T \to \infty} \mathbb{E}[\text{FDP}_T^{\text{base}}] \le \alpha$, it directly follows that $\limsup_{T \to \infty} \text{FDR}_T^{\text{DOMT}} \le \alpha$, concluding the proof.
\end{proof}

\subsection{Proof of Theorem \ref{thm:finite_fdp} (Finite-Sample FDP Inflation Bound)}
\label{B-3}

\begin{proof}
For the purpose of rigorous martingale analysis, we define an omniscient filtration $\mathcal{G}_t = \mathcal{F}_t \vee \sigma(Y_1, \dots, Y_t)$ which explicitly incorporates the unobserved true states. Since the true state generation and $p$-values are conditionally independent of past decisions, all derived conditional expectations hold validly, and the sequence $X_s$ constructed below forms a strict martingale difference sequence with respect to $\mathcal{G}_s$.

To quantify the graceful degradation and finite-sample safety of the DOMT framework, we establish a high-probability bound on the empirical False Discovery Proportion (FDP). 

By definition, the empirical FDP for DOMT at any step $t$ is:
\begin{equation}
    \text{FDP}_t^{\text{DOMT}} = \frac{V_t^{\text{DOMT}}}{\max(1, R_t^{\text{DOMT}})} = \frac{V_t^{\text{base}} + V_t^{\text{extra}}}{\max(1, R_t^{\text{DOMT}})},
\end{equation}
where $V_t^{\text{base}}$ is the number of false discoveries that would have been made by the unperturbed base procedure, and $V_t^{\text{extra}} \ge 0$ represents the extra false discoveries explicitly induced by the positive perturbation $\xi_s$.

Because DOMT's threshold subsumes the baseline's ($\lambda_s \ge \lambda_s^{\text{base}}$), its rejection set strictly contains the baseline's rejection set, implying $R_t^{\text{DOMT}} \ge R_t^{\text{base}}$. Consequently, the baseline component satisfies:
\begin{equation}
    \frac{V_t^{\text{base}}}{\max(1, R_t^{\text{DOMT}})} \le \frac{V_t^{\text{base}}}{\max(1, R_t^{\text{base}})} = \text{FDP}_t^{\text{base}}.
\end{equation}
Thus, the FDP inflation can be linearly bounded by the extra false positive term:
\begin{equation} \label{eq:fdp_decomposition}
    \text{FDP}_t^{\text{DOMT}} \le \text{FDP}_t^{\text{base}} + \frac{V_t^{\text{extra}}}{\max(1, R_t^{\text{DOMT}})}.
\end{equation}

Our core objective is to derive a high-probability upper bound for $V_t^{\text{extra}}$. Let $\mathcal{H}_0$ denote the set of true null hypotheses. We define the indicator of an extra pure false discovery at step $s$ as $D_s^{(0)} = \mathbf{1}\{Y_s=0, \lambda_s^{\text{base}} < p_s \le \lambda_s^{\text{base}} + \xi_s\}$. Because null $p$-values are standard super-uniform, the conditional expectation is bounded strictly by the expected perturbation amplitude without relying on the alternative Lipschitz constant $L$:
\begin{equation}
    \mathbb{E}[D_s^{(0)} \mid \mathcal{G}_{s-1}] \le \mathbb{E}[\xi_s] = \frac{\kappa \alpha}{2\sqrt{s}}.
\end{equation}

We construct a martingale to bound the cumulative sum of these extra errors. Define the centered sequence $X_s = D_s^{(0)} - \mathbb{E}[D_s^{(0)} \mid \mathcal{G}_{s-1}]$. The process $M_t = \sum_{s=1}^t X_s$ forms a martingale with respect to the omniscient filtration $\mathcal{G}_t$. Because $D_s^{(0)} \in \{0, 1\}$, the random variable $X_s$ is strictly bounded within an interval of length exactly $1$. 

Applying the Azuma-Hoeffding inequality for bounded martingale differences, for any arbitrary scalar $x > 0$:
\begin{equation}
    \mathbb{P}(M_t \ge x) \le \exp\left(-\frac{2x^2}{\sum_{s=1}^t 1^2}\right) = \exp\left(-\frac{2x^2}{t}\right).
\end{equation}
Setting the tail probability $\exp(-2x^2/t) = \delta \in (0,1)$, we obtain $x = \sqrt{\frac{t}{2} \ln(1/\delta)}$. Thus, with probability at least $1 - \delta$, the martingale is bounded by:
\begin{equation}
    M_t \le \sqrt{\frac{t}{2} \ln(1/\delta)}.
\end{equation}

The total number of extra false positives is $V_t^{\text{extra}} = \sum_{s=1}^t D_s^{(0)} = M_t + \sum_{s=1}^t \mathbb{E}[D_s^{(0)} \mid \mathcal{G}_{s-1}]$. We bound the predictable variation sum using a continuous integral approximation:
\begin{equation}
    \sum_{s=1}^t \mathbb{E}[D_s^{(0)} \mid \mathcal{G}_{s-1}] \le \sum_{s=1}^t \frac{\kappa \alpha}{2\sqrt{s}} \le \int_0^t \frac{\kappa \alpha}{2\sqrt{y}} dy = \kappa \alpha \sqrt{t}.
\end{equation}
Combining these components, we obtain a finite-sample, high-probability bound for the extra false discoveries:
\begin{equation}
    V_t^{\text{extra}} \le \kappa \alpha \sqrt{t} + \sqrt{\frac{t}{2} \ln(1/\delta)}.
\end{equation}
Substituting this upper bound back into Equation \ref{eq:fdp_decomposition} yields the final statement of Theorem \ref{thm:finite_fdp}:
\begin{equation}
    \text{FDP}_t^{\text{DOMT}} \le \text{FDP}_t^{\text{base}} + \frac{\kappa \alpha \sqrt{t} + \sqrt{\frac{t}{2} \ln(1/\delta)}}{\max(1, R_t^{\text{DOMT}})},
\end{equation}
holding with probability at least $1-\delta$. This rigorously demonstrates that even in extreme small-sample or pure-null regimes, the algorithm's FDP inflation is structurally bounded and gracefully degrades rather than diverging uncontrollably.
\end{proof}

\subsection{Proof of Theorem \ref{thm:regret_gap} and Corollary \ref{cor:regret_asymptotic} (Regret Analysis)}
\label{B-4}

\begin{proof}[Proof of Theorem \ref{thm:regret_gap}]
We aim to prove that the weighted regret gap between DOMT and any deterministic base procedure is strictly bounded by $\mathcal{O}(\sqrt{T})$, elegantly independent of the alternative distribution $L$ and the false negative penalty $b$.

The single-round difference in weighted regret, denoted by $\Delta R_t$, is:
\begin{equation}
\begin{aligned}
    \Delta R_t &= a \left( \mathbf{1}\{Y_t=0, \delta_t=1\} - \mathbf{1}\{Y_t=0, \delta_t^{\text{base}}=1\} \right) \\
    &\quad + b \left( \mathbf{1}\{Y_t=1, \delta_t=0\} - \mathbf{1}\{Y_t=1, \delta_t^{\text{base}}=0\} \right).
\end{aligned}
\end{equation}
Let $D_t = \delta_t - \delta_t^{\text{base}}$ be the non-negative decision discrepancy. Because DOMT incorporates a strictly non-negative perturbation ($\lambda_t \ge \lambda_t^{\text{base}}$), its rejection set deterministically subsumes the baseline's, implying $\delta_t \ge \delta_t^{\text{base}}$ and $D_t \in \{0, 1\}$. 

We can algebraically factor the regret difference as:
\begin{equation}
    \Delta R_t = a \cdot D_t \mathbf{1}\{Y_t=0\} - b \cdot D_t \mathbf{1}\{Y_t=1\}.
\end{equation}
Because the false negative penalty $b > 0$ and the discrepancy $D_t \ge 0$, the second term represents the recovery of true discoveries, which is strictly non-positive ($-b \cdot D_t \mathbf{1}\{Y_t=1\} \le 0$). This physically means DOMT \textit{never} incurs additional false negatives. Therefore, the single-round regret gap is unconditionally upper-bounded by the pure false positive overhead:
\begin{equation}
    \Delta R_t \le a \cdot D_t \mathbf{1}\{Y_t=0\}.
\end{equation}
Taking the expectation conditioned on $\mathcal{F}_{t-1}$:
\begin{equation}
    \mathbb{E}[\Delta R_t \mid \mathcal{F}_{t-1}] \le a \cdot \mathbb{P}(Y_t=0, \lambda_t^{\text{base}} < p_t \le \lambda_t \mid \mathcal{F}_{t-1}) \le a \cdot \mathbb{E}[\xi_t] = \frac{a \kappa \alpha}{2\sqrt{t}}.
\end{equation}
Notice that this bound is entirely independent of the alternative distribution's Lipschitz constant $L$, because the extra penalty is exclusively incurred under the null hypothesis (where the $p$-value follows a standard uniform distribution).

Summing the expected gaps over $T$ rounds:
\begin{equation}
    \mathbb{E}[\text{Regret}_T^{\text{DOMT}}] - \mathbb{E}[\text{Regret}_T^{\text{base}}] = \sum_{t=1}^T \mathbb{E}[\Delta R_t] \le \frac{a \kappa \alpha}{2} \sum_{t=1}^T \frac{1}{\sqrt{t}}.
\end{equation}
Approximating the sum with the integral $\int_0^T x^{-1/2} dx = 2\sqrt{T}$, we obtain:
\begin{equation}
    \mathbb{E}[\text{Regret}_T^{\text{DOMT}}] - \mathbb{E}[\text{Regret}_T^{\text{base}}] \le a \kappa \alpha \sqrt{T}.
\end{equation}
Setting $K = a \kappa \alpha$, the gap is exactly $K\sqrt{T} = \mathcal{O}(\sqrt{T})$. This completes the proof.
\end{proof}

\begin{proof}[Proof of Corollary \ref{cor:regret_asymptotic}]
Suppose the base procedure suffers from absolute threshold depletion such that $\mathbb{E}[\text{Regret}_T^{\text{base}}] = \Omega(T)$ (as fundamentally mandated by Theorem \ref{thm:fdr_barrier}). From Theorem \ref{thm:regret_gap}, the DOMT regret is:
\begin{equation}
    \mathbb{E}[\text{Regret}_T^{\text{DOMT}}] = \mathbb{E}[\text{Regret}_T^{\text{base}}] + \mathcal{O}(\sqrt{T}).
\end{equation}
Since $\sqrt{T}$ is of a strictly lower order than $T$, the linear term $\Omega(T)$ fundamentally dominates the asymptotic growth. Formally:
\begin{equation}
    \lim_{T \to \infty} \frac{\mathbb{E}[\text{Regret}_T^{\text{DOMT}}]}{\mathbb{E}[\text{Regret}_T^{\text{base}}]} = \lim_{T \to \infty} \left( 1 + \frac{\mathcal{O}(\sqrt{T})}{\Omega(T)} \right) = 1.
\end{equation}
Therefore, $\mathbb{E}[\text{Regret}_T^{\text{DOMT}}]$ remains $\Omega(T)$. Combined with the fact that any history-dependent procedure incurs at least linear regret (Corollary \ref{cor:impossibility}), we conclude that the exploration preserves the optimal trajectory order, yielding $\mathbb{E}[\text{Regret}_T^{\text{DOMT}}] = \Theta(T)$.
\end{proof}

\section{Deferred Proofs and Derivations for Section \ref{sec:regret_reduction}}
\label{C}

\subsection{Proof of Theorem \ref{thm:sqrt_reduction} (Order-Optimal Regret Reduction)}
\label{C-1}

\begin{proof}
Consider the bursty signal model where the environment generates a sequence of true states $Y_t$. For the initial pure-null phase $1 \le t \le T_0$ (where $T_0 = \rho T$ for a constant delay ratio $\rho \in (0,1)$), $Y_t = 0$ deterministically. For the subsequent bursty phase $T_0 < t \le T$, the state $Y_t = 1$ occurs with a non-vanishing probability $1-\pi > 0$. 

Let $\Delta M_T = \mathbb{E}[M_T^{\text{base}} - M_T^{\text{DOMT}}]$ be the expected recovery of false negatives, and $\Delta V_T = \mathbb{E}[V_T^{\text{DOMT}} - V_T^{\text{base}}]$ be the expected inflation of false positives. The net weighted regret reduction achieved by DOMT is:
\begin{equation}
    \Delta \text{Regret}_T = b \cdot \Delta M_T - a \cdot \Delta V_T.
\end{equation}

\textbf{Step 1: Lower Bounding the False Negative Recovery ($\Delta M_T$).}
In the first phase ($t \le T_0$), the deterministic base procedure encounters zero alternative signals. Consequently, its threshold sequence inevitably depletes. Entering the second phase ($t > T_0$), its testing thresholds are decaying rapidly. To rigorously lower-bound the false negative recovery, we restrict the existential bursty configuration to the weak-signal regime ($L\alpha < 1$). Under this specific configuration, as analyzed in Appendix \ref{A-3} Case 1, the cumulative wealth constraint inherently caps the baseline's expected total true discoveries across the entire horizon:
\begin{equation}
    \mathbb{E}[S_T^{\text{base}}] = \mathcal{O}(1).
\end{equation}

Conversely, DOMT injects a strictly non-negative exploration perturbation $\xi_t \sim \text{Uniform}[0, \epsilon_t]$, where $\epsilon_t = \kappa \alpha / \sqrt{t}$. Thus, the DOMT threshold inherently satisfies $\lambda_t = \min(1, \lambda_t^{\text{base}} + \xi_t) \ge \min(1, \xi_t)$. 
Applying the minimum detectability condition $G(x) \ge \mu x$ near the origin (which naturally holds for sufficiently large $t$ as $\xi_t \to 0$), the expected true discoveries generated by DOMT satisfy:
\begin{equation}
    \mathbb{E}[S_T^{\text{DOMT}}] \ge \sum_{t=T_0+1}^T \mathbb{P}(Y_t=1) \mathbb{E}[G(\xi_t)] \ge \sum_{t=T_0+1}^T (1-\pi) \mu \mathbb{E}[\xi_t] = \sum_{t=T_0+1}^T (1-\pi) \frac{\mu \kappa \alpha}{2\sqrt{t}}.
\end{equation}

Subtracting the baseline's discoveries from DOMT's isolates the net recovery $\Delta M_T = \mathbb{E}[S_T^{\text{DOMT}}] - \mathbb{E}[S_T^{\text{base}}]$. Bounding the summation via the continuous integral $\int t^{-1/2} dt = 2\sqrt{t}$:
\begin{equation} \label{eq:delta_m_bound}
    \Delta M_T \ge (1-\pi) \mu \kappa \alpha (\sqrt{T} - \sqrt{T_0}) - \mathcal{O}(1) = (1-\pi)\mu (1-\sqrt{\rho}) \kappa \alpha \sqrt{T} - \mathcal{O}(1).
\end{equation}
Defining $C_1 = (1-\pi)\mu (1-\sqrt{\rho}) > 0$, the recovery is bounded by $\Delta M_T \ge C_1 \kappa \alpha \sqrt{T} - \mathcal{O}(1)$.

\textbf{Step 2: Upper Bounding the False Positive Penalty ($\Delta V_T$).}
As established in Appendix \ref{B-3}, utilizing the omniscient filtration $\mathcal{G}_t$, the single-round extra false positive indicator $D_t^{(0)}$ is conditionally bounded by $\mathbb{E}[D_t^{(0)} \mid \mathcal{G}_{t-1}] \le \mathbb{E}[\xi_t]$. Integrating this over the entire horizon across both phases:
\begin{equation}
\begin{aligned}
    \Delta V_T &\le \sum_{t=1}^{T_0} 1 \cdot \frac{\kappa \alpha}{2\sqrt{t}} + \sum_{t=T_0+1}^T \pi \cdot \frac{\kappa \alpha}{2\sqrt{t}} \\
    &\le \kappa \alpha \sqrt{T_0} + \pi \kappa \alpha (\sqrt{T} - \sqrt{T_0}) + \mathcal{O}(1) \\
    &= \kappa \alpha \sqrt{T} \left( \sqrt{\rho} + \pi(1-\sqrt{\rho}) \right) + \mathcal{O}(1).
\end{aligned}
\end{equation}
Let $C_2 = \sqrt{\rho} + \pi(1-\sqrt{\rho}) > 0$. We establish the upper bound: $\Delta V_T \le C_2 \kappa \alpha \sqrt{T} + \mathcal{O}(1)$.

\textbf{Step 3: Establishing the Order-Optimal Reduction.}
Synthesizing the precise bounds from Step 1 and Step 2, the net regret reduction is factored as:
\begin{equation}
    \Delta \text{Regret}_T \ge b (C_1 \kappa \alpha \sqrt{T} - \mathcal{O}(1)) - a (C_2 \kappa \alpha \sqrt{T} + \mathcal{O}(1)) = \kappa \alpha (b C_1 - a C_2) \sqrt{T} - \mathcal{O}(a+b).
\end{equation}
We formally define the critical penalty threshold as $M^* := C_2 / C_1$. 
Provided that the asymmetric weight ratio satisfies $b/a > M^*$, it ensures that the core linear coefficient $(b C_1 - a C_2) > 0$. Because $\sqrt{T}$ asymptotically dominates the constant $\mathcal{O}(a+b)$ as $T \to \infty$, we have:
\begin{equation}
    \Delta \text{Regret}_T = \Omega(\kappa \alpha \sqrt{T}) = \Omega(\sqrt{T}).
\end{equation}
This mathematically confirms that DOMT effectively counters the deterministic threshold depletion trap, extracting an order-optimal macroscopic regret reduction in adversarial bursty environments.
\end{proof}

\subsection{Derivation of the ``Cold-Start Tax''}
\label{C-2}

In the main text Remark on the ``Cold-Start Tax'', we introduced a closed-form solution for the critical weight ratio $M^*(\rho)$, which dictates the phase transition of DOMT dominance. Here, we provide the full algebraic derivation by synthesizing the marginal error rates across the non-stationary phases.

\textbf{Step 1: Recapitulating the Marginal Rates.} 
From the proof of Theorem \ref{thm:sqrt_reduction}, the expected false positive inflation ($\Delta V_T$) and the false negative recovery ($\Delta M_T$) are governed by the exploration amplitude $\xi_t \sim \text{Uniform}[0, \kappa\alpha/\sqrt{t}]$. Using continuous integral approximations for the asymptotic leading terms:
\begin{enumerate}
    \item \textbf{Exploration Cost ($\Delta V_T$):} In the first phase ($t \le \rho T$), the environment is purely null, incurring a marginal penalty rate of $1 \cdot \mathbb{E}[\xi_t]$. In the second phase ($t > \rho T$), nulls occur with probability $\pi$, incurring a rate of $\pi \mathbb{E}[\xi_t]$.
    \begin{equation}
        \Delta V_T \simeq \int_{0}^{\rho T} \frac{\kappa\alpha}{2\sqrt{t}} dt + \int_{\rho T}^{T} \pi \frac{\kappa\alpha}{2\sqrt{t}} dt = \kappa\alpha \sqrt{T} \left( \sqrt{\rho} + \pi(1-\sqrt{\rho}) \right).
    \end{equation}
    \item \textbf{Discovery Dividend ($\Delta M_T$):} Signals ($Y_t=1$) occur only in the second phase with probability $1-\pi$. Given the detectability $\mu$, the marginal recovery rate is $(1-\pi)\mu \mathbb{E}[\xi_t]$.
    \begin{equation}
        \Delta M_T \simeq \int_{\rho T}^{T} (1-\pi) \mu \frac{\kappa\alpha}{2\sqrt{t}} dt = (1-\pi) \mu \kappa\alpha \sqrt{T} (1-\sqrt{\rho}).
    \end{equation}
\end{enumerate}

\textbf{Step 2: Solving for the Critical Threshold $M^*$.}
The DOMT framework achieves a net reduction in weighted regret if $b \Delta M_T > a \Delta V_T$, or equivalently, if the weight ratio $M = b/a$ satisfies $M > \Delta V_T / \Delta M_T$. Substituting the integral results (where the algorithmic parameter $\kappa \alpha \sqrt{T}$ perfectly cancels out):
\begin{equation}
    M^*(\rho) = \frac{\sqrt{\rho} + \pi(1-\sqrt{\rho})}{(1-\pi)\mu(1-\sqrt{\rho})}.
\end{equation}

\textbf{Step 3: Algebraic Decomposition into the ``Cold-Start Tax''.}
To isolate the impact of the adversarial delay $\rho$, we split the numerator algebraically ($\sqrt{\rho} = \pi\sqrt{\rho} + (1-\pi)\sqrt{\rho}$):
\begin{equation}
\begin{aligned}
    M^*(\rho) &= \frac{\pi(1-\sqrt{\rho})}{(1-\pi)\mu(1-\sqrt{\rho})} + \frac{\sqrt{\rho}}{(1-\pi)\mu(1-\sqrt{\rho})} \\
    &= \underbrace{\frac{\pi}{\mu(1-\pi)}}_{\text{Stationary Difficulty}} + \underbrace{\frac{1}{\mu(1-\pi)} \cdot \frac{\sqrt{\rho}}{1-\sqrt{\rho}}}_{\text{Cold-Start Tax}}.
\end{aligned}
\end{equation}
The first term represents the inherent difficulty of the signal distribution in a strictly stationary environment ($\rho=0$). The second term, defined as the \textit{Cold-Start Tax}, explicitly quantifies the additional penalty required to justify exploration as the burst delay $\rho \to 1$.

\subsection{Proof of Theorem \ref{thm:weight_amplification} (Weight-Sensitive Advantage)}
\label{C-3}

\begin{proof}
The objective is to formalize the linear amplification of the regret reduction with respect to the asymmetric penalty ratio $M = b/a$. We analyze the regret discrepancy in highly asymmetric domains (e.g., medical screening) where missing a true signal is substantially more penalized than a false alarm ($b \gg a$, hence $M \gg 1$).

Recall the net weighted regret reduction decomposition:
\begin{equation}
    \Delta \text{Regret}_T = b \cdot \Delta M_T - a \cdot \Delta V_T.
\end{equation}
Setting $b = aM$, we parameterize the regret discrepancy purely in terms of the false positive penalty $a$ and the asymmetry ratio $M$:
\begin{equation} \label{eq:regret_M_parameterized}
    \Delta \text{Regret}_T(a, aM) = aM \cdot \Delta M_T - a \cdot \Delta V_T.
\end{equation}

From the rigorous derivation in Appendix \ref{C-1}, the expected false negative recovery is bounded below by $\Delta M_T \ge C_1 \kappa \alpha \sqrt{T} - \mathcal{O}(1)$, and the expected false positive penalty is bounded above by $\Delta V_T \le C_2 \kappa \alpha \sqrt{T} + \mathcal{O}(1)$. Here, $C_1$ and $C_2$ are strictly positive, environment-dependent constants that are fundamentally independent of the penalty weights $a$ and $b$.

Substituting these precision-scaled bounds into Equation \ref{eq:regret_M_parameterized} yields:
\begin{equation}
\begin{aligned}
    \Delta \text{Regret}_T(a, aM) &\ge aM \left( C_1 \kappa \alpha \sqrt{T} - \mathcal{O}(1) \right) - a \left( C_2 \kappa \alpha \sqrt{T} + \mathcal{O}(1) \right) \\
    &= a \kappa \alpha \sqrt{T} (M C_1 - C_2) - \mathcal{O}(aM).
\end{aligned}
\end{equation}

We evaluate the asymptotic behavior of this reduction with respect to the asymmetry ratio $M$ and horizon $T$. Since $C_1 > 0$ and $C_2 > 0$ are fixed constants for a given environment, the linear coefficient $(M C_1 - C_2)$ strictly grows as $\Theta(M)$ for sufficiently large $M$ (specifically, for any $M > C_2/C_1$).

Furthermore, as $T \to \infty$, the $\sqrt{T}$ term asymptotically dominates the $\mathcal{O}(aM)$ residual constant. Consequently, the aggregate advantage scales dynamically as:
\begin{equation}
    \Delta \text{Regret}_T(a, aM) = \Omega(a M \kappa \alpha \sqrt{T}).
\end{equation}
This mathematically confirms the weight-sensitive advantage: the bounded $\mathcal{O}(a \sqrt{T})$ stochastic exploration tax effectively secures a macroscopic reduction in false negatives, yielding a net discovery dividend that scales linearly with the signal importance $M$.
\end{proof}

\subsection{Extended Discussion: The Fallacy of $\mathcal{O}(\log T)$ Exploration}
\label{C-4}

In the main text Remark on the Fallacy of $\mathcal{O}(\log T)$ Exploration, we stated that attempting to compress the exploration penalty to $\mathcal{O}(\log T)$ by employing a steeper decay rate (e.g., $\epsilon_t \propto t^{-1}$) is a structural fallacy. Here, we formally prove that such conservative exploration fails to extract a macroscopic $\Omega(\sqrt{T})$ regret reduction, thereby establishing the $\Theta(t^{-1/2})$ decay rate as the optimal allocation to counter deterministic threshold depletion during non-stationary bursts.

Suppose we replace the DOMT exploration amplitude with a generalized polynomial decay sequence: $\epsilon_t = c \cdot t^{-\gamma}$, where $c > 0$ and the decay exponent $\gamma \in (1/2, 1]$. To minimize the exploration tax to a logarithmic order, one must set $\gamma = 1$. 

\textbf{Step 1: The Asymptotic Exploration Tax.}
Because null $p$-values are standard uniform, the expected single-round extra false positive penalty is independent of the alternative distribution. The cumulative expected exploration tax over $T$ rounds is bounded by the integral of the amplitude:
\begin{equation}
    \Delta V_T \le \sum_{t=1}^T \mathbb{E}[\xi_t] = \sum_{t=1}^T \frac{c}{2t^\gamma}.
\end{equation}
For $\gamma = 1$, the harmonic series dictates that $\Delta V_T = \mathcal{O}(\log T)$. From a pure false-positive perspective, this logarithmic tax appears highly attractive.

\textbf{Step 2: The Structural Chokehold on Discovery Recovery.}
However, the symmetric consequence of a steeper decay rate is the absolute truncation of discovery capability. The extra recovered signals achieved by the modified algorithm correspond exactly to the alternative $p$-values falling into the perturbation margin $(\lambda_t^{\text{base}}, \lambda_t^{\text{base}} + \xi_t]$. 

Crucially, by the global Lipschitz condition $G(x) - G(y) \le L(x-y)$, the probability of a signal falling into this margin is unconditionally upper-bounded by $L\xi_t$, completely irrespective of the location or magnitude of the baseline threshold $\lambda_t^{\text{base}}$.

Therefore, the total possible false negative recovery is strictly choked by the exploration amplitude itself:
\begin{equation}
    \Delta M_T \le \sum_{t=T_0+1}^T \mathbb{P}(Y_t=1) \cdot \mathbb{E}[ L\xi_t ] \le \frac{L c}{2} \sum_{t=1}^T t^{-\gamma}.
\end{equation}
For the purportedly ``efficient'' rate $\gamma = 1$, the maximum achievable signal recovery is fundamentally bottlenecked at $\Delta M_T \le \mathcal{O}(\log T)$.

\textbf{Step 3: Failure to Extract Macroscopic Dividends.}
According to Theorem \ref{thm:fdr_barrier}, the pure baseline algorithm inherently misses a macroscopic fraction of the bursty signals, suffering a massive false negative penalty of $\mathbb{E}[M_T^{\text{base}}] = \Omega(T)$. The net Weighted Regret of the logarithmically-modified algorithm evaluates to:
\begin{equation}
\begin{aligned}
    \mathbb{E}[\text{Regret}_T^{\text{modified}}] &= \mathbb{E}[\text{Regret}_T^{\text{base}}] - b \cdot \Delta M_T + a \cdot \Delta V_T \\
    &\ge \Omega(T) - \mathcal{O}(\log T) + \mathcal{O}(\log T) \\
    &= \Omega(T).
\end{aligned}
\end{equation}
Because the $\mathcal{O}(\log T)$ recovery is asymptotically negligible, it fails to provide a substantive countermeasure to the massive $\Omega(T)$ baseline deficit. Unlike the DOMT framework, which secures an $\Omega(\sqrt{T})$ polynomial dividend, the logarithmic modification yields a vanishingly small absolute regret reduction. Consequently, such conservative exploration is structurally inadequate to meaningfully mitigate the deterministic threshold depletion trap.

\textbf{Conclusion.}
This establishes a rigid physical constraint: the magnitude of signal recovery is structurally bound to the magnitude of exploration risk. To achieve a macroscopic $\Omega(\sqrt{T})$ reduction in regret, the system must deploy an $\Omega(\sqrt{T})$ exploration mass. Consequently, the $\Theta(t^{-1/2})$ decay rate utilized by DOMT is not an arbitrary heuristic, but the critical structural balance required to extract an order-optimal non-stationary dividend while rigorously preserving asymptotic safety.

\section{Theoretical Extensions from Empirical Observations}
\label{D}

In this section, we provide formal theoretical justifications for the empirical phenomena observed in the main text. Specifically, we formalize the mechanism by which the DOMT framework avoids ``wealth death'' during sparse signal periods and establish its structural compatibility with modern $e$-value procedures.

\subsection{Robustness to Local Mini-Droughts}
\label{D-1}

As discussed in the main text, online multiple testing procedures often face local ``mini-droughts''—extended periods where only pure noise arrives. During these periods, deterministic base algorithms (such as LORD or SAFFRON) suffer from rapid threshold decay due to continuous wealth depletion. The DOMT framework bridges these micro-depletions through its stochastic envelope. We formalize this localized advantage in Lemma \ref{lemma:drought_ratio}.

\begin{lemma}[Threshold Ratio in Mini-Droughts]
\label{lemma:drought_ratio}
Let $t_0$ be the time step of the last rejected hypothesis. Suppose a ``mini-drought'' of length $k$ occurs such that no hypotheses are rejected for $t \in (t_0, t_0 + k]$. Assuming the base algorithm dictates a monotonically decreasing sequence $\gamma_k = \Theta\left(\frac{1}{k \log^2 k}\right)$ based on the relative time elapsed since the last discovery, the expected threshold of the DOMT-enhanced algorithm, $\mathbb{E}[\lambda_t^{\text{DOMT}}]$, maintains a relative advantage over the deterministic base threshold $\lambda_t^{\text{base}}$ that asymptotically diverges as the drought length $k$ grows:
\[
\frac{\mathbb{E}[\lambda_t^{\text{DOMT}}]}{\lambda_t^{\text{base}}} = 1 + \Omega\left(\frac{k \log^2 k}{\sqrt{t_0 + k}}\right), \quad \text{for } t = t_0 + k.
\]
Crucially, if the drought is macroscopic and dominates the testing horizon ($k = \Theta(t)$), this relative advantage diverges at a rate of $\Omega(\sqrt{t} \log^2 t)$.
\end{lemma}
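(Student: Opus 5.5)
The plan is to compute $\mathbb{E}[\lambda_t^{\text{DOMT}}]$ directly from the definition $\lambda_t = \min(1, \lambda_t^{\text{base}} + \xi_t)$ and divide by $\lambda_t^{\text{base}}$. The key structural fact is the causal decoupling of Algorithm \ref{alg:dbh}: $\lambda_t^{\text{base}}$ is $\mathcal{F}_{t-1}$-measurable and is computed from the virtual state only. The perturbation $Z_t$ is independent of $\mathcal{F}_{t-1}$. So, conditionally on the history (in particular, on the drought event with last virtual rejection at $t_0$), $\lambda_t^{\text{base}}$ is a fixed number and the only randomness left is $\xi_t = \epsilon_t Z_t$ with $\epsilon_t = \kappa\alpha/\sqrt{t}$.

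First, I dispose of the clipping. Since $\lambda_t^{\text{base}} \to 0$ during the drought and $\epsilon_t \to 0$, for $t = t_0+k$ large enough we have $\lambda_t^{\text{base}} + \epsilon_t \le 1$, so the min is inactive. Then
\[
\mathbb{E}[\lambda_t^{\text{DOMT}} \mid \mathcal{F}_{t-1}] = \lambda_t^{\text{base}} + \frac{\kappa\alpha}{2\sqrt{t}} .
\]
For small $t$ I would instead use $\min(1,x+y)\ge x+\min(1-x,y)$, which only changes constants. This gives the exact identity
\[
\frac{\mathbb{E}[\lambda_t^{\text{DOMT}}]}{\lambda_t^{\text{base}}} = 1 + \frac{\kappa\alpha}{2\sqrt{t_0+k}\,\lambda_t^{\text{base}}}.
\]

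Second, I bound $\lambda_t^{\text{base}}$ from above during the drought. Under the lemma's hypothesis, the base threshold at $t = t_0+k$ is the wealth carried from the last rejection times the spending sequence indexed by the time since that rejection. For LORD this is $\lambda_t^{\text{base}} \le W_0\gamma_t + \sum_j b_j \gamma_{t-\tau_j}$, with dominant term $\propto \gamma_k$. The multiplier is bounded by $\alpha(R_{t_0}+1)$ via the cumulative wealth constraint of Appendix \ref{A-3}; for a fixed history up to $t_0$ this is a constant $c_{t_0}$ not depending on $k$. Hence
\[
\lambda_t^{\text{base}} \le c_{t_0}\gamma_k = O\!\left(\frac{1}{k\log^2 k}\right).
\]
Inverting gives $1/\lambda_t^{\text{base}} = \Omega(k\log^2 k)$, and substituting into the identity yields the claimed $1+\Omega\!\left(k\log^2 k/\sqrt{t_0+k}\right)$.

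Third, the macroscopic case follows by substituting $k = \Theta(t)$, so $t_0+k = \Theta(t)$ and $\log k = \log t + O(1)$. The ratio then becomes $\Omega(t\log^2 t/\sqrt{t}) = \Omega(\sqrt{t}\log^2 t)$.

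The main obstacle is the second step: making the upper bound $\lambda_t^{\text{base}} = O(\gamma_k)$ uniform. For LORD-type rules the threshold is a sum of $\gamma$ terms shifted by every past rejection time $\tau_j \le t_0$, not just the last one. I would handle this by noting that $\gamma$ is monotone decreasing and $t-\tau_j \ge k$, so each term is at most $\gamma_k$. The number of terms, and hence the constant, is then controlled by $R_{t_0}$, which is fixed as $k$ grows. I would state the dependence of the constant on the pre-drought history explicitly, and take unconditional expectations only after conditioning on that history.
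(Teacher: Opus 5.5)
Your proposal is correct and follows the paper's argument: compute $\mathbb{E}[\lambda_t^{\text{DOMT}}] = \lambda_t^{\text{base}} + \kappa\alpha/(2\sqrt{t_0+k})$, divide by $\lambda_t^{\text{base}} = O(\gamma_k)$, and substitute $k=\Theta(t)$ for the macroscopic case. You are more careful than the paper, which reads $\lambda_t^{\text{base}}=\Theta(\gamma_k)$ straight off the hypothesis and ignores clipping: you note that only the upper bound on $\lambda_t^{\text{base}}$ is needed, and you make its constant's dependence on $R_{t_0}$ explicit. That caveat genuinely matters, because for LORD a pre-drought history with $R_{t_0}=\Theta(t_0)$ and $t_0\asymp k$ gives $\lambda_t^{\text{base}}\asymp 1/\log^2 k$, and the uniform $O(\gamma_k)$ bound then fails.
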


\begin{proof}
For generalized wealth-depleting base algorithms (e.g., LORD), in the absence of new rejections during the interval $(t_0, t_0 + k]$, the testing threshold $\lambda_t^{\text{base}}$ is bound to the unreplenished wealth and the monotonically decaying sequence $\gamma_k$. Specifically, at absolute time $t = t_0 + k$, the baseline threshold decays as:
\[
\lambda_t^{\text{base}} = \Theta(\gamma_k) = \Theta\left(\frac{1}{k \log^2 k}\right).
\]

Under the DOMT framework, the testing threshold is augmented by a strictly non-negative stochastic exploration term $\xi_t$, which is decoupled from the virtual wealth pool. By definition, $\xi_t \sim \text{Uniform}\left(0, \frac{\kappa \alpha}{\sqrt{t}}\right)$. Taking the expectation of the DOMT threshold at absolute time $t = t_0 + k$, we have:
\[
\mathbb{E}[\lambda_t^{\text{DOMT}}] = \lambda_t^{\text{base}} + \mathbb{E}[\xi_t] = \lambda_t^{\text{base}} + \frac{\kappa \alpha}{2 \sqrt{t_0 + k}}.
\]

To evaluate the relative exploration advantage during the drought, we analyze the threshold ratio:
\[
\frac{\mathbb{E}[\lambda_t^{\text{DOMT}}]}{\lambda_t^{\text{base}}} = 1 + \frac{\mathbb{E}[\xi_t]}{\lambda_t^{\text{base}}} = 1 + \frac{\frac{\kappa \alpha}{2 \sqrt{t_0 + k}}}{\Theta\left(\frac{1}{k \log^2 k}\right)}.
\]

Algebraically simplifying this ratio isolates the localized growth rate:
\[
\frac{\mathbb{E}[\lambda_t^{\text{DOMT}}]}{\lambda_t^{\text{base}}} = 1 + \Theta\left( \frac{k \log^2 k}{\sqrt{t_0 + k}} \right).
\]
Since $\kappa$ and $\alpha$ are positive constants, the exploration advantage scales precisely as $\Omega\left(\frac{k \log^2 k}{\sqrt{t_0+k}}\right)$. If the adversarial drought constitutes a macroscopic fraction of the entire horizon ($k = \Theta(t)$, implying $t_0+k \approx k$), the term mathematically simplifies to $\Omega(\sqrt{t} \log^2 t)$.
\end{proof}

\textbf{Remark on Lemma \ref{lemma:drought_ratio}:} 
This result mathematically quantifies the ``lifeline'' provided by DOMT. As the drought prolongs, the baseline threshold collapses toward zero exponentially faster than the DOMT exploration envelope. This relative growth ensures that DOMT retains a probabilistically significant detection capability, effectively mitigating severe threshold depletion when a burst of genuine signals eventually follows the drought.

\subsection{Rigorous Compatibility with $e$-value based Procedures}
\label{D-2}

Modern OMT frameworks, such as SCORE \cite{Kuang2026SCOREAU} and $e$-LOND \cite{Xu2023OnlineMT}, increasingly transition from $p$-values to $e$-values to leverage the flexibility of martingale-based inference. An $e$-value $e_t$ for a null hypothesis $H_t$ is a non-negative random variable satisfying $\mathbb{E}[e_t \mid \mathcal{F}_{t-1}] \le 1$ under $H_t \in \mathcal{H}_0$. We formally establish that the DOMT framework structurally preserves the safety of these procedures.

\begin{theorem}[Preservation of Martingale Property under DOMT]
\label{thm:e_value_safety}
Let $\mathcal{P}$ be an $e$-value based online testing procedure whose FDR control relies on the super-martingale property of an underlying wealth process. Then, the DOMT-enhanced version of $\mathcal{P}$ preserves the structural safety of this martingale, ensuring that the stochastic exploration noise $\xi_t$ does not introduce invalid test refunds or spurious wealth.
\end{theorem}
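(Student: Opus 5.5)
The plan is to show that the wealth process of $\mathcal{P}$ under DOMT is \emph{exactly the same random process} as in the unperturbed procedure. Its super-martingale property is then inherited verbatim, rather than re-derived.

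Concretely, let $\mathcal{P}$ maintain an internal state $S_t$ (wealth, betting fractions, test levels $\alpha_t$, virtual rejection counts). This state is updated by a deterministic map $S_t = \Phi_t(S_{t-1}, e_t, \delta_t^{\text{base}})$, and $\mathcal{P}$'s safety is the statement that some process $W_t = \psi(S_t)$ is a non-negative super-martingale with respect to $\mathcal{F}^{\text{base}}_t = \sigma(e_s, \delta_s^{\text{base}} : s \le t)$. I will also use the finer filtration $\mathcal{F}_t$, which additionally contains $Z_1,\dots,Z_t$ and the actual decisions $\delta_s$.

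The proof then has three steps.

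\textbf{Step 1: causal decoupling.} By induction on $t$, using step 4 of Algorithm \ref{alg:dbh}, the virtual state $S_t^{\text{DOMT}}$ coincides pathwise with the state $S_t$ of the unperturbed run on the same data stream. The update uses only $\delta_t^{\text{base}}$ and $e_t$, never $\xi_t$ or $\delta_t$. Hence $W_t^{\text{DOMT}} = W_t$ almost surely, so no refund or wealth increment can depend on exploratory rejections. This formalizes the claim of ``no spurious wealth.''

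\textbf{Step 2: the super-martingale property persists under the enlarged filtration $\mathcal{F}_t$.} Since the $Z_t$ are i.i.d.\ and drawn independently of the data, $\sigma(Z_1,\dots,Z_t)$ is independent of the data. I will check two things:
\begin{itemize}
\item $e_t$ is still a valid $e$-value given $\mathcal{F}_{t-1}$, i.e.\ $\mathbb{E}[e_t \mid \mathcal{F}_{t-1}] = \mathbb{E}[e_t \mid \mathcal{F}^{\text{base}}_{t-1}] \le 1$ under the null. This follows from independence of the $Z$'s together with the conditional-validity assumption. One needs the data at time $t$ to be conditionally independent of past actual decisions given the base history; this holds because those decisions are functions of base history, past data, and independent noise.
\item $\mathbb{E}[W_t \mid \mathcal{F}_{t-1}] \le W_{t-1}$ follows by the same conditioning argument.
\end{itemize}
Ville's inequality, or whatever optional-stopping argument $\mathcal{P}$ uses, then applies unchanged to the virtual process.

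\textbf{Step 3: the error guarantee.} The base procedure's FDR bound holds for the virtual rejection set. The extra rejections $\mathcal{R}^{\text{DOMT}}_t \setminus \mathcal{R}^{\text{base}}_t$ are not certified by the martingale. Their contribution is accounted for separately by the bounds of Theorems \ref{thm:finite_fdp} and \ref{thm:fdr_control}, which only used $\lambda_t \ge \lambda_t^{\text{base}}$ and null super-uniformity. For $e$-values, the analogous statement is that the threshold $1/\lambda_t$ is replaced by a smaller cutoff, and the extra rejections are bounded in the same way.

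The main obstacle is Step 2, and specifically the conditional-independence bookkeeping under the enlarged filtration. Adversarial or adaptive data streams could in principle react to actual (perturbed) decisions. If the environment may depend on $\delta_s$, then Step 1 still holds, but validity of $e_t$ given $\mathcal{F}_{t-1}$ must be assumed directly, as $\mathbb{E}[e_t\mid\mathcal{F}_{t-1}]\le 1$ under the null. I would first try to show that this is exactly the standard validity assumption stated for the exogenous testing streams considered in the paper, so that nothing beyond it is needed.
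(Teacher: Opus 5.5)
Your Steps 1 and 2 prove the statement, and Step 1 is the paper's argument. The paper's causal decoupling rule makes $\lambda_t^{\text{base}}$, $\delta_t^{\text{base}}$ and the wealth functions of the virtual history $\mathcal{V}_t$ only, so the unperturbed super-martingale is reproduced and Ville or optional stopping applies to it unchanged. The paper works only in $\mathcal{V}_t$. Your Step 2 goes further: independence of $(Z_t)$ from the exogenous stream gives $\mathbb{E}[W_t\mid\mathcal{F}_{t-1}]=\mathbb{E}[W_t\mid\mathcal{F}^{\text{base}}_{t-1}]\le W_{t-1}$. This keeps the super-martingale valid for stopping times or conditioning that involve the actual perturbed decisions, a case the paper's proof does not address.

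The one error is the last sentence of Step 3, the claim that for $e$-values the extra rejections are bounded in the same way. The theorem does not need this step; like the paper's proof, it only asserts integrity of the virtual martingale.
\begin{itemize}
\item The $\mathbb{E}[\xi_t]$ bound behind Theorems~\ref{thm:fdr_control} and~\ref{thm:finite_fdp} is an \emph{interval} bound, $\mathbb{P}(\lambda<p_t\le\lambda+\xi_t)\le\xi_t$ for nulls.
\item That bound needs the null density to be at most $1$, i.e.\ exact uniformity (Condition~1 in Appendix~\ref{B-1}). Super-uniformity alone does not give it, and $e$-values supply only Markov's tail bound.
\end{itemize}
Here is a counterexample. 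Take $\lambda=\lambda_t^{\text{base}}$ and a small $\eta>0$. Let the null $e_t$ have conditional law given $\mathcal{F}_{t-1}$ putting mass $\lambda+\eta$ at $1/(\lambda+\eta)$ and the rest at $0$.
\begin{itemize}
\item Its conditional mean is $1$, so it is a valid $e$-value.
\item The base rule $e_t\ge 1/\lambda$ never rejects it.
\item DOMT rejects it whenever $\xi_t\ge\eta$, i.e.\ with probability $(\lambda+\eta)(1-\eta/\epsilon_t)\to\lambda_t^{\text{base}}$ as $\eta\to 0$.
\end{itemize}
This can far exceed $\mathbb{E}[\xi_t]=\kappa\alpha/(2\sqrt t)$, for example for LOND-type thresholds after linearly many rejections. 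So $V_t^{\text{extra}}=\mathcal{O}(\sqrt t)$ does not transfer to $e$-value bases. What does hold is $\mathbf{1}\{e_t\ge 1/(\lambda_t^{\text{base}}+\xi_t)\}\le(\lambda_t^{\text{base}}+\xi_t)\,e_t$, which controls total rather than extra null rejections. Either drop the claim or replace it with this bound.
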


\begin{proof}
In $e$-value based OMT, the algorithm typically maintains an internal test process $W_t = W_0 + \sum_{i=1}^t \lambda_i^{\text{base}} (e_i - 1)$ (or a multiplicative variant), where the testing parameter $\lambda_i^{\text{base}}$ (e.g., testing threshold or betting fraction) must be strictly \textit{predictable}. The safety proof fundamentally hinges on the property that, for pure nulls ($i \in \mathcal{H}_0$), $\mathbb{E}[e_i - 1 \mid \mathcal{F}_{i-1}] \le 0$. Thus, provided $\lambda_i^{\text{base}}$ is predictable ($\mathcal{F}_{i-1}$-measurable), the null-restricted process $M_t = \sum_{i \in \mathcal{H}_0 \cap [t]} \lambda_i^{\text{base}} (e_i - 1)$ forms a valid super-martingale, and FDR control is subsequently established via the \textit{Optional Stopping Theorem} or Ville's Inequality.

Under the DOMT framework, the actual rejection decision $\delta_t$ is executed using the stochastically augmented criteria. Crucially, however, the internal state of the procedure is updated using a \textit{virtual decision} $\delta_t^{\text{base}}$ generated solely by the unperturbed baseline rules (e.g., thresholding the $e$-value directly via $\delta_t^{\text{base}} = \mathbf{1}\{e_t \ge 1/\lambda_t^{\text{base}}\}$). 

Let $\mathcal{F}_t$ be the global filtration generated by all actual observations and perturbations up to time $t$. We isolate a \textbf{virtual filtration} $\mathcal{V}_t \subseteq \mathcal{F}_t$, which mathematically represents the exact history that would have been observed had no stochastic perturbations $\xi_t$ been applied. By the \textit{Causal Decoupling Rule}:
\begin{enumerate}
    \item The baseline parameter generation is predictable ($\mathcal{V}_{t-1}$-measurable): $\lambda_t^{\text{base}} = f(\delta_1^{\text{base}}, \dots, \delta_{t-1}^{\text{base}})$.
    \item The virtual decision utilizes the current evidence without future noise, making it $\mathcal{V}_t$-measurable: $\delta_t^{\text{base}} = \mathbf{1}\{e_t \ge 1/\lambda_t^{\text{base}}\}$.
    \item The virtual wealth $W_t^{\text{base}}$ used to establish safety is updated exclusively by these virtual decisions and predictable parameters.
\end{enumerate}

Because the exploration noise $\xi_t$ is causally decoupled from the virtual state, the parameter $\lambda_t^{\text{base}}$ retains its strict predictability under the original uncorrupted filtration $\mathcal{V}_t$. Consequently, the virtual process $M_t^{\text{base}}$ safely retains its super-martingale property. Any optional stopping bound defined relative to the virtual process holds validly. 

Since the actual rejection set $\mathcal{R}_t^{\text{DOMT}}$ only expands upon the virtual set $\mathcal{R}_t^{\text{base}}$ through unidirectional positive perturbations, and the ``wealth pollution'' is mathematically quarantined via the filtration barrier, the original structural safety proof for $\mathcal{P}$ remains fully intact.
\end{proof}

\textbf{Remark on $e$-value Synergy:} 
Theorem \ref{thm:e_value_safety} proves that DOMT is not merely a heuristic for $p$-value methods, but a theoretically robust meta-framework. By isolating the ``wealth-generating'' decisions from the ``exploratory'' decisions via a strict filtration barrier, DOMT allows modern $e$-testing methods to maintain their complex overshoot-refund martingales while independently countering the deterministic linear regret trap via its exploration envelope.

\section{Additional Empirical Evaluations and Implementation Details}
\label{E}

To ensure complete transparency and reproducibility, this section details the standard software/hardware environments, hyperparameter configurations, and data generation protocols used in our empirical validations. All supplementary experiments further corroborate the theoretical superiority of the DOMT framework.

\subsection{Experimental Setup and Reproducibility Details}
\label{E-1}

\textbf{Computing Environment.} All simulations and mathematical visualizations are implemented in Python 3.9, utilizing \texttt{PyTorch} for accelerated tensor-based sequential computations and \texttt{NumPy} for vectorized numerical operations. Figures are rendered using \texttt{Matplotlib} and \texttt{Seaborn} under rigorous academic formatting standards.

\textbf{Baseline Configurations.} For all online multiple testing procedures (LOND, LORD, and SAFFRON), the target FDR level is universally set to $\alpha = 0.05$. The default initial wealth for LORD and SAFFRON is configured as $W_0 = \alpha / 2$. Following standard OMT literature \cite{javanmard2018online}, the monotonic decay sequence is harmonized as:
\begin{equation}
    \gamma_t = \frac{0.077208}{t \cdot (\log(\max(t, 2)))^2},
\end{equation}
which strictly ensures $\sum_{t=1}^\infty \gamma_t = 1$. For SAFFRON and its DOMT counterpart, the candidate threshold parameter is fixed at $\lambda = 0.5$.

\textbf{DOMT Implementation Details.} The DOMT framework acts as a meta-wrapper over the baselines. In each step $t$, the exploration noise is drawn from a uniform distribution $\xi_t \sim \text{Uniform}[0, \frac{\kappa \alpha}{\sqrt{t}}]$. To prevent probability overflow, the final testing threshold is clipped via $\lambda_t^{\text{DOMT}} = \min(1.0, \lambda_t^{\text{base}} + \xi_t)$. Crucially, following the virtual decoupling rule, the wealth states of the baselines are updated strictly using the virtual indicator $\delta_t^{\text{base}} = \mathbf{1}\{p_t \le \lambda_t^{\text{base}}\}$. The exploration scale $\kappa$ varies across distinct experimental scenarios (e.g., $\kappa=3.0$ for stationary and $\kappa=8.0$ for bursty dynamics) to evaluate parameter sensitivity.

\textbf{Synthetic Data Generation Protocols.} The synthetic $p$-values are simulated under two canonical adversarial environments:
\begin{itemize}
    \item \textit{Stationary Environment ($T=5000$):} The hidden states $Y_t$ are drawn i.i.d. from a Bernoulli distribution with a null probability $\pi_0 = 0.8$. Null $p$-values ($Y_t = 0$) are strictly sampled from $\text{Uniform}[0,1]$. Alternative $p$-values ($Y_t = 1$) are sampled from a highly concentrated Beta distribution, $p_t \sim \text{Beta}(0.05, 20.0)$, reflecting a standard dense-signal scenario.
    \item \textit{Bursty Environment ($T=6000$):} To simulate the ``cold-start tax'' and threshold depletion, the environment imposes a pure-null macroscopic drought for the first half of the horizon. Specifically, $Y_t = 0$ for $t \le T_0 = 3000$. A dense signal burst occurs subsequently ($t > 3000$), where alternative signals are generated via $p_t \sim \text{Beta}(0.3, 15.0)$. 
\end{itemize}

In all simulations, the empirical \textit{Weighted Regret} is calculated using symmetric penalty weights ($a=1.0, b=1.0$) unless explicitly plotting the asymmetric parameter space (as in the 2D contour visualizations).

\subsection{Ablation Study I: The Necessity of Unidirectional Noise and Causal Decoupling}
\label{E-2}

A natural question arises when extending deterministic online procedures: \textit{Why are both the strictly non-negative perturbation and the virtual decoupling mechanism mathematically necessary, and what occurs if one naively injects symmetric noise directly into the operational threshold?} 

To rigorously address this dual necessity, we conduct an ablation study evaluating a naive ``Coupled-Randomization'' (CR) strategy. Typical naive implementations inevitably employ symmetric perturbations (e.g., Gaussian noise $\xi_t \sim \mathcal{N}(0, \sigma_t^2)$) and fundamentally violate the causal decoupling rule by updating their internal martingale states (e.g., the virtual wealth) using the actual perturbed decisions. To expose how these structural flaws independently compromise detection power and statistical safety, we evaluate performance across both stationary and bursty environments (Figures \ref{fig:ablation_cr_lond} and \ref{fig:ablation_cr_saffron}).

\paragraph{Stationary Environments: The Wealth Pollution Trap and Symmetric Penalty.}
In stable environments with uniform signal density (top rows of Figures \ref{fig:ablation_cr_lond} and \ref{fig:ablation_cr_saffron}), the fatal flaw of CR is exposed in both safety and power. Without decoupling, early exploratory false positives erroneously generate ``phantom wealth,'' permanently inflating its baseline threshold. This \textit{wealth pollution} causes CR to accumulate false discoveries at an accelerated rate, resulting in a persistently inflated FDR trajectory compared to the safely quarantined DOMT. 

Concurrently, the symmetric nature of naive CR actively penalizes detection power. Negative noise fluctuations frequently suppress the operational threshold, causing the algorithm to forfeit genuine discoveries. In stark contrast, DOMT employs strictly non-negative, quarantined exploration. By isolating exploratory rejections from the main martingale, DOMT secures consistent power enhancements while strictly maintaining its empirical FDR safely below the target threshold $\alpha=0.05$.

\paragraph{Bursty Environments: Bridging the Macroscopic Drought.}
In adversarial cold-start scenarios characterized by prolonged pure-null sequences (bottom rows of Figures \ref{fig:ablation_cr_lond} and \ref{fig:ablation_cr_saffron}), deterministic baselines experience catastrophic threshold depletion. While CR introduces noise to avoid exact zero-depletion, its polluted history dependence and symmetric suppression yield erratic, ineffectual threshold dynamics. Consequently, when the delayed signal burst arrives, CR remains paralyzed, yielding severely suboptimal detection power and soaring regret.

DOMT, however, mathematically enforces a stable, stochastically bounded operational envelope ($\Omega(1/\sqrt{t})$). This strictly non-negative, decoupled life-line effortlessly bridges macroscopic signal droughts. As visibly demonstrated, DOMT enables immediate and decisive capture of delayed signal bursts, translating into a profound, sublinear reduction in weighted regret.

This dual-environment ablation validates a fundamental trade-off: merely injecting noise is insufficient and often detrimental. To achieve absolute martingale safety and strictly positive yield, the algorithm must structurally combine non-negative perturbation with mathematically decoupled wealth generation.

\begin{figure}[htbp]
    \centering
    \includegraphics[width=0.9\textwidth]{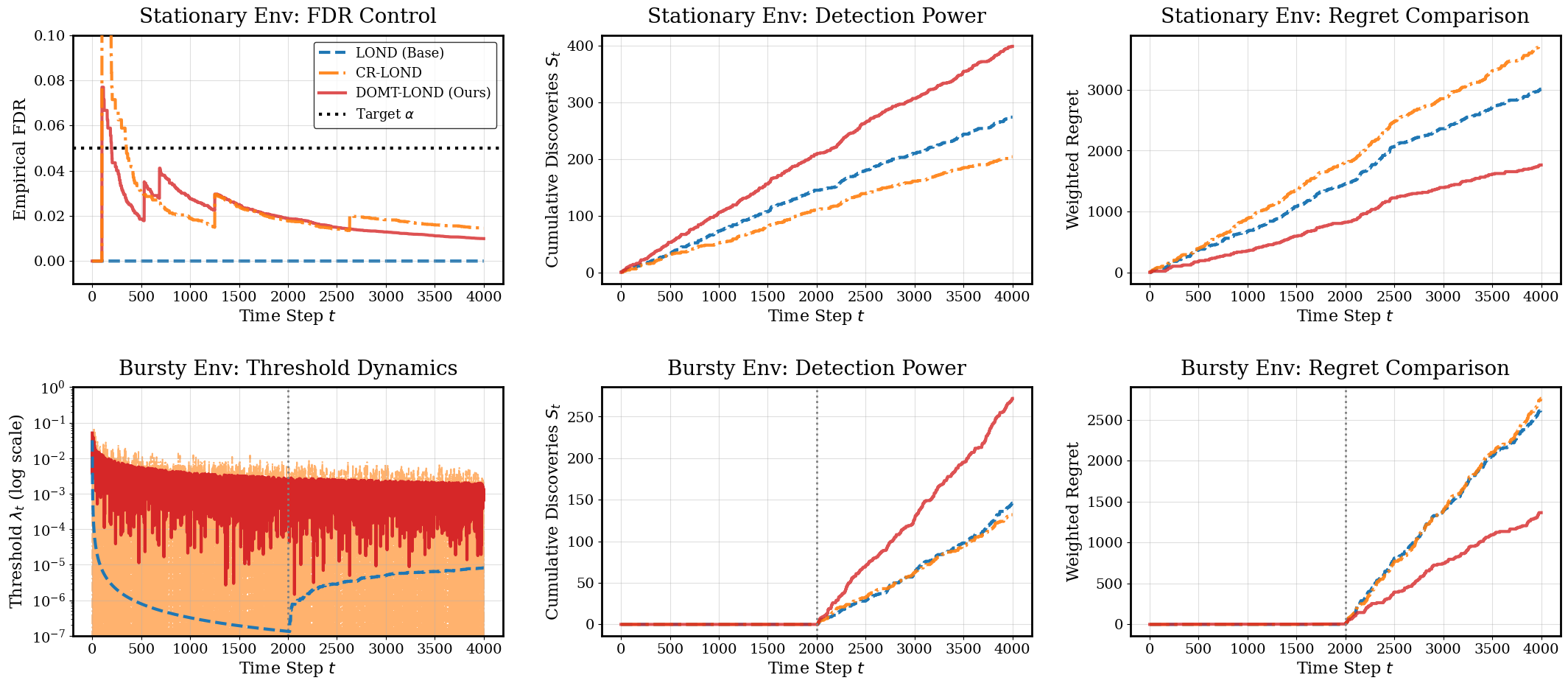}
    \caption{Ablation on Virtual Decoupling (LOND Framework). Evaluated across stationary (top) and bursty (bottom) environments. The naive Coupled-Randomization (CR-LOND) suffers from a persistently inflated FDR due to wealth pollution, while its symmetric noise suppresses overall power. DOMT-LOND strictly quarantines the exploration noise, maintaining robust asymptotic safety and achieving superior signal capture.}
    \label{fig:ablation_cr_lond}
\end{figure}

\begin{figure}[htbp]
    \centering
    \includegraphics[width=0.9\textwidth]{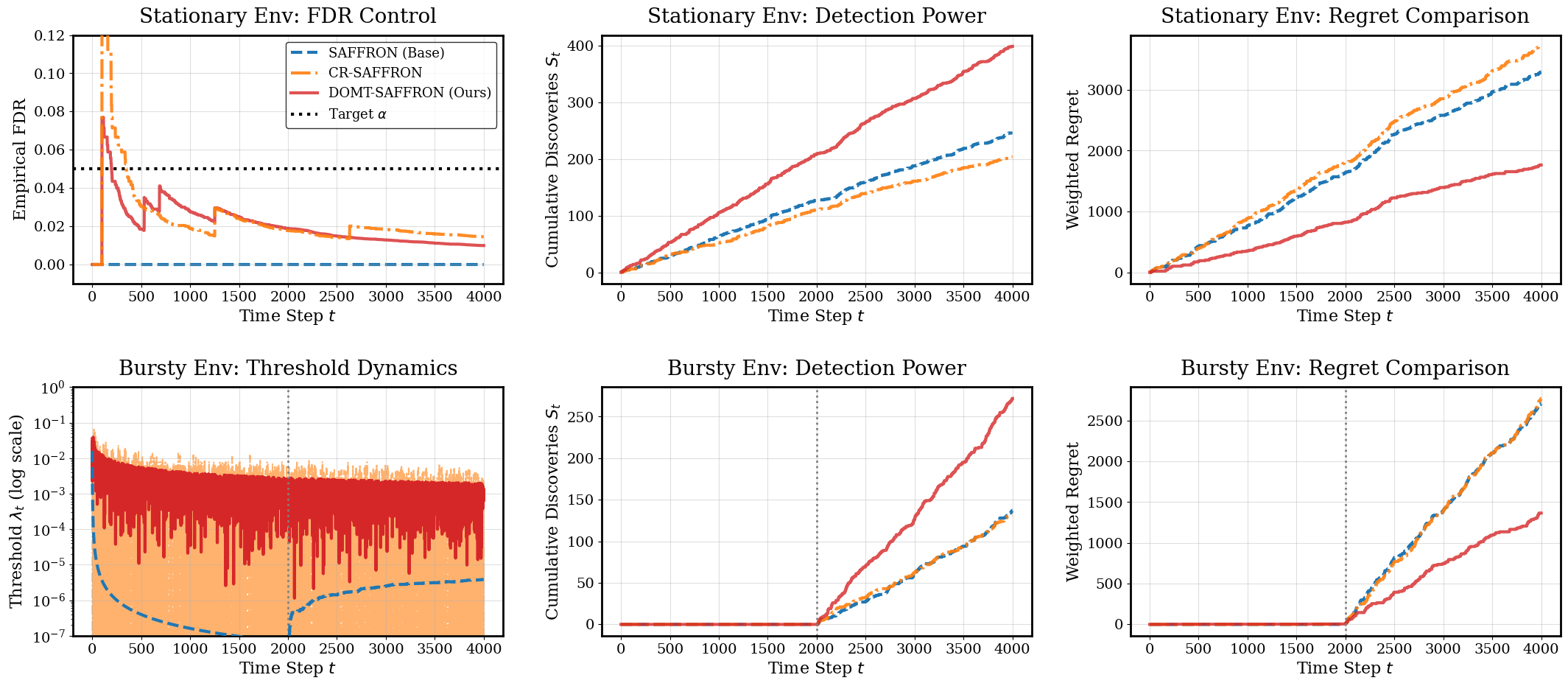}
    \caption{Ablation on Virtual Decoupling (SAFFRON Framework). Similar to the LOND evaluation, CR-SAFFRON's double-threshold structure exacerbates the wealth pollution, leading to an inherently unsafe and powerless trajectory. DOMT-SAFFRON successfully preserves martingale safety via causal decoupling, effortlessly bridging the pure-null drought to capture delayed bursts.}
    \label{fig:ablation_cr_saffron}
\end{figure}

\subsection{Ablation Study II: Universality and Parameter Sensitivity}
\label{E-3}

To establish the DOMT framework as a robust, universal plug-in rather than a finely-tuned heuristic, we conduct a comprehensive sensitivity analysis across both algorithmic and environmental parameters. Figure \ref{fig:ablation_sensitivity} summarizes the performance dynamics across three critical dimensions: the exploration coefficient $\kappa$, the alternative signal density $\pi_1 = 1-\pi$, and the signal strength $\mu$.

\paragraph{The U-Shaped Sweet Spot of Exploration ($\kappa$).}
The parameter $\kappa$ directly scales the $\mathcal{O}(t^{-1/2})$ exploration envelope. As theoretically predicted, the empirical weighted regret exhibits a distinct U-shaped trajectory. When $\kappa \to 0$, the framework degenerates into the deterministic baseline, suffering the massive $\Omega(T)$ false-negative penalty triggered by threshold depletion. Conversely, an excessively large $\kappa$ indiscriminately widens the testing net, incurring a sub-optimal false-positive tax. The results demonstrate a broad and stable ``sweet spot'' (typically $\kappa \in [1.0, 3.0]$) where the macroscopic discovery dividend vastly outweighs the strictly bounded exploration cost, confirming the universal rationality of a constant-level $\kappa$ investment.

\paragraph{Robustness to Signal Sparsity ($\pi_1$).}
We systematically vary the proportion of genuine signals $\pi_1$ to simulate environments ranging from extreme sparsity to high density. Deterministic baselines experience catastrophic failure in high-sparsity regimes, as prolonged pure-null sequences (mini-droughts) mathematically force their operational thresholds toward zero. DOMT exhibits profound robustness: its history-decoupled stochastic envelope effortlessly bridges these droughts. Crucially, even in these highly sparse and challenging environments, DOMT maintains robust statistical power, successfully capturing genuine signals precisely where traditional methods are structurally blinded.

\paragraph{S-Shaped Efficacy of Signal Strength ($\mu$).}
Modulating the concentration of the alternative $p$-value distribution (parameterized by detectability $\mu$) reveals an S-shaped phase transition in algorithmic advantage. For nearly indistinguishable signals ($\mu \to 0$), the environment is information-theoretically impossible, and all procedures perform comparably. As signals cross a minimal detectability threshold, DOMT's advantage accelerates explosively, capturing delayed bursts that depleted deterministic methods completely ignore. For overwhelmingly strong signals, even the heavily decayed baseline thresholds can secure discoveries, naturally narrowing the marginal gap. Nevertheless, DOMT consistently maintains equal or strictly superior performance across the entire detectability spectrum.

\begin{figure}[htbp] 
    \centering
    \includegraphics[width=\textwidth]{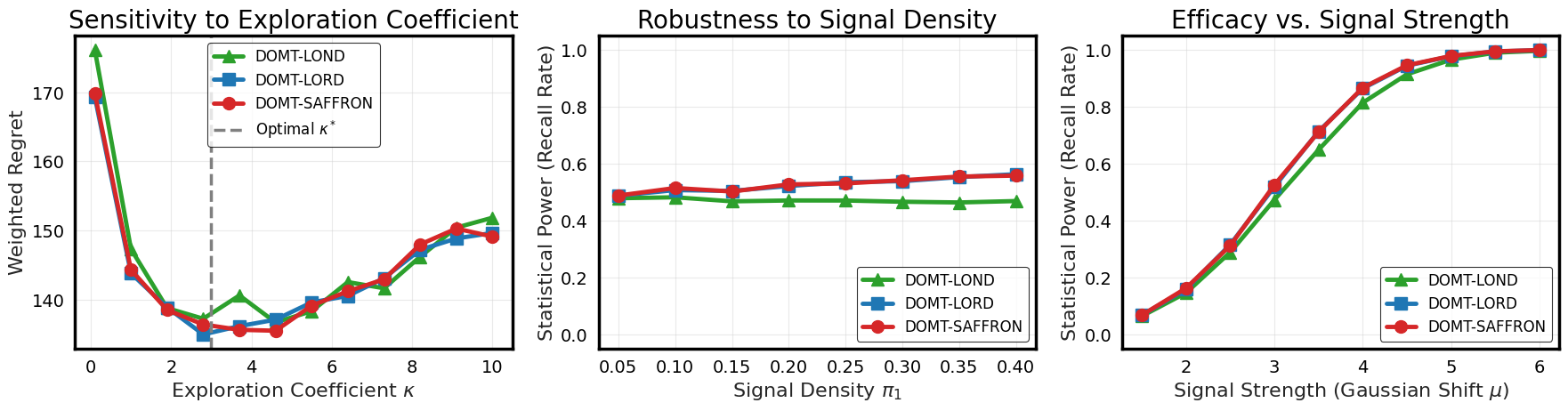}
    \caption{Universality and Parameter Sensitivity. \textbf{Left:} The U-shaped weighted regret curve reveals a broad optimal region for the exploration coefficient $\kappa$. \textbf{Center:} DOMT maintains robust superiority across varying signal densities $\pi_1$, particularly dominating in highly sparse environments. \textbf{Right:} The S-shaped advantage curve confirms consistent efficacy gains across diverse signal strengths $\mu$.}
    \label{fig:ablation_sensitivity}
\end{figure}

\subsection{Advancing Modern OMT: Applicability to Advanced $p$-value and $e$-value Methods}
\label{E-4}

While canonical baselines (e.g., LORD, SAFFRON) are instrumental in illustrating the fundamental limits of deterministic online multiple testing, modern OMT frameworks have evolved significantly. To demonstrate the capacity of the DOMT framework to advance highly optimized and contemporary procedures, we evaluate its integration with ADDIS \cite{tian2019addis} and SCORE \cite{Kuang2026SCOREAU} across both stationary and bursty environments.

\paragraph{Compatibility with Adaptive Discarding (Advanced $p$-value Framework).}
ADDIS represents the culmination of highly optimized $p$-value based procedures. It incorporates an adaptive discarding mechanism (ignoring $p$-values above a conservative threshold $\tau$) to filter out obvious nulls and preserve testing wealth. As shown in Figure \ref{fig:sota_stationary}, DOMT seamlessly integrates into this structure, providing consistent power enhancements. Notably, the empirical FDR trajectories of both ADDIS and DOMT-ADDIS exhibit a marginal overshoot above the nominal $\alpha$ target. This slight inflation is a recognized finite-sample characteristic of the baseline ADDIS algorithm, which inherently trades a relaxed empirical FDR for a more aggressive discovery recovery (consequently achieving the lowest absolute weighted regret). Crucially, DOMT-ADDIS perfectly tracks this baseline FDR trajectory, empirically validating our strict inheritance guarantee (Theorem \ref{thm:fdr_control}): the DOMT exploration noise does not introduce systemic violations, but faithfully preserves the underlying safety profile of the chosen base procedure. Furthermore, as demonstrated in Figure \ref{fig:sota_bursty}, even ADDIS's sophisticated wealth-conservation strategy fundamentally relies on historical discoveries and succumbs to threshold depletion during extreme adversarial cold-starts. DOMT-ADDIS successfully mitigates this depletion trap, yielding a strict sublinear reduction in false negatives.

\paragraph{Compatibility with Betting Scores (Contemporary $e$-value SOTA).}
The frontier of OMT is increasingly transitioning toward $e$-values (or betting scores) to leverage the robust flexibility of martingale-based inference \cite{Xu2023OnlineMT}. The recently proposed SCORE algorithm \cite{Kuang2026SCOREAU} exemplifies this state-of-the-art, dynamically optimizing betting fractions to maximize logarithmic wealth. Building upon the rigorous martingale safety proof established in Appendix \ref{D-2}, we implement DOMT-SCORE. Figure \ref{fig:sota_stationary} verifies that the strictly quarantined exploration noise does not introduce invalid wealth refunds, preserving the exact super-martingale safety under regular conditions. Concurrently, Figure \ref{fig:sota_bursty} illustrates that DOMT provides a bounded stochastic envelope that prevents the SCORE betting fraction from collapsing to zero during macroscopic droughts. 

Ultimately, this dual evaluation confirms that DOMT operates effectively as a universal meta-framework. Whether augmenting advanced $p$-value thresholds or steering the contemporary $e$-value SOTA, it robustly counters the inherent limitations of deterministic history dependence across diverse environments.

\begin{figure}[htbp]
    \centering
    \includegraphics[width=\textwidth]{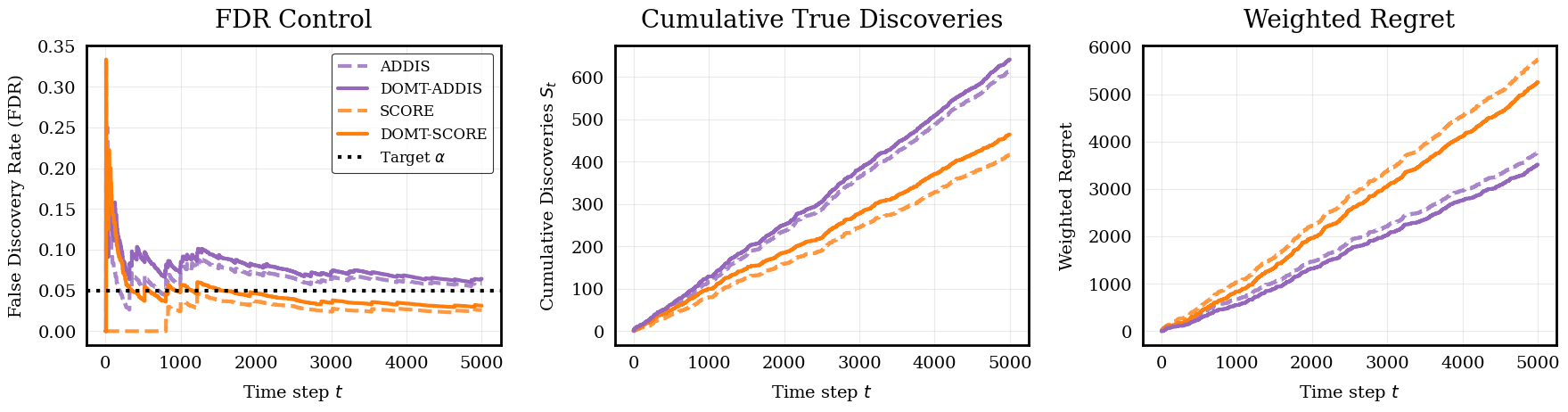}
    \caption{Comparisons in Stationary Environments. Both DOMT-ADDIS ($p$-value) and DOMT-SCORE ($e$-value) secure consistent enhancements in statistical power compared to their unperturbed counterparts. The marginal FDR overshoot in ADDIS is a baseline characteristic, which DOMT strictly inherits without additional violation.}
    \label{fig:sota_stationary}
\end{figure}

\begin{figure}[htbp]
    \centering
    \includegraphics[width=\textwidth]{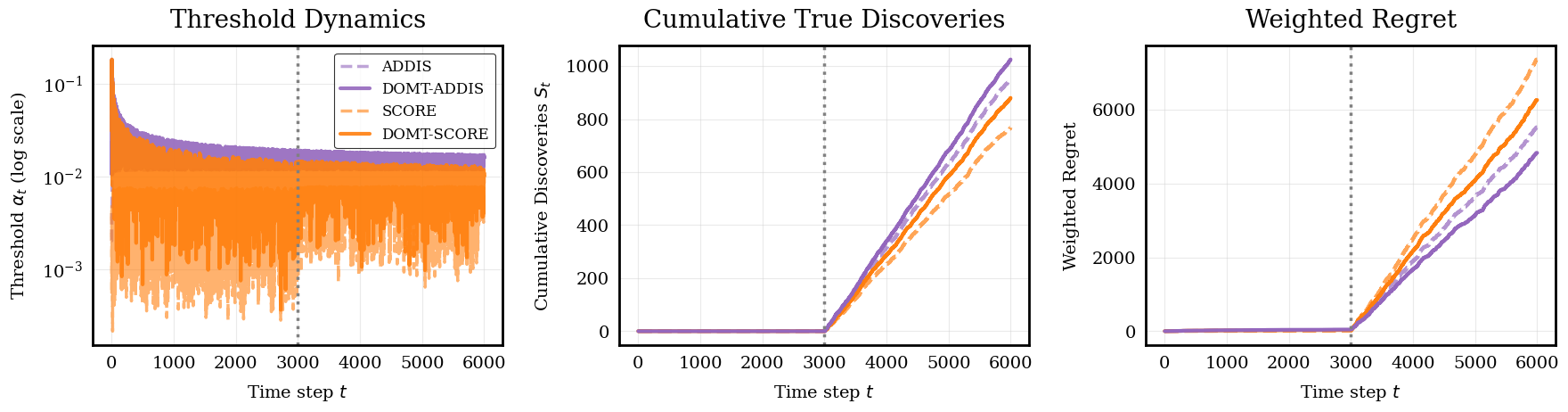}
    \caption{Comparisons in Bursty Environments. During macroscopic pure-null droughts, deterministic frameworks suffer from absolute depletion in either threshold allocation (ADDIS) or betting fractions (SCORE). The DOMT framework successfully preserves testing capacity, capturing delayed signal bursts.}
    \label{fig:sota_bursty}
\end{figure}

\subsection{Comprehensive Real-World Applications}
\label{E-5}

To evaluate the practical utility of the DOMT framework, we conduct extensive evaluations on three heterogeneous real-world datasets. These datasets represent distinct signal architectures: high-density stationary, sparse bursty, and general high-throughput testing. Our results consistently demonstrate that DOMT secures an \textit{Exploration Dividend} in genuine discoveries across diverse scientific domains.

\subsubsection{High-Density Stationary Environment: RNA-Seq Microarray Data}
Our first evaluation utilizes the prostate cancer microarray dataset from \textit{Singh et al. (2002)} \cite{Singh2002GeneEC}. This study analyzed expression profiles for 12,600 genes to identify clinical correlates of prostate cancer behavior. In this high-density environment, signals are relatively abundant and stable. As shown in Figure \ref{fig:real_world_rna}, DOMT-enhanced procedures (DOMT-SCORE and DOMT-ADDIS) achieve a steady lead in cumulative discoveries over their deterministic baselines. The constant-level exploration allows the algorithm to capture marginal signals that unperturbed methods miss during localized mini-droughts, even when the overall environment remains stationary.

\begin{figure}[htbp]
    \centering
    \includegraphics[width=0.75\textwidth]{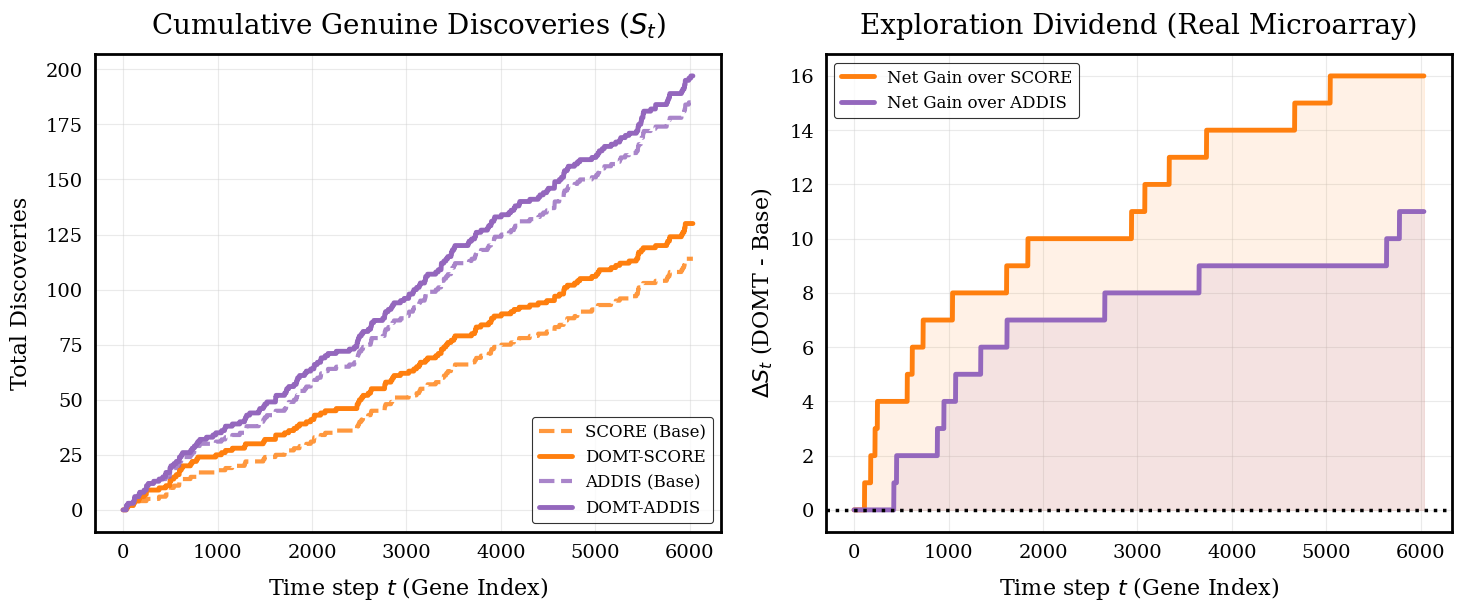}
    \caption{Real-World Application: RNA-Seq Microarray Data. \textbf{Left:} Cumulative genuine discoveries ($S_t$) versus hypothesis index $t$. \textbf{Right:} The Exploration Dividend ($\Delta S_t$) representing the net discovery gain of the DOMT framework over the SCORE and ADDIS baselines.}
    \label{fig:real_world_rna}
\end{figure}

\subsubsection{Sparse and Bursty Environment: S\&P 500 Financial Anomalies}
To simulate extreme adversarial cold-starts, we analyze daily stock returns of S\&P 500 constituents to detect abnormal returns and structural volatility jumps \cite{tian2019addis}. In financial markets, significant anomalies (signals) are exceptionally sparse and typically appear in dense clusters during market shocks. As visualized in Figure \ref{fig:real_world_sp500}, deterministic frameworks suffer from absolute threshold depletion during prolonged stable periods, rendering them ``blind'' to subsequent market crashes. DOMT overcomes this depletion trap, securing a macroscopic Exploration Dividend by preserving a sublinear testing capacity that captures the delayed signal bursts with high precision.

\begin{figure}[htbp]
    \centering
    \includegraphics[width=0.75\textwidth]{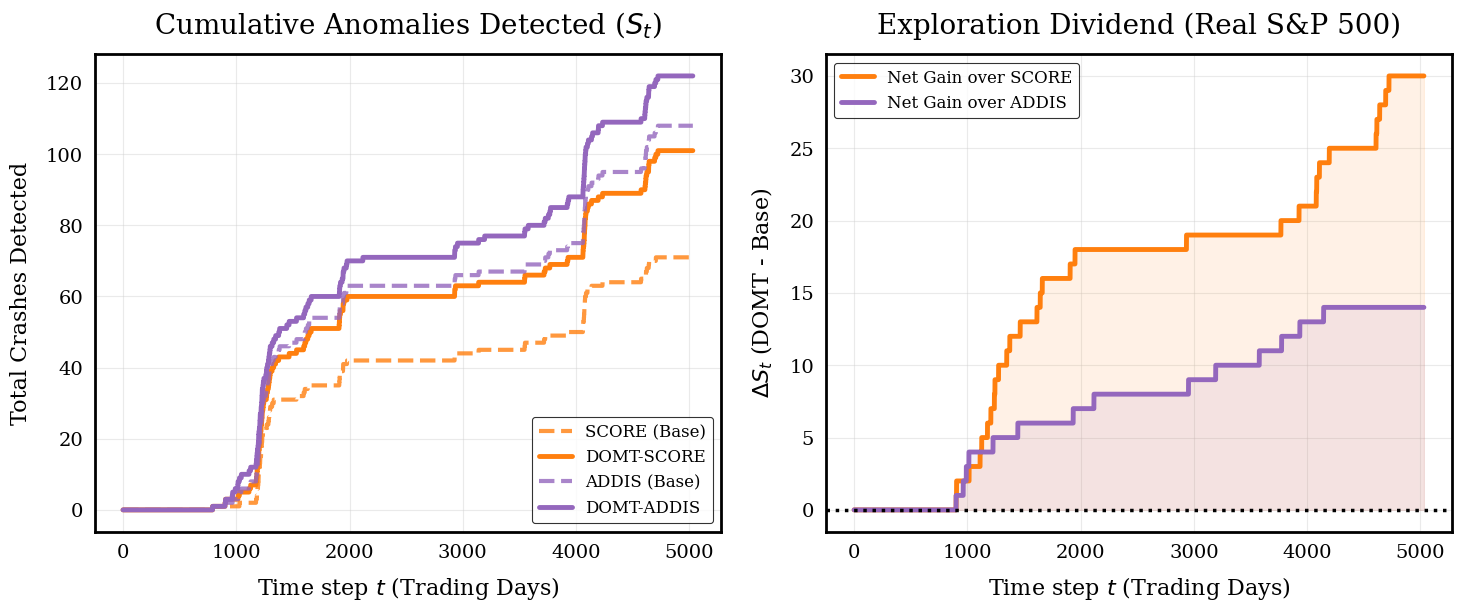}
    \caption{Real-World Application: S\&P 500 Financial Anomalies. In this bursty environment, DOMT successfully counters the irreversible depletion of deterministic baselines, yielding a substantial Exploration Dividend during high-volatility market events.}
    \label{fig:real_world_sp500}
\end{figure}

\subsubsection{General Applicability: IMPC Mouse Phenotype Data}
Finally, we apply our framework to high-throughput phenotypic data from the International Mouse Phenotyping Consortium (IMPC) \cite{Groza2022TheIM}. The dataset comprises gene-phenotype associations across thousands of knockout mutant strains. This represents a general online testing scenario where the signal density varies dynamically across different assay pipelines. Figure \ref{fig:real_world_impc} corroborates that DOMT robustly advances the discovery frontier in these heterogeneous settings. Whether augmenting $p$-value discarding (ADDIS) or $e$-value betting (SCORE), the decoupled exploration ensures that no genuine biological association is permanently missed due to historical noise.

\begin{figure}[htbp]
    \centering
    \includegraphics[width=0.75\textwidth]{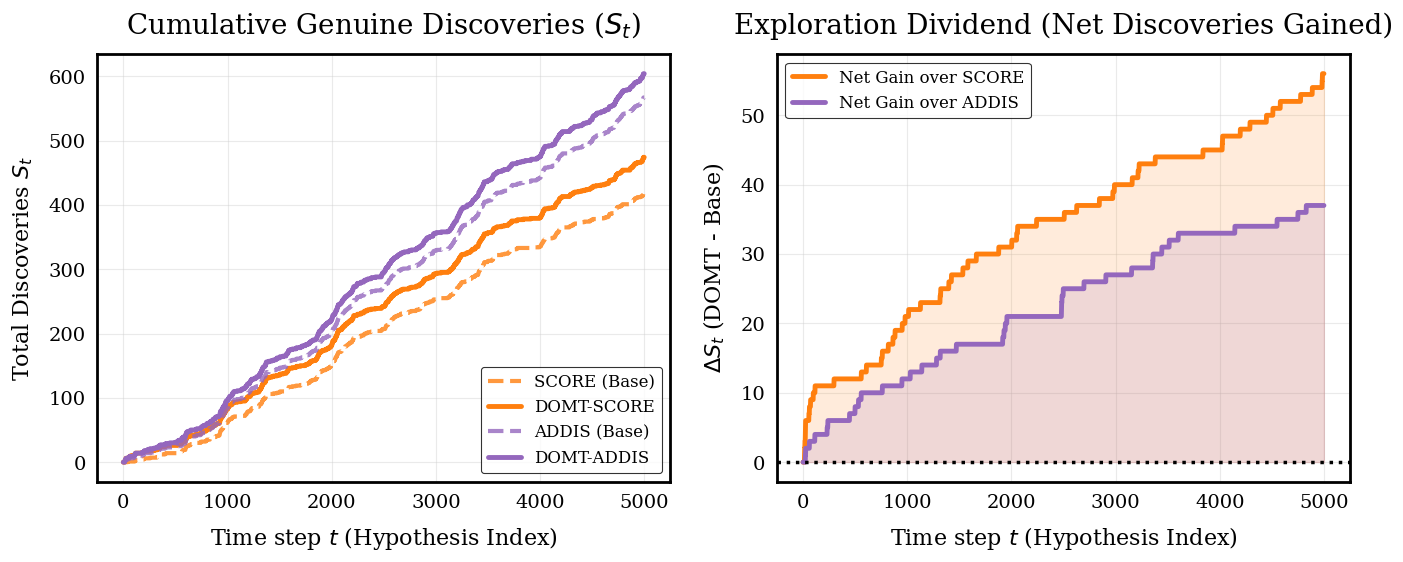}
    \caption{Real-World Application: IMPC Mouse Phenotype Data. The dual-plot validates the robust applicability of the DOMT framework as a universal meta-framework, delivering a consistent Exploration Dividend across a large-scale biological hypothesis stream.}
    \label{fig:real_world_impc}
\end{figure}

\section{Extended Related Work, Theoretical Boundaries, and Future Directions}
\label{F}

In this section, we expand upon the theoretical and methodological discussions introduced in the main text. 
We provide a candid analysis of DOMT's structural limitations and mathematical boundaries, and chart actionable future directions towards fully autonomous online testing.

% ==========================================
\subsection{Structural Limitations and Theoretical Boundaries}
\label{F-2}
% ==========================================

A rigorous statistical framework must be defined as much by its theoretical boundaries as by its algorithmic capabilities. In this section, we proactively dissect the structural trade-offs, worst-case mathematical assumptions, and boundary behaviors of the DOMT meta-framework, providing a transparent operational envelope for its physical deployment.

\paragraph{The ``Wealth Deprivation'' Cost of Causal Decoupling.}
As established in Section 4.1, DOMT prevents the spurious ``rich-get-richer'' cascade of false positives by instituting a strict causal decoupling mechanism. The virtual filtration $\mathcal{V}_t \subseteq \mathcal{F}_t$ mathematically quarantines the underlying baseline from exploration noise. However, this epistemic wall imposes a severe structural penalty: \textit{Wealth Deprivation}. 

Specifically, any genuine signal discovered exclusively via the exploration margin ($p_t \in (\lambda_t^{base}, \lambda_t^{base} + \xi_t]$) is legally recognized in the final rejection set $\mathcal{R}_T$, but it is permanently blinded from the baseline algorithm's internal state. Consequently, the ``overshoot refund'' (wealth) that would normally be generated by these true discoveries is irretrievably discarded. By explicitly blinding the underlying martingale to exploratory discoveries, the baseline algorithm acts as a wealth sink, structurally incapable of self-resuscitation via local feedback. We explicitly acknowledge this as a deliberate, draconian trade-off: we sacrifice localized wealth-recycling efficiency to guarantee absolute global martingale safety in highly non-stationary adversarial environments. Designing a rigorously discounted, two-way safe feedback mechanism remains a highly non-trivial open problem.

\paragraph{The Illusion of Stochasticity vs. Deterministic Offsets.}
A sophisticated statistical critique of our framework involves the necessity of the randomized perturbation $\xi_t \sim \text{Uniform}[0, \epsilon_t]$. Let $G(x) = \mathbb{P}(p_t \le x \mid Y_t=1)$ be the alternative CDF. In canonical statistical settings, $G(x)$ is strictly concave near the origin. By Jensen's Inequality, substituting the stochastic noise with a deterministic offset corresponding to its expectation yields a theoretical vulnerability:
\begin{equation}
    \mathbb{E}_{\xi_t}[G(\lambda_t^{base} + \xi_t)] \le G(\lambda_t^{base} + \mathbb{E}[\xi_t]).
\end{equation}
Because null $p$-values are uniform, both strategies incur the exact same expected false positive penalty $\mathbb{E}[V_t^{extra}] = \mathbb{E}[\xi_t]$. Consequently, a deterministic compensation $\tilde{\lambda}_t = \lambda_t^{base} + \mathbb{E}[\xi_t]$ mathematically strictly dominates the stochastic variant in terms of expected discovery power. Why, then, do we mandate stochasticity? 

The defense is mathematically twofold. First, the pure stochastic noise ensures that the extra false positive indicator forms a strictly bounded, zero-mean martingale difference sequence ($X_t = D_t^{(0)} - \mathbb{E}[D_t^{(0)}\mid \mathcal{G}_{t-1}]$). This zero-mean property is the absolute mathematical bedrock for invoking the Azuma-Hoeffding concentration bound in Theorem 4. Substituting $\xi_t$ with the deterministic $\mathbb{E}[\xi_t]$ would inject a $\mathcal{G}_{t-1}$-measurable predictable bias, systematically blowing up the martingale property and structurally dismantling any finite-sample high-probability safety guarantees. Second, stochasticity limits the predictability of the threshold to $\mathcal{O}(1/\sqrt{t})$, robustly defending against adversarially sorted or maliciously delayed $p$-value streams. Nevertheless, we transparently concede that in benign, stationary engineering practices devoid of adversarial sorting, the ``Deterministic-DOMT'' variant is a mathematically valid, highly efficient heuristic that inherently fits within our Weighted Regret philosophy.

\paragraph{Robustness to Conservative Nulls and the Worst-Case Gap.}
The theoretical exploration tax derived in Theorem 5 ($\mathcal{O}(\sqrt{T})$) heavily relies on Condition 1: pure null $p$-values perfectly follow $U[0,1]$. In physical reality (e.g., GWAS microarrays, financial anomaly detection), null hypotheses are frequently conservative or ``super-uniform,'' satisfying $F_0(x) = \mathbb{P}(p_t \le x \mid Y_t=0) \le x$ for small $x$, with probability density mass heavily skewed towards $1$.

Rather than weakening our theory, this physical deviation constitutes a massive empirical advantage. If the null density $f_0(x) \ll 1$ near the threshold origin, the actual false positive injection strictly satisfies $\mathbb{P}(p_t \in (\lambda_t^{base}, \lambda_t^{base} + \xi_t] \mid Y_t=0) \ll \mathbb{E}[\xi_t]$. Therefore, our formal theorems represent a strict \textit{worst-case supremum}. In practical average-case environments, the empirical exploration tax paid by DOMT is dramatically lower than the theoretical upper bound. The algorithm is, by mathematical definition, \textit{pessimistically safe}.

\paragraph{Extreme Regimes: Transient FDP and the Lipschitz Boundary.}
Finally, we must precisely delineate the operational supremacy zone of DOMT.

\textit{1. The 100\% Transient FDP Phenomenon:} During infinite pure-noise cold-starts ($Y_{t\le T_0} \equiv 0$), the baseline discoveries are strictly zero. Any exploratory false positive triggers an instantaneous localized empirical FDP of $V_t/R_t = 100\%$ (since $V_t = R_t$). We explicitly clarify that this is not a failure of FDR control, but a topological necessity of $0/0 \to 1/1$. DOMT consciously accepts this transient FDP spike as the necessary ``broken-window'' cost to circumvent the deterministic $\Omega(T)$ linear regret trap, strictly relying on Theorem 4 to cap the absolute volume of these errors. It is a calculated structural rescue, not a safety violation.

\textit{2. The Singularity of Extreme Signals:} Theorem 2 dictates that deterministic baselines suffer $\Omega(T)$ missed discoveries due to threshold depletion. However, this absolute penalty critically hinges on the global Lipschitz condition $G(x) \le Lx$, which mathematically confines the environment to ``weak-to-moderate'' signals. If the real-world signal distribution exhibits a singularity near the origin (e.g., extremely heavy-tailed true effects where $p$-values cluster at $10^{-20}$), even a severely depleted baseline threshold ($\lambda_t = \mathcal{O}(1/t^2)$) can probabilistically and spontaneously reignite the discovery cascade. Therefore, we rigidly define DOMT's absolute domain of dominance as navigating the \textit{delayed macroscopic bursts of weak-to-moderate signals}, where deterministic history-dependence proves mathematically fatal.

% ==========================================
\subsection{Future Directions Towards Autonomous OMT}
\label{F-3}
% ==========================================

The Decoupled-OMT (DOMT) framework establishes the foundational physical laws of traversing the non-stationary Pareto frontier via risk-priced stochastic exploration. However, achieving ``Level 5 Autonomy'' in online multiple testing---where the algorithm optimally navigates entirely unknown, dynamically changing environments without human intervention---necessitates bridging several critical theoretical gaps. We outline the most promising frontiers below.

\paragraph{Data-Driven Adaptive Exploration ($\kappa_t$ Scheduling).}
The closed-form ``Cold-Start Tax'' derived in Eq.~(11) relies on an oracle view of the environment's burst delay $\rho$ and signal strength $\mu$. In true blind deployments (e.g., Day-1 of a newly launched anomaly detection system), setting the exploration amplitude $\kappa$ poses an epistemic challenge.

Currently, we advocate a \textit{Safe Fallback Principle}: in completely unknown domains, one should initialize $\kappa_0$ with a conservative empirical constant (e.g., $\kappa_0 \in [1.0, 3.0]$). Because the theoretical exploration envelope decays strictly as $\mathcal{O}(t^{-1/2})$, this ``blind'' initialization acts as a safely bounded mathematical wedge. It is sufficient to escape the absolute zero-threshold deadlock without derailing the overarching FDR constraint.

The critical future direction lies in fully data-driven, online $\kappa_t$ scheduling. A promising avenue is bridging DOMT with Empirical Bayes techniques or non-stationary Online Convex Optimization (OCO). By utilizing a sliding window $\mathcal{W}_t$ over recent virtual decisions to maintain a local estimate of the alternative signal density $\hat{\pi}_{1,t}$ and empirical signal strength $\hat{\mu}_t$, the algorithm could dynamically dial $\kappa_t$. Crucially, to preserve the overarching super-martingale safety (Theorem 8), any adaptive scheduling mechanism must guarantee that $\kappa_t$ remains strictly predictable with respect to the \textit{virtual filtration} $\mathcal{V}_{t-1}$, completely isolating it from the noise-induced actual filtration $\mathcal{F}_{t-1}$. Formulating the optimal $\kappa_t$ trajectory as a contextual bandit problem over this clean virtual parameter space represents a highly impactful open problem.

\paragraph{Two-way Safe Feedback Mechanisms (The ``Discounted Refund'' Martingale).}
As conceded in Appendix~\ref{F-2}, the current causal decoupling mechanism enforces a strict one-way epistemic wall, stripping exploratory discoveries of their ``wealth-generating'' rights. The Holy Grail of non-stationary OMT is designing a \textit{Two-way Safe Feedback Mechanism} that allows genuine exploratory discoveries to safely nourish the baseline's wealth pool without triggering spurious false-positive cascades.

We hypothesize the existence of a \textit{Discounted Refund Martingale}. Instead of granting a full standard wealth refund $\gamma_j$ for an exploratory discovery, the algorithm would grant a strictly penalized refund $\omega_t \gamma_j$, where the discount factor $\omega_t \in (0,1)$ is mathematically tied to the perturbation magnitude $\xi_t$ and the local evidence strength (e.g., via the $e$-value $e_t$ or a localized likelihood ratio). Proving that such a discounted injection strictly bounds the expected ``wealth pollution'' to exactly counterbalance the FDR overshoot margin requires fundamentally new developments in optimal stopping and optional skipping theories.

\paragraph{Context-Aware Exploration and Complex Topological Dependencies.}
Real-world data streams are rarely homogeneous or independent. They typically arrive with rich contextual covariates $x_t \in \mathcal{X}$ (e.g., patient metadata, node attributes in a network) and exhibit complex structural dependencies.

\textit{1. Contextual DOMT:} Rather than applying an indiscriminate, uniform spatial noise distribution $\xi_t \sim \text{Uniform}[0, \epsilon_t]$ across the entire sequence, we envision an attention-based or covariate-guided exploration policy $\kappa_\theta(x_t)$. By training a secondary machine learning model to estimate the localized prior probability of anomalies, DOMT could dynamically compress the spatial exploration budget, allocating large perturbations to high-risk hypothesis nodes and clamping $\kappa_\theta(x_t) \to 0$ for low-risk ones. This asymmetric directional perturbation would significantly drive down the Cold-Start Tax by minimizing the horizontal false-positive penalty in the $(V_T, M_T)$ Pareto space.

\textit{2. Complex Topologies (DAGs and PRDS):} Modern inference increasingly involves complex topologies, such as testing Directed Acyclic Graphs (DAGs) in gene regulatory networks or tracking spatial contagion in neuroimaging (e.g., fMRI) and financial markets. Traditional FDR control often breaks down or suffers from severe starvation under arbitrary dependencies. However, DOMT possesses a profound structural advantage: its perturbation is \textit{strictly unidirectional and non-negative} ($\lambda_t \ge \lambda_t^{base}$). 

In structural testing, topological rules often demand monotonic inclusion (e.g., a child node can only be rejected if its parent is rejected). Because DOMT acts as a monotonic inflation operator ($\mathcal{R}_t^{base} \subseteq \mathcal{R}_t^{DOMT}$), it unconditionally preserves the structural sub-graph constraints satisfied by the baseline. Furthermore, under Positive Regression Dependency on a Subset (PRDS), monotonic threshold expansions generally preserve the conservative nature of FDR control. DOMT acts as a ``stochastic tunneling'' effect, allowing the algorithm to probabilistically bypass an erroneously depleted parent node and reignite discoveries deep within the network branches. Investigating how DOMT's non-negative stochastic expansion interacts with the topological ordering of graphical models (e.g., HMMs or DAG-based OMT) offers a robust mathematical foothold, positioning DOMT as a universal risk-control protocol across modern structured AI ecosystems.

\paragraph{Connections to Online Conformal Risk Control (CRC).}
While this work strictly operates within the canonical multiple testing framework, the underlying philosophy of dynamic risk-pricing via Weighted Regret naturally resonates with recent advancements in distribution-free uncertainty quantification. Specifically, in the presence of arbitrary distribution shifts, Adaptive Conformal Inference (ACI) \cite{gibbs2021adaptive, bastani2022practical} maintains dynamic coverage by treating the calibration parameter as a sequential learning problem. Currently, online CRC primarily minimizes the empirical coverage error without explicitly pricing the asymmetric cost of prediction set sizes. Bridging the DOMT exploration mechanism with ACI to develop an ``Asymmetric Regret-Optimal Conformal Predictor''---which safely inflates prediction sets during abrupt distributional shifts to rapidly recalibrate, while strictly bounding the long-term coverage deficit---represents a highly synergistic and mathematically profound future direction.

%%%%%%%%%%%%%%%%%%%%%%%%%%%%%%%%%%%%%%%%%%%%%%%%%%%%%%%%%%%%

% \newpage
% \input{checklist.tex}

% \bibliographystyle{plain}
% \bibliography{refs}
\end{document}